\documentclass[accepted]{uai2026} 
                        
\usepackage{silence}
\usepackage[most]{tcolorbox}

\tcbset{
  highlight token/.style={
    boxrule=0.4pt,
    colback=gray!5,
    colframe=gray!50,
    fontupper=\ttfamily\color{purple},
    arc=1pt,
    boxsep=1pt,
    left=0.5pt,
    right=0.5pt,
    top=0.5pt,
    bottom=0.5pt,
    enhanced,
    box align=base,
    on line,
    height=1.2em,
    valign=center, 
  }
}

\usepackage[dvipsnames,table]{xcolor}
\definecolor{darkpink}{RGB}{199,21,140}

\definecolor{acadGreen}{RGB}{85, 140, 90}
\definecolor{softGray}{RGB}{210, 210, 210}

\definecolor{acadBlue}{RGB}{30, 70, 120}   

\definecolor{acadRed}{RGB}{140, 50, 50}    

\usepackage{adjustbox}
\usepackage{multirow}
\usepackage{booktabs, array}
\usepackage{makecell}
\usepackage{tabularx}
\usepackage{wrapfig}
\usepackage{makecell}
\usepackage{enumitem}

\usepackage{graphicx}
\usepackage[labelfont=bf]{caption}
\usepackage[format=hang]{subcaption}

\definecolor{citecolor}{RGB}{0,102,204}
\definecolor{linkcolor}{RGB}{190,105,30}
\definecolor{urlcolor}{RGB}{199,21,133}

\usepackage[acronym,nowarn,section,nogroupskip,nonumberlist]{glossaries}
\glsdisablehyper{}
\usepackage[nameinlink,capitalise]{cleveref}
\usepackage{aliascnt}
\creflabelformat{equation}{#2\textup{#1}#3}  
\crefname{section}{\S}{\S\S}
\usepackage{algorithm,algorithmicx,algpseudocode}

\usepackage{tikz}
\usepackage{forest}
\usetikzlibrary{decorations.pathreplacing, arrows.meta}

\newsavebox\CBox

\usepackage{amsthm}
\usepackage{dsfont}

\usepackage{amssymb}
\usepackage{pifont}

\newaliascnt{lemma}{theorem}
\newtheorem{lemma}[lemma]{Lemma}
\aliascntresetthe{lemma}
\crefname{lemma}{Lemma}{Lemmas}

\newaliascnt{definition}{theorem}
\newtheorem{definition}[definition]{Definition}
\aliascntresetthe{definition}
\crefname{definition}{Definition}{Definitions}

\newaliascnt{proposition}{theorem}

\aliascntresetthe{proposition}
\crefname{proposition}{Proposition}{Propositions}

\newaliascnt{corollary}{theorem}

\aliascntresetthe{corollary}
\crefname{corollary}{Corollary}{Corollaries}

\newaliascnt{remark}{theorem}

\aliascntresetthe{remark}
\crefname{remark}{Remark}{Remarks}

\newcounter{relctr} 
\everydisplay\expandafter{\the\everydisplay\setcounter{relctr}{0}} 

\newcommand\labelrel[2]{%
  \begingroup
    \refstepcounter{relctr}%
    \stackrel{\textnormal{(\alph{relctr})}}{\mathstrut{#1}}%
    \originallabel{#2}%
  \endgroup
}
\AtBeginDocument{\let\originallabel\label} 

\newenvironment{talign*}
 {\csname align*\endcsname}
 {\endalign}

 \usepackage{cancel}

\usepackage{mathtools,amsfonts,bm}

\newcommand{\eps}{\varepsilon}

\DeclareMathOperator*{\argmax}{arg\,max}

\usepackage{scalerel}
\newcommand{\smallth}[1]{^{\scaleto{(#1)}{5.5pt}}}
\newcommand{\nth}{\smallth{n}}

\newcommand{\Mod}[1]{\ (\mathrm{mod}\ #1)}
\renewcommand{\mod}[1]{\,\mathrm{mod}\,#1}
\newcommand{\aug}{\operatorname{aug}}

\newcommand{\ret}{R}
\newcommand{\deltainfty}{\delta_\infty}
\newcommand{\deltaone}{\delta_1}

\DeclareFontFamily{U}{cbgreek}{}
\DeclareFontShape{U}{cbgreek}{m}{n}{
        <-6>    grmn0500
        <6-7>   grmn0600
        <7-8>   grmn0700
        <8-9>   grmn0800
        <9-10>  grmn0900
        <10-12> grmn1000
        <12-17> grmn1200
        <17->   grmn1728
      }{}
\DeclareFontShape{U}{cbgreek}{bx}{n}{
        <-6>    grxn0500
        <6-7>   grxn0600
        <7-8>   grxn0700
        <8-9>   grxn0800
        <9-10>  grxn0900
        <10-12> grxn1000
        <12-17> grxn1200
        <17->   grxn1728
      }{}

\makeatletter
\newcommand{\normalorbold}{%
  \ifnum\pdf@strcmp{\math@version}{bold}=\z@ bx\else m\fi
}
\makeatother

\usepackage{thmtools}
\usepackage{thm-restate}

\usepackage{natbib} 
\usepackage[american]{babel}

\newcommand{\mytitle}{A Model-Free Universal AI}

\title{\mytitle}

\author[1]{\href{mailto:<yegonkim@kaist.ac.kr>?Subject=Your paper}{Yegon Kim}{}}
\author[1]{\href{mailto:<juholee@kaist.ac.kr>?Subject=Your paper}{Juho Lee}{}}
\affil[1]{
    Graduate School of AI\\
    KAIST\\
    Seoul, South Korea
}

\begin{document}
\maketitle

\begin{abstract}
In general reinforcement learning, all established optimal agents, including AIXI, are model-based, explicitly maintaining and using environment models.
This paper introduces Universal AI with Q-Induction (AIQI), the first model-free agent proven to be asymptotically $\varepsilon$-optimal in general RL.
AIQI performs universal induction over distributional action-value functions, instead of policies or environments like previous works.
Under a grain of truth condition, we prove that AIQI is strong asymptotically $\varepsilon$-optimal and asymptotically $\varepsilon$-Bayes-optimal.
We also apply our novel proof techniques to show asymptotic $\varepsilon$-optimality of Self-AIXI without any ad-hoc assumptions.
Our results significantly expand the diversity of known universal agents.
\end{abstract}

\section{Introduction}
\label{sec:intro}



The theory of general reinforcement learning \citep[GRL;][]{lattimore2014theory} provides a framework for studying agents in the most general class of environments, those that satisfy only minimal structural assumptions.
In this setting, {AIXI} \citep{hutter2005universal} is a foundational theoretical model: it combines universal induction \citep{solomonoff1964formal,solomonoff1964formal2} with sequential decision theory \citep{von1944theory,bellman1957dynamic} to define a Bayes-optimal agent.
Although AIXI is uncomputable, it serves as a useful theoretical model of powerful goal-driven AIs \citep{orseau2012memory,everitt2016self}, and has also been used to define a universal measure of intelligence, called the Legg-Hutter intelligence \citep{legg2007universal}.

Interestingly, AIXI and all other established optimal agents in GRL \citep{lattimore2014bayesian,leike2016thompson,cohen2019strongly,catt2023self}, otherwise known as universal agents or universal AI, are \emph{model-based}: they explicitly infer and make use of a model of the environment (a world model).
In contrast, the dominant paradigm in practical RL is \emph{model-free}, learning value functions or policies directly from experience \citep{watkins1989learning,williams1992simple,rummery1994line,konda1999actor}.
Showing that a model-free algorithm is optimal in GRL has therefore remained elusive and has been repeatedly highlighted as an open problem \citep{everitt2018universal,catt2022foundations,hutter2024introduction}.

In this paper, we present Universal \textbf{AI} with \textbf{Q}-\textbf{I}nduction (AIQI), a model-free agent that performs universal induction over \emph{return-predictors}, objects similar to distributional Q-values \citep{bellemare2017distributional}.
AIQI is essentially an $\eps$-greedy, on-policy distributional Monte Carlo control algorithm.
Under a grain of truth condition \citep{kalai1993rational},
we prove that AIQI achieves strong asymptotic $\eps$-optimality and asymptotic $\eps$-Bayes-optimality in GRL.
Our proof techniques can also be used to show asymptotic $\eps$-optimality of Self-AIXI without incurring ad-hoc assumptions such as in the proof by \citet{catt2023self}.
Our results significantly expand the class of known universal agents, 
and provide a blueprint for the analysis of other policy iteration algorithms in general environments.


\section{Background}

\subsection{General Reinforcement Learning}
\label{sec:grl}

We establish the basic notation for general reinforcement learning.
We also provide a summary of all the important symbols in \cref{sec:notation}.
Let $\mathcal{A}$, $\mathcal{O}$, and $\mathcal{R} \subseteq [0,1]$ denote the finite sets of actions, observations, and rewards, respectively. The percept space is $\mathcal{E} := \mathcal{O} \times \mathcal{R}$, and a history $h_{1:t} \in\mathcal{H} := (\mathcal{A} \times \mathcal{E})^\ast $ is a sequence of actions and percepts.
We write $h_{1:t} = a_1e_1 \dots a_{t}e_{t}$ for the history up to time $t$.
$\mathcal H_t := (\mathcal A \times \mathcal E)^{t}$ denotes the set of all histories up to time $t$.
A policy $\pi : \mathcal{H} \to \Delta\mathcal{A}$ maps histories to action distributions, while an environment $\nu : \mathcal{H} \times \mathcal{A} \to \Delta\mathcal{E}$ maps history-action pairs to percept distributions.
When policy $\pi$ interacts with environment $\nu$, they induce a distribution $\nu^\pi$ over $\mathcal H_t$:
\begin{align*}
\nu^\pi(h_{1:t})
&:=
{\pi(a_1)} \,
{ \nu(e_1  |  a_1)} \,
\cdots \,
{\pi(a_{t}  |  h_{<t})} \,
{ \nu(e_{t}  |  h_{<t} a_{t})}\\
&\phantom{:}= 
\textstyle
{\prod_{1\leq i \leq t} \pi(a_i  \mid  h_{<i})} \,
{\prod_{1\leq i \leq t} \nu(e_i  \mid  h_{<i} a_i)}
\\
&\phantom{:}=
{\pi(a_{1:t} \parallel e_{<t})} \,
{ \nu(e_{1:t} \parallel a_{1:t})}
\end{align*}
This is a valid distribution, that is, the probabilities sum to 1 across all histories $h_{1:t}$. A direct consequence is that for a deterministic policy $\pi'$,
\begin{equation}
\sum_{e_{1:t}\in\mathcal E^{t}} \nu \left(e_{1:t} \parallel \pi'\left(e_{<t}\right)\right) = 1,
\label{eq:nu-sum-one}
\end{equation}
where $\pi'(e_{<t})$ is the sequence of actions $a_{1:t}$ taken by $\pi'$ under fixed percepts $e_{<t}$.
We also define the sample space $\Omega:=(\mathcal{A}\times\mathcal{E})^\infty$, which consists of all \emph{infinite} action-percept sequences $h$. Every history $h_{1:t}$ is a prefix of some outcome $h$.

Given a discount factor $\gamma \in (0,1)$, the (full) \emph{return} at time $t$ is defined as
\[
\ret_t := (1-\gamma) \sum_{k=0}^\infty \gamma^k r_{t+k},
\]
where ${1-\gamma}=\left(\sum_{k=0}^\infty \gamma^k\right)^{-1}$ ensures that returns lie in $[0,1]$, like the rewards.
This definition assumes a geometric discount, and we discuss the generalization to general discount sequences in \cref{sec:arbitrary_discount}.

The \emph{value function} for policy $\pi$ in environment $\nu$ is
\[
V^\pi_\nu(h_{<t}) := \mathbb{E}^\pi_\nu \left[ \ret_t  \mid  h_{<t} \right],
\]
where $\mathbb{E}^\pi_\nu$ is the expectation with respect to $\nu^\pi$.
The optimal value is defined as $V^\ast_\nu = \sup_\pi V^\pi_\nu$, and an optimal policy $\pi^\ast _\nu$ is any policy that achieves this supremum.
The action-value, or Q-value function, is $Q^\pi_\nu(h_{<t}, a_t) := \mathbb{E}^\pi_\nu \left[ \ret_t  \mid  h_{<t} a_t \right]$.

Value functions satisfy the Bellman equations, which allow us to express one with the other:
\begin{align*}
V_\nu^\pi(h_{<t})
& \!=\! \mathbb{E}_{a_t\sim\pi(\cdot\mid h_{<t})}
   \left[Q_\nu^\pi(h_{<t},a_t)\right] \label{eq:bellman-V-exp}\\
Q_\nu^\pi(h_{<t},a_t)
& \!=\! \mathbb{E}_{e'_t\sim\nu(\cdot\mid h_{<t},a_t)}
\left[(1-\gamma) r_t + \gamma V_\nu^\pi(h'_{<t+1})\right]
\end{align*}

Obtaining the full return requires simulating or experiencing an infinite length trajectory. Given some horizon length $H \in \mathbb Z_+$, the $H$-step return is defined as
\[
\ret_{t,H} := (1-\gamma) \sum_{k=0}^{H-1} \gamma^k r_{t+k},
\]
which can be computed with $H$ steps of agent-environment interactions.
The $H$-step value function is defined as
$
V^{\pi}_{\nu,H}(h_{<t}) := \mathbb{E}^\pi_\nu \left[ \ret_{t,H} \mid  h_{<t} \right]
$.
Finally, we define the effective horizon as follows.
\begin{definition}[Effective horizon]
\label{def:effective_horizon}
For $\eta>0$, the $\eta$-effective horizon is 
\[
H(\eta):= \min \left\{ H \in \mathbb Z_+ \;\middle |\; (1-\gamma) \sum_{k=H}^\infty \gamma^k \leq \eta \right\}.
\]
\end{definition}
One can show that
the $H(\eta)$-step return differs from the full return by at most $\eta$, and likewise for the value function:
$
\ret_t - \ret_{t,H(\eta)}
= (1-\gamma) \sum_{k=H(\eta)}^\infty \gamma^k r_{t+k}
\leq \eta
$.

\subsection{AIXI}

To define AIXI, we first need to introduce the mixture environment $\xi$.
\begin{definition}[Mixture environment]
\label{def:mixture_environment}
For a countable environment class $\mathcal{M}$ and a prior distribution $w(\nu)>0$ over the environments $\nu$ in  $\mathcal M$, the mixture environment $\xi$ is defined as the unique environment that satisfies
\[
\xi(e_{1:t} \parallel a_{1:t}) = \sum_{\nu\in\mathcal M} w(\nu) \, \nu(e_{1:t} \parallel a_{1:t}).
\]
\end{definition}
Equivalently, $\xi^\pi(\cdot) = \sum_\nu w(\nu) \nu^\pi(\cdot)$ for any policy $\pi$.

AIXI is then defined as the optimal policy in the mixture environment $\xi$. By its definition, AIXI is the Bayes-optimal agent with respect to the prior $w(\nu)$.
\begin{definition}[AIXI]
\label{def:aixi}
AIXI is the optimal policy $\pi^\ast _\xi$ in the mixture environment $\xi$:
\[
\pi^\ast _\xi(h_{<t}) := \argmax_{a} Q^\ast _\xi(h_{<t}, a).
\]
\end{definition}

\section{Universal AI with Q-Induction}
\label{sec:aiqi}

We present Universal AI with Q-Induction (AIQI), a model-free approach to general reinforcement learning.

For a discretization level $M\in\mathbb Z_+$,
the $M$-discretized $H$-step return at time $t$ is defined as
\[
z_t := \frac{\left\lfloor M \ret_{t,H} \right\rfloor}{M},
\quad
z_t \in \mathcal Z := \left\{0,\frac{1}{M},\dotsc,\frac{M-1}{M}\right\}.
\]
Note that $z_t$ is an approximation of $\ret_{t,H}$ whose error gets smaller as $M\to\infty$. More precisely, it always holds that $\ret_{t,H}-z_t<1/M$.
We choose $z_t$ as the target of prediction instead of $\ret_{t,H}$.
Although this choice isn't necessary for our case of geometric discounting, there exist general discount sequences (\cref{sec:arbitrary_discount}) where $\ret_{t,H}$ can take an infinite number of possible values, which would complicate our analysis, and also the implementation.

We would like to be able to predict the discretized return $z_t$ conditioned on any history $h_{<t}$ and action $a_t$. One can naturally consider augmenting the history with returns,
\[
\dots
{\color{acadRed} a_{t-3}}
{\color{black} z_{t-3}}
{\color{acadBlue} e_{t-3}}
{\color{acadRed} a_{t-2}}
{\color{black} z_{t-2}}
{\color{acadBlue} e_{t-2}}
{\color{acadRed} a_{t-1}}
{\color{black} z_{t-1}}
{\color{acadBlue} e_{t-1}}
{\color{acadRed} a_t}
,
\]
which would be passed in to a Bayesian sequence predictor to predict $z_t$.
However, a complication arises:
computing the ground truth return $z_{t-1}$ requires the yet to be observed reward $r_t$ in $\color{acadBlue} e_t$.
Similarly, the returns $z_{t-H+1}, \dotsc, z_{t-1}$ are all unavailable at time $t$.
Therefore, instead of augmenting the sequence with returns at every step, we augment with a period $N\geq H$, at only the positions $i\equiv t \Mod N$:
\[
\dots
{\color{acadRed} a_{t-2N}}
{\color{black} z_{t-2N}}
{\color{acadBlue} e_{t-2N}}
\dots
{\color{acadRed} a_{t-N}}
{\color{black} z_{t-N}}
{\color{acadBlue} e_{t-N}}
\dots
{\color{acadRed} a_{t-1}}
{\color{acadBlue} e_{t-1}}
{\color{acadRed} a_t}
.
\]
Then, the augmented returns require rewards at times only up to $t-N+H-1 \leq t-1$, which are all available.
Why we don't simply consider $N=H$ will be revealed in the proof of \cref{thm:error-return-predictor}.
The periodically augmented sequence can be passed in to a Bayesian sequence predictor, to predict the next return $z_t$.
We formalize this below.

Given an augmentation \emph{period} $N\geq H$,
the augmented sample space associated with \emph{phase} $n \in \{0,\dotsc,N-1\}$, is
\[
\begin{aligned}
\tilde\Omega^{(n)}
\hspace{-2pt}
:=
\prod_{i=1}^\infty \mathcal B^{(n)}_i
\hspace{-2pt}
,
\hspace{7.2pt}
\mathcal B^{(n)}_i
\hspace{-2pt}
:=
\hspace{-2pt}
\begin{cases}
\mathcal A \times \mathcal Z \times \mathcal E, &\hspace{-7pt} i \equiv n \Mod N,\\
\mathcal A \times \mathcal E, &\hspace{-7pt} \text{otherwise}.
\end{cases}
\end{aligned}
\]
In other words, an augmented outcome $\tilde h\nth \in \tilde \Omega\nth$ contains some $\tilde z_{i_k}\in\mathcal Z$ at the periodic positions
$i_k=n+kN$, in between $a_{i_k}$ and $e_{i_k}$.
Note that these augmented returns do not necessarily have to match the ground truth returns $z_{i_k}$ computed with $r_{i_k:i_k{+}H{-}1}$, in order for $\tilde h\nth$ to qualify as an element of $\tilde \Omega\nth$.
We call an augmented outcome $\tilde h\nth$ ``valid'' if its augmented returns match the ground truth returns.
The prefix $\tilde h_{1:t}\nth$ is called an augmented history. The set of augmented histories is denoted $\tilde{\mathcal{H}}\nth $, and the set of augmented histories up to time $t$ is denoted $\tilde{\mathcal{H}}\nth _t$.

A \emph{return-predictor} with phase $n$ is any mapping $\phi: \bigcup_{i\equiv n} (\tilde{\mathcal{H}}\nth_{i-1}\times \mathcal A) \to \Delta\mathcal{Z}$. The domain consists exactly of all phase $n$ augmented histories that end exactly at positions where the augmented return $\tilde z_{i}$ should come next. We define the mixture return-predictor, like the mixture environment in \cref{def:mixture_environment}.
\begin{definition}[Mixture return-predictor]
\label{def:mixture-return-predictor}
Given a class $\mathcal P_n$ of phase $n$ return-predictors $\phi$, and a prior distribution $\omega_n (\phi)$, a mixture return-predictor $\psi_n$ is the unique phase $n$ return-predictor that satisfies
\begin{equation*}
\prod_{k=0}^K
\psi_n(\tilde z_{i_k} \mid \tilde h_{<i_k}\nth a_{i_k})
\hspace{-1pt}
=
\hspace{-2pt}
\sum_{\phi\in\mathcal P_n}
\omega_n(\phi)
\prod_{k=0}^K
\phi(\tilde z_{i_k} \mid \tilde h_{<i_k}\nth a_{i_k})
,
\end{equation*}
for all $K\geq 0$,
where $i_k = n + kN$.
\end{definition}
Equivalently, $\psi_n(\tilde z_{i_k}  \mid \tilde h_{<i_k}\nth a_{i_k})$ is given by the posterior predictive distribution
\[
\sum_{\phi \in \mathcal{P}_n} \omega_n(\phi \mid \tilde h_{<i_{k-1}}\nth a_{i_{k-1}}\tilde z_{i_{k-1}})
\,
\phi(\tilde z_{i_k} \mid \tilde h_{<i_k}\nth a_{i_k})
,
\]
with the posterior weights $\omega_n(\phi \mid \tilde h_{<i_{k}}\nth a_{i_{k}}\tilde z_{i_{k}})$ updated to
\[
\omega_n(\phi \mid \tilde h_{<i_{k-1}}\nth a_{i_{k-1}}\tilde z_{i_{k-1}}) \frac{\phi(\tilde z_{i_{k}} \mid \tilde h_{<i_{k}}\nth a_{i_{k}})}{\psi_n(\tilde z_{i_{k}} \mid \tilde h_{<i_{k}}\nth a_{i_{k}})},
\]
starting with $\omega_n(\phi \mid \epsilon) = \omega_n(\phi)$ where $\epsilon$ denotes the empty string.
In other words, by conditioning the mixture $\psi_n$ on a history, we are implicitly performing Bayesian inference over the return-predictors $\phi$.

Let $\aug_n : \Omega \hookrightarrow \tilde\Omega\nth$ be the injective mapping that augments an outcome $h$ to its corresponding valid augmented outcome $\aug_n (h)$, by inserting the true returns $z_{i_k}$ computed with the rewards $r_{i_k:i_k+H-1}$ in $h$.
Under a certain condition, a history $h_{<t}$ uniquely determines $\aug_n(h)_{<t}$ regardless of $h_{t:\infty}$, and in that case we abuse the notation $\aug_n$ to also denote the mapping $\aug_n(h_{<t})=\aug_n(h)_{<t}$.
Fortunately, we can state this condition explicitly as:
\begin{equation}
\label{eq:condition_aug}
(t-1-n) \mod N \geq H-1.
\end{equation}
This is because the latest index $i\equiv n \Mod N$ smaller than $t$ is exactly $t-1 - (t-1-n) \mod N$, and the return at this index requires rewards at times up to $t-1 - (t-1-n) \mod N + H -1 $, which is smaller than $t$ if and only if \cref{eq:condition_aug} holds.
In other words, all the rewards required for augmenting with the true returns $z_{i_k}$ are available in $h_{<t}$.
Note that \cref{eq:condition_aug} is trivially satisfied if $n=t \mod N$. We can hence define a \emph{unified predictor}, that
abstracts away the periodic augmentation.
\begin{definition}[Unified return-predictor]
Given a collection of $N$ mixture return-predictors $\{\psi_n\}_{0\leq n<N}$, the unified predictor $\psi: \mathcal H \times \mathcal A \to \Delta\mathcal{Z}$ is given by
\[
\psi(\tilde z_t  \mid  h_{<t} a_t)
:=
\psi_n (\tilde z_t  \mid  \aug_n (h_{<t}) a_t)
\]
where $n=t \mod N$.
\end{definition}

Next, we define AIQI as follows. We also provide a pseudocode in \cref{alg:aiqi}.
\begin{definition}[AIQI]
\label{def:aiqi}
Fix a return horizon $H$, a discretization level $M$, an augmentation period $N\geq H$, an exploration rate $\tau>0$, and a unified predictor $\psi$.
Let the Q-value estimate given history $h_{<t}$ be
\[
\hat Q(h_{<t},a_t) = \sum_{\tilde z_t\in\mathcal{Z}} \tilde z_t \cdot \psi (\tilde z_t \mid  h_{<t} a_t).
\]
AIQI, denoted $\hat \pi^{H,M,N,\tau}_\psi$ or $\hat \pi$, chooses the action that maximizes this estimate, with random exploration rate $\tau$:
\[
\hat\pi(a  \mid  h_{<t})
:= (1-\tau)\mathds 1 \left[a=a^\ast \right] + \tau/|\mathcal A|,
\]
where $a^\ast  = \argmax_{a_t} \hat Q(h_{<t},a_t)$.
We allow for stochastic tie-breaking when multiple actions maximize $\hat Q$.
\end{definition}
Note that AIQI doesn't simulate the environment or plan for the future.
Instead, AIQI directly predicts its own action-value, and at every step chooses the action that maximizes it.
As evidence accumulates, the value predictions become more and more accurate. In the limit, AIQI chooses the action that maximizes its true action-value, and will keep doing so in all future steps, thereby turning into a globally optimal policy.
Another way to look at AIQI is that it is performing policy evaluation at every step, treating the whole trajectory as having been generated by a single policy, and performing policy improvement in choosing the next action.

\begin{algorithm}[t]
\caption{Universal AI with Q-Induction (AIQI)}
\begin{algorithmic}[1]
\label{alg:aiqi}
\Require Return horizon $H$, discretization level $M$, augmentation period $N \ge H$, exploration probability $\tau$, set of unified predictor $\psi$
\State History $h_{<1} \gets \epsilon$, time $t \gets 1$
\Loop
    \For{each action $a \in \mathcal{A}$}
        \State $\hat{Q}(h_{<t}, a) \gets \sum_{\tilde{z} \in \mathcal{Z}} \tilde{z} \cdot \psi(\tilde{z} \mid h_{<t}, a)$
    \EndFor
    \State $a^\ast \gets \operatorname{argmax}_{a \in \mathcal{A}} \hat{Q}(h_{<t}, a)$
    \State Sample $p \sim \mathcal{U}[0,1]$
    \If{$p < 1 - \tau$}
        \State $a_t \gets a^\ast$ \Comment{Exploit}
    \Else
        \State $a_t \sim \mathcal{U}(\mathcal{A})$ \Comment{Explore}
    \EndIf
    \State Execute $a_t$, observe $e_t$
    \State $h_{<t+1} \gets h_{<t} a_t e_t$
    \State $t \gets t + 1$
\EndLoop
\end{algorithmic}
\end{algorithm}

As is common in the universal AI literature, AIQI faces the grain of truth (self-referential) problem \citep{kalai1993rational,leike2016formal,meulemans2025embedded,wyeth2025limit}.
Roughly speaking, every mixture return-predictor $\psi_n$ in $\psi$ has to contain in its class the true conditional return distribution under the AIQI policy $\hat \pi_\psi$, which itself depends on $\psi$.
The formal statement is given below.
\begin{definition}[Grain of truth]
\label{def:grain-of-truth}
Fix the AIQI parameters $H,M,N,\tau$.
A unified predictor $\psi$ has a grain of truth w.r.t. environment $\nu$ if: every $\psi_n$ is a mixture of a class $\mathcal P_n$ that contains the return-predictor
$\phi^\ast$ given by
\[
\phi^\ast (\tilde z_{i} \mid \tilde h_{<i}\nth a_{i})
= \nu^{\hat \pi} (z_{i} = \tilde z_i  \mid  h_{<i} a_{i}),
\enspace
i\equiv n \Mod N
,
\]
where $\hat \pi = \hat \pi^{H,M,N,\tau}_\psi$ and the right hand side is the conditional return distribution under $\nu^{\hat \pi}$.
\end{definition}
A nontrivial solution can be obtained by fixing a reflective oracle $O$ \citep{fallenstein2015reflective,leike2016formal,wyeth2025limit}, setting $\mathcal P_n$ to be the class of all $O$-estimable return-predictors, and assuming that $\nu$ is $O$-estimable.
Under this setup, the AIQI policy $\hat \pi$ is $O$-estimable, and $\phi^\ast = \nu^{\hat \pi}$ is also $O$-estimable, hence inside $\mathcal P_n$. Allowing for stochastic tie-breaking in \cref{def:aiqi} is essential for the use of reflective oracles.

We now introduce a notion of optimality that AIQI will be shown to satisfy. It is a more lenient form of strong asymptotic optimality, which is one of the strongest forms of optimality considered in the universal AI literature.
\begin{definition}[Strong Asymptotic $\eps$-Optimality]
Given $\eps>0$ and a class of environments $\mathcal M$, a policy $\pi$ is strong asymptotically $\eps$-optimal if, for all environments $\nu\in\mathcal M$, 
\[
\limsup_{t\to\infty} V_\nu^\ast (h_{<t}) - V_\nu^\pi (h_{<t}) \leq \eps,
\quad
\text{$\nu^\pi$-a.s.}
\]
\end{definition}
It has been shown by \citet[][Theorem~8]{lattimore2011asymptotically} that any deterministic policy, including AIXI, is not strong asymptotically $\eps$-optimal for all $\eps<1/4$.
\section{Results}
\label{sec:theoretical}

The aim of this section is to prove that:
\begin{itemize}
\item For arbitrarily small $\eps>0$, one can choose parameters $H,M,N,\tau,\psi$ with which AIQI satisfies \emph{strong asymptotic $\eps$-optimality}.
(\cref{thm:aiqi-eps-optimal})
\item With the same choice of parameters, AIQI is asymptotically $\eps$-optimal in the mixture environment $\xi$.
(\cref{thm:aiqi-approaches-aixi})
\item In contrast to AIXI, AIQI is not necessarily \emph{self-optimizing}---a property that roughly translates to ``off-policy asymptotic optimality''.
(\cref{thm:aiqi-not-self-optimizing})
\end{itemize}
Notably, we do not invoke any novel, ad-hoc assumptions in proving our results.
The list of notations can be found in \cref{sec:notation}, and omitted proofs are in \cref{sec:proofs}.
We have also performed experiments with a computable approximation of AIQI, which we report in \cref{sec:experimental}.
We also show in \cref{sec:self-aixi} that our novel proof techniques can be applied to the analysis of Self-AIXI \citep{catt2023self}, to get rid of strong assumptions such as off-policy sensibility.

\subsection{Convergence of Return-Predictor}

First, we show that the unified return-predictor $\psi$ converges to the true return-predictor $\phi^\ast = \nu^\pi$, when conditioned on increasingly longer histories generated by $\nu^\pi$.
A useful analytic device is the total variation (TV) distance.
Given two probability measures $P$ and $Q$ defined on a measurable space $(\Omega,\mathcal F)$, the TV distance is defined as
\[
D(P,Q)
:=
\sup_{E\in\mathcal F}\, |P(E)-Q(E)|
.
\]
An elementary property of TV distance is that, for a countable set $\{E_i\}$ of pairwise disjoint events,
\begin{equation}
\label{eq:partition-tv}
\textstyle
\sum_i |P(E_i)-Q(E_i)| \leq 2 \cdot D(P,Q).
\end{equation}

Given an environment $\nu$ and a policy $\pi$, define the phase $n$ augmented distribution $\tilde \nu^{\pi}_n := \nu^{\pi} \circ \aug_n^{-1} $ as the pushforward of $\nu^\pi$ by $\aug_n$.
We can express $\tilde \nu^{\pi}_n$ as a product of conditional probabilities whose dependencies respect the written order in
$\tilde h\nth=
{\color{acadRed} a_1}
{\color{acadBlue} e_1}
\dots
{\color{acadRed} a_n}
\tilde z_n
{\color{acadBlue} e_n}
\dots$, as follows:
\begin{align}
&\tilde \nu^{\pi}_n  (\tilde h \nth)
=
{\color{acadRed} \prod_{i} \tilde \nu^{\pi}_n(a_i  \mid  \tilde h\nth_{<i})}
{\color{acadBlue} \prod_{i\not\equiv{n}} \tilde \nu^{\pi}_n(e_i  \mid  \tilde h\nth_{<i} a_i )}
\nonumber
\\
&\hspace{45pt}
\cdot
{\color{acadBlue} \prod_{i\equiv{n}} \tilde \nu^{\pi}_n(e_i  \mid  \tilde h\nth_{<i} a_i \tilde z_i)}
\prod_{i\equiv n} \tilde \nu^{\pi}_n (\tilde z_i \mid \tilde{h}\nth_{<i} a_i)
.
\label{eq:conditional-tilde-nu}
\end{align}
Note that the above characterization of $\tilde \nu^\pi_n$ is informal, as $\tilde \nu^{\pi}_n  (\tilde h \nth)$ is just 0 for most $\tilde h \nth$. A proper characterization would involve $\tilde \nu^{\pi}_n  (\tilde h \nth_{<t})$ and all products indexed up to $i=t-1$. We use the informal notation for brevity.

The last factor in \cref{eq:conditional-tilde-nu} is simply
$\nu^{\pi}(z_i=\tilde z_i \mid {h}_{<i} a_i)$,
which we abbreviate as
$
\nu^{\pi}(\tilde z_i \mid {h}_{<i} a_i)
$.
Let us also abbreviate the first three factors in \cref{eq:conditional-tilde-nu} as $\tilde \nu^{\pi}_n (h \parallel \tilde z_{i\equiv n})$, so that
\begin{equation}
\label{eq:tilde-nu-def}
\tilde \nu^{\pi}_n  (\tilde h \nth)
=
\tilde \nu^{\pi}_n  (h  \parallel \tilde z_{i\equiv n})
\prod_{i\equiv n} \nu^{\pi}(\tilde z_i \mid {h}_{<i} a_i).
\end{equation}

Let $\pi$ be the AIQI policy $\hat \pi$ with some parameters $H$,$M$,$N$,$\tau$, $\psi$.
Suppose its mixture return-predictor $\psi_n$ in $\psi$ is a mixture of return-predictors including $\phi^\ast$ given by
\[
\phi^\ast (\tilde z_{i} \mid \tilde h_{<i}\nth a_{i})
= \nu^{\pi} (\tilde z_i  \mid  h_{<i} a_{i}),
\quad
i\equiv n \Mod N
,
\]
which is exactly the grain of truth condition. Then, define
\[
q_n (\tilde h\nth)
:= \tilde \nu^\pi_n (h  \parallel  \tilde z_{i\equiv n})  \prod_{i\equiv n} \psi_n (\tilde z_i  \mid  \tilde h_{<i}\nth a_i),
\]
which is \cref{eq:tilde-nu-def} with $\nu^\pi=\phi^\ast$ replaced by $\psi_n$.
Intuitively, we want to show that our mixture $\psi_n$ converges to $\phi^\ast$, and we do so by first showing that $q_n$ converges to $\tilde \nu^\pi_n$.

By the definition of a mixture in \cref{def:mixture-return-predictor},
\begin{align*}
q_n (\tilde h\nth)
&\geq
\tilde \nu^\pi_n (h  \parallel  \tilde z_{i\equiv n})
 \left( \omega_n (\phi^\ast)  \prod_{i\equiv n} \phi^\ast (\tilde z_i  \mid  \tilde h\nth_{<i} a_i) \right)\\
&= \omega_n (\phi^\ast) \tilde \nu^\pi_n (\tilde h\nth)
.
\end{align*}
By the inequality, $\tilde \nu^\pi_n$ is absolutely continuous with respect to $q_n$, so that by \citet{blackwell1962merging},
\[
\lim_{t\to\infty} D(q_n, \tilde \nu^\pi _n  \mid  \tilde h\nth_{<t}) = 0,
\quad
\tilde \nu^\pi_n \text{-a.s.}
\]

We can then prove the following lemma.
Intuitively, $\tilde \nu^\pi_n$ is the pushforward of $\nu^\pi$ by $\aug_n$, so we can replace $\tilde h\nth_{<t}$ with ${{\aug_n}(h)}_{<t}$, and $\tilde \nu^\pi_n$- with $\nu^\pi$-almost sure convergence.
\begin{restatable}[Convergence in TV distance]{lemma}{convergencetv}
\label{thm:convergence_tv}
Let $\pi$ be the AIQI policy $\hat \pi^{H,M,N,\tau}_\psi$ where $\psi$ has a grain of truth w.r.t. an environment $\nu$.
There exists a $\nu^\pi$-probability-one set $S \subseteq \Omega$ such that for all $h\in S$ and $n\in [0,N-1]$,
\[
\lim_{t\to\infty} D(q_n, \tilde \nu^\pi _n  \mid  {{\aug_n}(h)}_{<t}) = 0.
\]
\end{restatable}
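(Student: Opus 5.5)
We start from the Blackwell--Dubins conclusion already derived above: for each fixed phase $n$ there is a set $\tilde S_n \subseteq \tilde\Omega\nth$ with $\tilde\nu^\pi_n(\tilde S_n)=1$ such that
\[
\lim_{t\to\infty} D(q_n,\tilde\nu^\pi_n \mid \tilde h\nth_{<t}) = 0
\quad\text{for every } \tilde h\nth\in\tilde S_n .
\]
The plan is to pull each $\tilde S_n$ back to $\Omega$ through $\aug_n$ and then intersect over the finitely many phases.

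\textbf{Pullback for a single phase.} Define $S_n := \aug_n^{-1}(\tilde S_n)\subseteq\Omega$. Since $\tilde\nu^\pi_n = \nu^\pi\circ\aug_n^{-1}$ is by definition the pushforward of $\nu^\pi$ under $\aug_n$, we get
\[
\nu^\pi(S_n) = \tilde\nu^\pi_n(\tilde S_n) = 1 .
\]
For any $h\in S_n$, the augmented outcome $\tilde h\nth := \aug_n(h)$ lies in $\tilde S_n$. Its prefixes $\tilde h\nth_{<t}$ are exactly $\aug_n(h)_{<t}$. Hence $D(q_n,\tilde\nu^\pi_n\mid \aug_n(h)_{<t})\to 0$, which is the claimed convergence for phase $n$.

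\textbf{Measurability (the main technical point).} For the pushforward identity to make sense, two things must hold.
\begin{itemize}
\item $\aug_n$ must be measurable with respect to the product (cylinder) $\sigma$-algebras. This holds because each coordinate of $\aug_n(h)$ depends on only finitely many coordinates of $h$: the inserted return $z_{i_k}$ is a function of $r_{i_k:i_k+H-1}$. So preimages of cylinder sets are finite unions of cylinders.
\item $\tilde S_n$ must be measurable, or at least contain a measurable set of full measure. The Blackwell--Dubins convergence set can be taken measurable, since $D(\cdot,\cdot\mid\tilde h\nth_{<t})$ is a function of a finite prefix and the limit set of a sequence of measurable functions is measurable. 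Alternatively, it suffices to replace $\tilde S_n$ by a measurable full-measure subset.
\end{itemize}

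\textbf{Conditioning on invalid prefixes.} A second subtlety is that the conditional distributions in $D$ are only well defined on prefixes of positive probability. This causes no trouble here. The set of $h$ for which some prefix $\aug_n(h)_{<t}$ has $\tilde\nu^\pi_n$-probability zero is a countable union of null sets, so it is itself $\nu^\pi$-null and can be removed from $S_n$.

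\textbf{Intersection over phases.} Finally set $S := \bigcap_{n=0}^{N-1} S_n$. This is a finite intersection of $\nu^\pi$-probability-one sets, so $\nu^\pi(S)=1$. Every $h\in S$ satisfies the limit for all $n\in[0,N-1]$ simultaneously, which is the statement of \cref{thm:convergence_tv}.

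I expect the only real care to be needed in the measurability and pushforward bookkeeping. The analytic content is already supplied by \citet{blackwell1962merging}.
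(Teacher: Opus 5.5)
Your proposal is correct and follows the same route as the paper: you pull the Blackwell--Dubins full-measure set $\tilde S_n$ back through $\aug_n$ using the pushforward identity $\nu^\pi(\aug_n^{-1}(A)) = \tilde\nu^\pi_n(A)$, then intersect over the $N$ phases. The paper additionally discards the non-valid (``faulty'') augmented outcomes and appeals to injectivity of $\aug_n$; that set is disjoint from the image of $\aug_n$, so its preimage is empty, and your direct preimage, with the extra measurability bookkeeping, is if anything slightly cleaner.
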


We are now ready to show that each mixture return-predictor $\psi_n$, or more generally the unified predictor $\psi$, converges to the true return-predictor $\phi^\ast = \nu^\pi$.
By ``convergence'', one would usually mean something along the line of
\begin{equation}
\lim_{t\to\infty}
\sum_{\tilde z_t \in \mathcal Z}
\left|
\psi (\tilde z_{t}  |  h_{<t}\, a_{t})
-
\nu^\pi (\tilde z_{t}  |  h_{<t}\, a_{t})
\right|
= 0.
\label{eq:return-predictor-error-usual}
\end{equation}
Instead, we prove something strictly stronger, involving a sum over hypothetical future trajectories $h'_{t:m-1}$ that extend $h_{<t}$.
This stronger form of convergence is essential for proving \cref{thm:aiqi-eps-optimal} without invoking any ad-hoc assumptions, e.g., off-policy sensibility \citep[Theorem~16]{catt2023self}.

Let us denote the return-predictor error by
\[
\delta_\psi (h_{<t} a_t)
:=
\sum_{\tilde z_t\in \mathcal Z}
\left|
\psi (\tilde z_t  \mid  h_{<t} \, a_{t})
-
\nu^\pi (\tilde z_t  \mid h_{<t} \, a_{t})
\right|,
\]
and let $\mathcal T$ be the set of hypothetical trajectories $h'_{<m}$,
\begin{align}
\label{eq:T}
\mathcal T := \{h'_{<m} & \mid  h'_{<t}=h_{<t},\, e'_{t:m-1} \in \mathcal E^{m{-}t},
\nonumber
\\
&\hspace{24pt}
a'_{t:m-1}= \pi' (h_{<t}, e'_{t:m-2}) \},
\end{align}
formed by $h_{<t}$ appended with all possible percept sequences of length $m-t$ and reactions from some \emph{deterministic policy} $\pi'$.
We assume for now that $\pi'$ is an arbitrary deterministic policy, and specify it later in the proof of \cref{thm:aiqi-eps-optimal}.
We can show the following result.
\begin{restatable}[Convergence of return-predictor]{lemma}{errorreturnpredictor}
\label{thm:error-return-predictor}
Let $\pi$ be the AIQI policy $\hat \pi^{H,M,N,\tau}_\psi$ where $\psi$ has a grain of truth w.r.t. an environment $\nu$.
For every $\beta>0$, there exists a $\nu^\pi$-probability-one set $S \subseteq \Omega$ where, for every outcome $h\in S$, there exists $t_0$ such that: for $t\geq t_0$ and $m\in[t,t{+}N{-}H]$,
\begin{talign*}
\sum_{\substack{h'_{<m} \in \mathcal T, a'_m\in\mathcal A}}
\nu (e'_{t:m-1}  \mid  h_{<t} \parallel a'_{t:m-1})
\cdot
\delta_\psi (h'_{<m} a'_m)
&
\\
< 2\beta \left( \tau/|\mathcal A| \right)^{-(m-t+1)}
&.
\end{talign*}
\end{restatable}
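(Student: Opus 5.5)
Fix $\beta>0$ and take $S$ to be the probability-one set from \cref{thm:convergence_tv}. For $h\in S$, choose $t_0$ such that $D(q_n,\tilde\nu^\pi_n\mid \aug_n(h)_{<t})<\beta$ for every phase $n$ and every $t\ge t_0$. This works because there are only finitely many phases.

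\textbf{Choosing the phase.} Given $t\ge t_0$ and $m\in[t,t+N-H]$, set $n:=m\mod N$. The gap from the last index $\equiv n$ before $t$ to $t-1$ is
\[
(t-1-n)\mod N = N-1-(m-t) \geq H-1.
\]
So \cref{eq:condition_aug} holds, and $\aug_n(h_{<t})$ is well defined. This is exactly why $N>H$ is needed rather than $N=H$. Moreover, no index $i\equiv n$ lies in $[t,m-1]$, since $m-t<N$. Consequently the augmented prefix $\aug_n(h'_{<m})$ equals $\aug_n(h_{<t})$ followed by plain action–percept pairs $a'_ie'_i$ for $t\le i<m$.

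\textbf{Expressing both predictors as conditionals.} Under $\tilde\nu^\pi_n(\cdot\mid \aug_n(h)_{<t})$, consider the event $E(h'_{<m},a'_m,\tilde z)$ that the continuation is $a'_{t:m-1}e'_{t:m-1}a'_m\tilde z$. Its probability is
\[
\pi(a'_{t:m}\parallel\cdots)\,\nu(e'_{t:m-1}\mid h_{<t}\parallel a'_{t:m-1})\,\nu^\pi(\tilde z\mid h'_{<m}a'_m).
\]
The same holds under $q_n$, with the last factor replaced by $\psi(\tilde z\mid h'_{<m}a'_m)$. This follows because $q_n$ and $\tilde\nu^\pi_n$ share the first three factors of \cref{eq:conditional-tilde-nu}, and no $\tilde z$ is inserted in between.

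The events $E$ are pairwise disjoint as $h'_{<m}\in\mathcal T$, $a'_m$ and $\tilde z$ range. Applying \cref{eq:partition-tv} therefore gives
\[
\sum_{h'_{<m}\in\mathcal T,\,a'_m}\pi(a'_{t:m}\mid\cdot)\,\nu(e'_{t:m-1}\mid h_{<t}\parallel a'_{t:m-1})\,\delta_\psi(h'_{<m}a'_m)<2\beta.
\]

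\textbf{Removing the policy weights.} Every action has AIQI probability at least $\tau/|\mathcal A|$. Hence the product of the $m-t+1$ policy factors is at least $(\tau/|\mathcal A|)^{m-t+1}$. Dividing through by this bound yields the claimed inequality.

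The deterministic $\pi'$ defining $\mathcal T$ only ensures that the events are indexed consistently. The sum over $a'_m$ is unrestricted, which is why the exponent includes the $+1$.

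\textbf{Main obstacle.} The delicate step is the phase bookkeeping: the augmentation must be determined by $h_{<t}$, and no inserted return may fall in $[t,m-1]$. Only then do the conditional measures of $q_n$ and $\tilde\nu^\pi_n$ factor identically except in the final $\tilde z$ term. A secondary point is that conditioning on the prefix is legitimate, since the prefix has positive probability $\tilde\nu^\pi_n$-a.s.
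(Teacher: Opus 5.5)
Your proposal is correct and follows the paper's proof essentially step for step. Like the paper, you take $t_0$ uniform over the finitely many phases from \cref{thm:convergence_tv}, set $n = m \bmod N$ so that \cref{eq:condition_aug} holds and no augmented index falls in $[t,m-1]$, rewrite the weighted $\delta_\psi$-sum as a sum of $|q_n - \tilde\nu^\pi_n|$ over disjoint cylinders bounded by \cref{eq:partition-tv}, and then divide out the $(\tau/|\mathcal A|)^{m-t+1}$ lower bound on the policy factors.
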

A corollary of our result is \cref{eq:return-predictor-error-usual}, obtained by setting $m=t$.
Remarkably, our result is stronger, bounding the sum over hypothetical trajectories for all $m\in[t,t+N-H]$.
This is possible due to the subtle fact that ${\aug_n}(h_{<t})$ is well-defined for $n=m \mod N$, according to \cref{eq:condition_aug}.
This subtle fact also reveals why we can't simply set $N=H$ and have to add a buffer of length $N-H$. In fact, the length of this buffer is exactly the maximum length of hypothetical future trajectories that can extend $h_{<t}$ in our bound.

\subsection{One-Step Optimality}

We continue to denote the AIQI policy $\hat\pi$ as $\pi$.
Let the AIQI parameter $H$ be the $\eta$-effective horizon $H(\eta)$ for some $\eta>0$.
We can show that the estimated action-value $\hat Q$ in \cref{def:aiqi} converges to the true action-value $Q^\pi_\nu$.
Denote the Q-value estimation error by
\[
\delta_Q (h_{<t} a_t)
:=
|
\hat Q (h_{<t} a_{t})
-
Q_\nu^\pi (h_{<t} a_{t})
|.
\]
\begin{restatable}[Convergence of Q-value prediction]{lemma}{errorqestimation}
\label{thm:error-q-estimation}
Under the conditions established in \cref{thm:error-return-predictor},
\begin{talign*}
\sum_{\substack{h'_{<m}\in \mathcal{T}, a'_m\in\mathcal A}}
\nu (e'_{t:m-1}  \mid  h_{<t} \parallel a'_{t:m-1})
\cdot
\delta_Q (h'_{<m} a'_m)
&
\\
<
2\beta \left( \tau/{|\mathcal A|} \right)^{-(m-t+1)} + M^{-1} + \eta
&
.
\end{talign*}
\end{restatable}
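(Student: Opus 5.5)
The bound follows from \cref{thm:error-return-predictor} by a pointwise comparison of $\delta_Q$ with $\delta_\psi$, followed by a uniform bound on the remaining deterministic error. Fix a trajectory $h'_{<m}\in\mathcal T$ and an action $a'_m$. By the triangle inequality,
\[
\delta_Q(h'_{<m}a'_m) \le \Bigl|\hat Q(h'_{<m}a'_m) - \textstyle\sum_{\tilde z}\tilde z\,\nu^\pi(\tilde z\mid h'_{<m}a'_m)\Bigr| + \Bigl|\textstyle\sum_{\tilde z}\tilde z\,\nu^\pi(\tilde z\mid h'_{<m}a'_m) - Q^\pi_\nu(h'_{<m}a'_m)\Bigr|.
\]
The first term is the difference of two expectations of $\tilde z\in[0,1)$ under $\psi$ and $\nu^\pi$. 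It is therefore at most $\sum_{\tilde z}\tilde z\,|\psi(\tilde z\mid\cdot)-\nu^\pi(\tilde z\mid\cdot)|\le \delta_\psi(h'_{<m}a'_m)$, since $0\le\tilde z<1$.

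The second term is the bias of the truncated, discretized return. We have $\sum_{\tilde z}\tilde z\,\nu^\pi(\tilde z\mid h'_{<m}a'_m)=\mathbb E^\pi_\nu[z_m\mid h'_{<m}a'_m]$ and $Q^\pi_\nu(h'_{<m}a'_m)=\mathbb E^\pi_\nu[\ret_m\mid h'_{<m}a'_m]$. We also have pointwise $0\le \ret_m-z_m=(\ret_m-\ret_{m,H})+(\ret_{m,H}-z_m)\le \eta + M^{-1}$. Here the first bracket uses $H=H(\eta)$ and \cref{def:effective_horizon}, and the second uses $\ret_{t,H}-z_t<1/M$. Hence the second term is at most $\eta+M^{-1}$, uniformly in $h'_{<m}a'_m$. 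One subtlety needs a check: the conditional expectation given $h'_{<m}a'_m$ must be well defined even when this history has $\nu^\pi$-probability zero off-policy. Since $\pi$ has full support ($\tau>0$) and $\nu^\pi$ is defined through the chain rule with $\nu$ and $\pi$, the conditionals are given by the kernels themselves. So $\nu^\pi(\tilde z\mid h'_{<m}a'_m)$ and $Q^\pi_\nu$ refer to the same conditional law of future rewards, and the pointwise bound transfers.

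Next I sum against the weights $\nu(e'_{t:m-1}\mid h_{<t}\parallel a'_{t:m-1})$ over $h'_{<m}\in\mathcal T$ and $a'_m\in\mathcal A$. The $\delta_\psi$ part is bounded by $2\beta(\tau/|\mathcal A|)^{-(m-t+1)}$ directly by \cref{thm:error-return-predictor}, on the same probability-one set and for $t\ge t_0$ and $m\in[t,t+N-H]$. For the constant part, the percept weights sum to $1$ by \cref{eq:nu-sum-one}, because $a'_{t:m-1}$ is produced by the deterministic $\pi'$. What remains is the extra sum over $a'_m$, which multiplies the constant by $|\mathcal A|$.

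This is the step I expect to be the main obstacle: getting exactly $M^{-1}+\eta$ rather than $|\mathcal A|(M^{-1}+\eta)$. I will first check whether $\mathcal T$ and the statement of \cref{thm:error-return-predictor} intend $a'_m$ to also be fixed by $\pi'$ (extending $\pi'$ to produce $a'_m$), in which case the $a'_m$-sum is a single term and the constant comes out exactly. Failing that, I will note that the constant is only needed up to a factor independent of $t$, and absorb $|\mathcal A|$ by rescaling $\eta$ and $M$. The strict inequality is inherited from the strict bound in \cref{thm:error-return-predictor}.
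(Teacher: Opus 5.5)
Your decomposition is the paper's own. It also passes through the expected discretized return $\underline Q^\pi_\nu(h'_{<m},a'_m)=\mathbb E^\pi_\nu[z_m\mid h'_{<m}a'_m]$, bounds $|\hat Q-\underline Q^\pi_\nu|$ by $\delta_\psi$ via \cref{thm:error-return-predictor}, and bounds $0\le Q^\pi_\nu-\underline Q^\pi_\nu<M^{-1}+\eta$ pointwise. However, the obstacle you flag cannot be overcome: as written, the statement is false whenever $|\mathcal A|\ge 2$. The paper's proof makes exactly the slip you were worried about. Its bias bound \cref{eq:error-q-estimation-1} is summed only over $h'_{<m}\in\mathcal T$, with $a'_m$ left free. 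The closing ``triangle inequality'' then adds it to \cref{eq:error-q-estimation-2}, which is summed over $(h'_{<m},a'_m)$, silently dropping a factor $|\mathcal A|$.

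Here is a counterexample. Let $\nu$ always emit reward $1$, and take $\eta=\gamma^{H_0}$, so that $H=H(\eta)=H_0$ and $\ret_{m,H}=1-\eta$ deterministically. Let every $\mathcal P_n=\{\phi^\ast\}$, so the grain of truth holds and $\delta_\psi\equiv 0$. Then $Q^\pi_\nu\equiv 1$ and $\hat Q\equiv\lfloor M(1-\eta)\rfloor/M\le 1-\eta$, so at $m=t$ the left-hand side is at least $|\mathcal A|\eta$. With $|\mathcal A|=2$, $M>2/\eta$ and $\beta<\eta\tau/8$, the right-hand side is below $2\eta$.

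Consequently neither of your exits proves the statement as given. Option (1) is not a reading of it, since both this statement and \cref{thm:error-return-predictor} explicitly sum over all $a'_m\in\mathcal A$. Option (2) yields only $2\beta(\tau/|\mathcal A|)^{-(m-t+1)}+|\mathcal A|(M^{-1}+\eta)$, because $M$ and $\eta$ are fixed in the statement. It would force tightening $M$ and $\eta$ by a factor $|\mathcal A|$ in \cref{thm:aiqi-eps-optimal}.

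The cleaner repair is option (1) recast as a per-selection bound. For any deterministic $g$ with $a'_m=g(h'_{<m})$, your argument gives $\sum_{h'_{<m}\in\mathcal T}\nu(e'_{t:m-1}\mid h_{<t}\parallel a'_{t:m-1})\,\delta_Q(h'_{<m}\,g(h'_{<m}))<2\beta(\tau/|\mathcal A|)^{-(m-t+1)}+M^{-1}+\eta$. This holds because the $\delta_\psi$ part is dominated by the full (nonnegative) action sum, and the bias part sums to at most $M^{-1}+\eta$ by \cref{eq:nu-sum-one}. That is all \cref{thm:error-one-step} actually uses, because its $a^\dagger$ and $a^\ddagger$ are deterministic functions of $h'_{<m}$. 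Applying the per-selection bound once to each reproduces its stated bound, so \cref{thm:aiqi-eps-optimal} survives with unchanged parameters.
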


Next, we show that the difference between the on-policy value and maximum Q-value becomes small.
We call this difference the one-step optimality gap,
\[
\deltaone (h_{<t}) := \max_a Q^\pi_\nu(h_{<t}, a) - V^\pi_\nu(h_{<t})
,
\]
which is non-negative due to $V(\cdot)=\mathbb E_a [Q(\cdot,a)]$.
\begin{restatable}[Convergence of one-step optimality gap]{lemma}{erroronestep}
\label{thm:error-one-step}
Under the conditions established in \cref{thm:error-return-predictor},
\begin{talign*}
\sum_{h'_{<m} \in \mathcal{T}} \nu(e'_{t:m-1}  \mid  h_{<t} \parallel a'_{t:m-1})
\cdot
\deltaone (h'_{<m})
\hspace{40pt}
&
\\
< 
2\cdot (2\beta \left( \tau/{|\mathcal A|} \right)^{-(m-t+1)} + M^{-1} + \eta) + 2\tau
&
.
\end{talign*}
\end{restatable}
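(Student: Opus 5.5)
The proof will be a pointwise bound on $\deltaone(h'_{<m})$ in terms of the Q-estimation errors at $h'_{<m}$, followed by summation against the weights $\nu(e'_{t:m-1}\mid h_{<t}\parallel a'_{t:m-1})$ and an appeal to \cref{thm:error-q-estimation}.

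\textbf{Pointwise bound.} Fix any history $h'_{<m}$. Let $a^\ast$ be the greedy action chosen by AIQI, i.e.\ a maximizer of $\hat Q(h'_{<m},\cdot)$; if ties are broken stochastically, the argument below applies to each tied maximizer. Let $\bar a$ be a maximizer of $Q^\pi_\nu(h'_{<m},\cdot)$. By the Bellman equation and the definition of $\hat\pi$,
\[
V^\pi_\nu(h'_{<m}) = (1-\tau)\,Q^\pi_\nu(h'_{<m},a^\ast) + \frac{\tau}{|\mathcal A|}\sum_a Q^\pi_\nu(h'_{<m},a).
\]
Since $Q$-values lie in $[0,1]$, this gives
\[
V^\pi_\nu(h'_{<m}) \ge Q^\pi_\nu(h'_{<m},a^\ast) - \tau.
\]
Because $\hat Q(h'_{<m},a^\ast)\ge \hat Q(h'_{<m},\bar a)$, we have
\[
Q^\pi_\nu(h'_{<m},\bar a) - Q^\pi_\nu(h'_{<m},a^\ast) \le \delta_Q(h'_{<m}\bar a) + \delta_Q(h'_{<m}a^\ast).
\]
Each of these two terms is at most $\sum_{a}\delta_Q(h'_{<m}a)$. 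Combining the inequalities,
\[
\deltaone(h'_{<m}) \le 2\sum_{a\in\mathcal A}\delta_Q(h'_{<m}a) + \tau.
\]
With stochastic tie-breaking, $V$ is an average of the above over tied maximizers, and the same bound holds for each, hence for the average.

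\textbf{Summation.} Multiply the pointwise bound by $\nu(e'_{t:m-1}\mid h_{<t}\parallel a'_{t:m-1})$ and sum over $h'_{<m}\in\mathcal T$. The $\delta_Q$ term becomes exactly twice the left-hand side of \cref{thm:error-q-estimation}, which is bounded by
\[
2\beta(\tau/|\mathcal A|)^{-(m-t+1)}+M^{-1}+\eta.
\]
The $\tau$ term contributes $\tau\sum_{h'_{<m}\in\mathcal T}\nu(e'_{t:m-1}\mid h_{<t}\parallel a'_{t:m-1})$. Since the actions in $\mathcal T$ are produced by the deterministic policy $\pi'$, this weight sums to $1$ by \cref{eq:nu-sum-one}, applied conditionally on $h_{<t}$. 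The $\tau$ contribution is therefore $\tau\le 2\tau$, which gives the stated bound.

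\textbf{Main obstacle.} The delicate step is the pointwise inequality when $\hat\pi$ breaks ties stochastically, together with checking that the exploration term costs at most $\tau$ (in fact $\tau(1-\min_a Q)$). Everything else is bookkeeping: the sum over $a'_m$ in \cref{thm:error-q-estimation} already covers every action, so bounding by that total sum is legitimate. The loose factor $2\tau$ in the statement leaves slack in case a cruder bound on the exploration term is needed.
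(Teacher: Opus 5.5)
Your proposal is correct and is essentially the paper's proof: you bound $\deltaone(h'_{<m})$ pointwise by the $Q$-estimation errors at the greedy action and one other action plus an exploration term, weight over $\mathcal T$, dominate each single-action error by the sum over $a'_m$, and invoke \cref{thm:error-q-estimation} with \cref{eq:nu-sum-one}; your one-sided estimate $V^\pi_\nu\ge Q^\pi_\nu(\cdot,a^\ast)-\tau$ only tightens the paper's $2\tau$ to $\tau$. One caveat you share with the paper is that the $M^{-1}+\eta$ term in \cref{thm:error-q-estimation} is a per-action error, so summing it over all $a'_m$ actually costs $|\mathcal A|(M^{-1}+\eta)$; to get the stated bound cleanly, apply that error pointwise at your two actions $\bar a,a^\ast$ and use the sum over $a'_m$ only for the $\beta$-term from \cref{thm:error-return-predictor}.
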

Intuitively, since AIQI chooses the action $a^\ast$ that maximizes the estimate $\hat Q$, the value $V$ of AIQI is approximately equal to its Q-value under the action $a^\ast$, with only a slight difference due to the exploration probability $\tau$. 
As the estimate $\hat Q$ converges to the true Q-value due to \cref{thm:error-q-estimation}, the Q-value $Q(\cdot, a^\ast)$, and hence the value $V(\cdot)$, converges to the true maximum $\max_a Q(\cdot,a)$.

\subsection{Asymptotic Optimality}
\label{sec:one-step-to-global}
Let the global optimality gap be
\[
\deltainfty(h_{<t}) := V^\ast _\nu(h_{<t}) - V^\pi_\nu(h_{<t})
.
\]
We now want to show the convergence of the global optimality gap, which is equivalent to asymptotic optimality.
First, we present a lemma that relates the global optimality gap to the one-step optimality gap.
\begin{restatable}[Bound on global optimality gap]{lemma}{onestepandglobal}
\label{thm:one-step-and-global}
For any environment $\nu$, policy $\pi$, and history $h_{<t}$,
\[
\deltainfty(h_{<t}) \leq \gamma \sum_{e'_t} \nu(e'_t  \mid  h_{<t} \, a'_t)\, \deltainfty(h_{<t} \,a'_t\, e'_t) + \deltaone(h_{<t}),
\]
where
$a'_t = \argmax_a \left( Q^\ast_\nu (h_{<t},a) - Q^\pi_\nu (h_{<t},a) \right) $.
\end{restatable}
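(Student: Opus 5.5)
The approach is a one-step decomposition of $\deltainfty(h_{<t})$: insert the term $\max_a Q^\pi_\nu(h_{<t},a)$, which turns $V^\ast_\nu - V^\pi_\nu$ into a Q-value difference plus the one-step gap. Write
\[
\deltainfty(h_{<t}) = V^\ast_\nu(h_{<t}) - \max_a Q^\pi_\nu(h_{<t},a) + \deltaone(h_{<t}).
\]
Since $V^\ast_\nu(h_{<t}) = \max_a Q^\ast_\nu(h_{<t},a)$, and since for any real-valued functions $\max_a f(a) - \max_a g(a) \le \max_a (f(a)-g(a))$, we get
\[
V^\ast_\nu(h_{<t}) - \max_a Q^\pi_\nu(h_{<t},a) \le \max_a \bigl(Q^\ast_\nu(h_{<t},a)-Q^\pi_\nu(h_{<t},a)\bigr).
\]
The right-hand side equals $Q^\ast_\nu(h_{<t},a'_t)-Q^\pi_\nu(h_{<t},a'_t)$ with $a'_t$ exactly as defined in the statement. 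The maximum exists because $\mathcal A$ is finite, and any tie-breaking choice works.

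Next, expand both Q-values at $a'_t$ using the Bellman equation. The $Q^\pi$ version is the one displayed in the background section. For $Q^\ast$, the analogous identity
\[
Q^\ast_\nu(h_{<t},a) = \mathbb E_{e'_t\sim\nu(\cdot\mid h_{<t}a)}\bigl[(1-\gamma)r_t + \gamma V^\ast_\nu(h_{<t}a e'_t)\bigr]
\]
is needed, which I would either assume as standard or justify briefly. Subtracting the two expansions, the immediate reward terms $(1-\gamma)r_t$ cancel. What remains is
\[
\gamma\sum_{e'_t}\nu(e'_t\mid h_{<t}a'_t)\bigl(V^\ast_\nu - V^\pi_\nu\bigr)(h_{<t}a'_te'_t) = \gamma\sum_{e'_t}\nu(e'_t\mid h_{<t}a'_t)\,\deltainfty(h_{<t}a'_te'_t).
\]
Combining this with the first step gives the claimed inequality.

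The main subtle point is the Bellman equation for $V^\ast_\nu$ and $Q^\ast_\nu$. Here $Q^\ast_\nu$ is defined as $Q^{\pi^\ast_\nu}_\nu$, and we need $V^\ast_\nu(h_{<t}) = \max_a Q^\ast_\nu(h_{<t},a)$ at every history, including histories of $\nu^{\pi^\ast}$-probability zero. I would handle this by taking $\pi^\ast_\nu$ to be optimal at every history. Such a policy exists for discounted bounded rewards with finite action and percept sets, by the standard existence result of Lattimore and Hutter. Alternatively, one can simply define $V^\ast_\nu$ pointwise as the supremum over policies, conditioned on the given history, and derive the recursion directly: a policy can be chosen independently on each continuation $h_{<t}ae'_t$, so the supremum splits into a max over $a$ and a sum of suprema over $e'_t$. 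Everything else is elementary algebra.
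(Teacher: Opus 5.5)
Your proposal is correct and follows essentially the same route as the paper: insert $\max_a Q^\pi_\nu(h_{<t},a)$, bound the difference of maxima by the difference at $a'_t$, then Bellman-expand so the reward terms cancel and the $\gamma$-weighted $\deltainfty$ term remains. The only difference is cosmetic: you work with signed differences, which matches the signed $\argmax$ in the statement, while the paper passes through absolute values and the triangle inequality (implicitly using $Q^\ast_\nu \ge Q^\pi_\nu$) and takes the Bellman recursion for $V^\ast_\nu$ and $Q^\ast_\nu$ as given rather than justifying it as you do.
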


Note that we can form a \emph{chain} of inequalities by repeatedly applying the lemma to $\deltainfty$ on the right hand side, as follows:
\begin{align*}
&\deltainfty(h_{<t})
\leq
\gamma \sum_{e'_t} \nu(e'_t  \mid  h_{<t} a'_t) \, { \deltainfty} (h'_{<t+1}) + { \deltaone}(h_{<t})
\nonumber
\\
&\hspace{14pt}
\leq
\gamma^2 \sum_{e'_{t:t+1}} \nu(e'_{t:t+1}  \mid  h_{<t} \parallel a'_{t:t+1}) \, { \deltainfty}(h'_{<t+2})
\nonumber
\\
&\hspace{20pt}
+ \gamma \sum_{e'_t} \nu(e'_t \mid h_{<t} a'_t) \, { \deltaone} (h'_{<t+1}) 
+ { \deltaone} (h_{<t})
\nonumber
\\
&\hspace{14pt}
\leq \dots
\nonumber
\\
&\hspace{14pt}
\leq
\gamma^L
\hspace{-6pt}
\sum_{e'_{t:t+L-1}}
\hspace{-6pt}
\nu(e'_{t:t+L-1}  \mid  h_{<t} \parallel a'_{t:t+L-1}) \, { \deltainfty}(h'_{<t+L})
\nonumber
\\
&\hspace{12.5pt}
+
\sum_{l=0}^{L-1} \gamma^l
\big[
\hspace{-6pt}
\sum_{e'_{t:t+l-1}}
\hspace{-6pt}
\nu(e'_{t:t+l-1}  \mid  h_{<t} \parallel a'_{t:t+l-1}) \, { \deltaone}(h'_{<t+l})
\big]
\nonumber
\end{align*}
where
$
a'_{i} = \argmax_a | Q^\ast_\nu (h'_{<i},a) - Q^\pi_\nu (h'_{<i},a) |
$.

Furthermore, the first term in the last inequality is 
smaller than or equal to $\gamma^L$, since $\delta_\infty(h'_{<t+L}) \leq 1$.
Thus $\deltainfty(h_{<t})$ is smaller than or equal to
\begin{equation}
\gamma^L
\!+\!
\sum_{l=0}^{L-1} \gamma^l
\hspace{-4pt}
\sum_{e'_{t:t+l-1}}
\hspace{-6pt}
\nu(e'_{t:t+l-1}  |  h_{<t} \| a'_{t:t+l-1})\,
\deltaone(h'_{<t+l}).
\label{eq:chain-global-one-step}
\end{equation}
We have bound the global optimality gap $\deltainfty (h_{<t})$ using \emph{only} one-step optimality gaps $\deltaone$.

We are now ready to prove the central result of our paper, that AIQI is strong asymptotically $\eps$-optimal.
\begin{restatable}[AIQI is strong asymptotically $\eps$-optimal]{theorem}{aiqiepsoptimal}
\label{thm:aiqi-eps-optimal}
Fix an environment class $\mathcal M$ and tolerance $\eps>0$.
Let
$\tau \leq \frac{\eps (1-\gamma)}{10}$,
$M \geq \frac{10}{\eps (1-\gamma)} $,
$H=H(\eta)$ with $\eta \leq \frac{\eps (1-\gamma)}{10}$, and
$N \geq H+ \log_\gamma \frac{\eps}{5}$.
Suppose $\psi$ is a unified predictor that has a grain of truth with respect to all environments $\nu \in \mathcal M$ and the above parameters for AIQI.
Then, the AIQI policy $\hat \pi^{H,M,N,\tau}_\psi$ is strong asymptotically $\eps$-optimal in $\mathcal M$.
\end{restatable}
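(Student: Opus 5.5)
Fix $\nu\in\mathcal M$ and write $\pi=\hat\pi^{H,M,N,\tau}_\psi$. We combine the chain bound \cref{eq:chain-global-one-step} with \cref{thm:error-one-step}. Set $L := N-H+1$, so that every $l\in[0,L-1]$ gives $m=t+l\in[t,t+N-H]$, the range allowed in \cref{thm:error-return-predictor}.

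The deterministic policy $\pi'$ in the definition of $\mathcal T$ (\cref{eq:T}) is now specified. On any history $h'_{<i}$ it takes $a'_i=\argmax_a \big(Q^\ast_\nu(h'_{<i},a)-Q^\pi_\nu(h'_{<i},a)\big)$, with a fixed deterministic tie-break. This is exactly the action sequence appearing in the chain of \cref{thm:one-step-and-global}, so the inner sums in \cref{eq:chain-global-one-step} are sums over $h'_{<t+l}\in\mathcal T$ weighted by $\nu(e'_{t:t+l-1}\mid h_{<t}\parallel a'_{t:t+l-1})$. Since $\pi'$ depends only on $\nu$ and $\pi$, not on the realized outcome, \cref{thm:error-one-step} applies to it.

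Fix $\beta>0$, to be chosen below. \cref{thm:error-one-step} then gives a $\nu^\pi$-probability-one set $S$ such that for each $h\in S$ there is a $t_0$ with, for all $t\ge t_0$ and all $l\le L-1$,
\[
\sum_{h'_{<t+l}\in\mathcal T}\nu(\cdot)\,\deltaone(h'_{<t+l}) < 4\beta(\tau/|\mathcal A|)^{-(l+1)} + 2M^{-1}+2\eta+2\tau .
\]
We need one statement for every $l$ simultaneously. Since $m$ ranges over finitely many values, $t_0$ can be taken as a maximum, and we will check that the lemma's statement already quantifies over all $m$. Plugging into \cref{eq:chain-global-one-step} and using $\sum_l\gamma^l\le (1-\gamma)^{-1}$ yields
\[
\deltainfty(h_{<t}) \le \gamma^{L} + \frac{4\beta (\tau/|\mathcal A|)^{-L} + 2M^{-1}+2\eta+2\tau}{1-\gamma}.
\]

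The parameter choices then close the bound. Since $\tau,\eta,M^{-1}\le \eps(1-\gamma)/10$, the last three terms contribute at most $(2+2+2)\eps/10=3\eps/5$. Since $L \ge \log_\gamma(\eps/5)$, we have $\gamma^L\le\eps/5$. It remains to pick $\beta$ small enough that $4\beta(|\mathcal A|/\tau)^{L}/(1-\gamma)\le \eps/5$. This is possible because $L$, $\tau$ and $|\mathcal A|$ are fixed before $\beta$ is chosen. The total is at most $\eps$ for all $t\ge t_0$ on $S$, which gives $\limsup_t \deltainfty\le\eps$ $\nu^\pi$-a.s. Because $\nu\in\mathcal M$ was arbitrary, this proves the theorem.

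I expect the main obstacle to be the legitimacy of using the adaptively defined $\pi'$ in \cref{thm:error-one-step}. The concern is that the probability-one set might depend on $\pi'$. I would address it by noting that $\pi'$ is fixed once $\nu$ and $\pi$ are fixed; if needed, one can also intersect over the finitely many $m$ and countably many rational $\beta$. A secondary check is that the chain in \cref{eq:chain-global-one-step} uses precisely the $\argmax$ actions of \cref{thm:one-step-and-global}, so that it matches $\mathcal T$. The chain text writes $|\cdot|$ in the $\argmax$, but since $Q^\ast\ge Q^\pi$ this coincides with the $\argmax$ of the difference.
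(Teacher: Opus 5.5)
Your proposal is correct and follows essentially the same route as the paper's proof. Both set $L=N-H+1$, apply \cref{thm:error-one-step} to the chain bound \cref{eq:chain-global-one-step} with the deterministic argmax policy as $\pi'$, and take $\beta$ small. The only difference is cosmetic: you crudely bound $(\tau/|\mathcal A|)^{-(l+1)}\le(\tau/|\mathcal A|)^{-L}$ and $\sum_l\gamma^l\le(1-\gamma)^{-1}$ and choose $\beta$ implicitly, whereas the paper gives an explicit $\beta$ using the exact geometric sum $\sum_{l<L}(\gamma|\mathcal A|/\tau)^l$.
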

\begin{proof}
We simply write $\pi$ for the AIQI policy $\hat \pi^{H,M,N,\tau}_\psi$.
Fix an environment $\nu \in \mathcal M$, and let $L = N-H+1$.
Fix some $\beta>0$, and with it, choose $S\subseteq \Omega$ as in \cref{thm:error-return-predictor}. For any $h\in S$, choose $t_0$, again as in \cref{thm:error-return-predictor}. Then for all $t\geq t_0$, substituting $m=t+l$ in \cref{eq:chain-global-one-step} and noting that
\[
a'_{i} = \argmax_a | Q^\ast_\nu (h'_{<i},a) - Q^\pi_\nu (h'_{<i},a) |
\]
is the output of a \emph{deterministic policy}, we apply \cref{thm:error-one-step} to obtain
\begin{align*}
\deltainfty (h_{<t})
<
\gamma^L
+
\sum_{l=0}^{L-1} \gamma^l
\bigg[
2 (2\beta \left( \frac{\tau}{|\mathcal A|} \right)^{-(l+1)}&
\\
+ M^{-1} + \eta)
+ 2\tau&
\bigg]
.
\end{align*}
Recall that
$L\geq \log_\gamma \frac{\eps}{5}$,
$\tau \leq \frac{\eps (1-\gamma)}{10}$,
$M\geq \frac{10}{\eps (1-\gamma)}$,
and
$\eta \leq \frac{\eps (1-\gamma)}{10}$.
If we use
$
\beta = \frac{\eps \tau}{20} |\mathcal A|^{-1} ( \sum_{l=0}^{L-1} (\gamma |\mathcal A| /\tau )^l )^{-1}
$ in choosing $S$ and $t_0$,
the right hand side is smaller than or equal to $\eps$.
Thus for $h\in S$ and $t\geq t_0$, $\deltainfty (h_{<t}) \leq \eps$.
This concludes our proof that
\[
\limsup_{t\to\infty} \deltainfty (h_{<t})
\leq \eps,
\quad
\text{$\nu^\pi$-a.s.}
\qedhere
\]
\end{proof}

We can also show that strong asymptotic $\eps$-optimality in the environment class $\mathcal M$ implies asymptotic optimality in the mixture environment $\xi$.
\begin{restatable}[Optimality in $\mathcal M$ implies optimality in $\xi$]{lemma}{strongimpliesbayes}
\label{thm:strong-implies-bayes}
Given a mixture $\xi$ of environment class $\mathcal M$,
and a policy $\pi$ that is strong asymptotically $\eps$-optimal with respect to $\mathcal M$,
the same policy $\pi$ is asymptotically $\eps$-optimal in $\xi$:
\[
\limsup_{t\to\infty} V_\xi^\ast (h_{<t}) - V_\xi^\pi (h_{<t}) \leq \eps
\]
holds both $\xi^\pi$-a.s. and, for all $\nu\in\mathcal M$, $\nu^\pi$-a.s.
\end{restatable}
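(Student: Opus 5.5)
The plan is to bound the Bayes-optimality gap in $\xi$ by a posterior-weighted average of the per-environment gaps $\deltainfty$, and then control that average using martingale arguments on the posterior weights. Write $w(\nu\mid h_{<t}) := w(\nu)\,\nu^\pi(h_{<t})/\xi^\pi(h_{<t})$. The policy factors cancel in this ratio, so $w(\nu\mid h_{<t})$ is the usual posterior, and these weights sum to one. By linearity of the mixture, $V^{\pi'}_\xi(h_{<t}) = \sum_\nu w(\nu\mid h_{<t})\,V^{\pi'}_\nu(h_{<t})$ for every policy $\pi'$. Taking a supremum over $\pi'$ and bounding each $V^{\pi'}_\nu$ by $V^\ast_\nu$ gives $V^\ast_\xi(h_{<t}) \le \sum_\nu w(\nu\mid h_{<t})\,V^\ast_\nu(h_{<t})$. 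Hence
\[
V^\ast_\xi(h_{<t}) - V^\pi_\xi(h_{<t}) \le \sum_{\nu\in\mathcal M} w(\nu\mid h_{<t})\,\deltainfty^\nu(h_{<t}),
\]
where $\deltainfty^\nu$ is the global optimality gap in $\nu$, and $0\le \deltainfty^\nu \le 1$.

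Next I would control the right-hand side $\xi^\pi$-almost surely. Fix $\delta>0$ and choose a finite set $F\subseteq\mathcal M$ with $\sum_{\nu\notin F} w(\nu)\le\delta$. The tail posterior mass $\sum_{\nu\notin F} w(\nu\mid h_{<t})$ equals $\sum_{\nu\notin F}w(\nu)\nu^\pi(h_{<t})/\xi^\pi(h_{<t})$. This is a nonnegative $\xi^\pi$-martingale with initial value at most $\delta$. By Doob's maximal inequality, it stays below $\sqrt\delta$ for all $t$, except on an event of $\xi^\pi$-probability at most $\sqrt\delta$. On that good event the tail contributes at most $\sqrt\delta$ to the sum, because each $\deltainfty^\nu\le 1$.

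For each of the finitely many $\nu\in F$, I would show that $\limsup_t w(\nu\mid h_{<t})\,\deltainfty^\nu(h_{<t}) \le \eps\, \limsup_t w(\nu\mid h_{<t})$ holds $\xi^\pi$-a.s. Let $S_\nu$ be the $\nu^\pi$-probability-one set from strong asymptotic $\eps$-optimality. On $S_\nu$ the claim is immediate. On $S_\nu^c$, the ratio $\nu^\pi(h_{<t})/\xi^\pi(h_{<t})$ converges $\xi^\pi$-a.s. to the Radon–Nikodym derivative $d\nu^\pi/d\xi^\pi$; this is martingale convergence, valid since $\nu^\pi\ll\xi^\pi$ because $\xi^\pi\ge w(\nu)\nu^\pi$. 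That derivative vanishes $\xi^\pi$-a.e. on the $\nu^\pi$-null set $S_\nu^c$, so $w(\nu\mid h_{<t})\to 0$ there.

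I expect this step to be the main obstacle. The sets $S_\nu$ are only $\nu^\pi$-full, not $\xi^\pi$-full, so one cannot simply intersect them. The Radon–Nikodym argument is what handles the complement.

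Summing over $F$ and adding the tail gives $\limsup_t(V^\ast_\xi - V^\pi_\xi) \le \eps + \sqrt\delta$ outside an event of probability at most $\sqrt\delta$. Taking $\delta = 1/k^2$ and applying Borel–Cantelli along $k\to\infty$ yields $\limsup_t (V^\ast_\xi - V^\pi_\xi)\le\eps$ $\xi^\pi$-a.s. Finally, $\nu^\pi \ll \xi^\pi$ for every $\nu\in\mathcal M$, so any $\xi^\pi$-probability-one event is also $\nu^\pi$-probability-one. This gives the second claim of \cref{thm:strong-implies-bayes}.
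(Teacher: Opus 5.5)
Your proof is correct, and its skeleton matches the paper's. You use the same posterior-weighted bound $V^\ast_\xi - V^\pi_\xi \le \sum_\nu w(\nu\mid h_{<t})\,\deltainfty^\nu(h_{<t})$. Your Radon--Nikodym argument that the limiting posterior weight of $\nu$ vanishes $\xi^\pi$-a.s.\ on $S_\nu^{\mathsf c}$ is exactly the paper's identity $\mathbb E[w_\infty(\nu)\mathds 1_E] = w(\nu)\,\nu^\pi(E)$.

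You genuinely differ in how you control the infinitely many environments outside a finite set. The paper argues pathwise. It asserts that $w_\infty(\cdot)$ is a probability mass function for almost every outcome, chooses an outcome-dependent finite $F$ with $\sum_{\nu\in F} w_\infty(\nu)\ge 1-\alpha$, and concludes from pointwise convergence that the tail posterior mass is eventually at most $2\alpha$. You instead fix $F$ deterministically from the prior and bound the tail posterior mass uniformly in $t$ with Ville's (Doob's) maximal inequality, at the price of an exceptional event of probability $\sqrt\delta$.

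Your route has one advantage: it never needs that no posterior mass escapes in the limit, i.e.\ $\sum_\nu w_\infty(\nu)=1$ a.s. The paper uses this fact without justification; it does hold, since $\sum_\nu w(\nu)\,d\nu^\pi/d\xi^\pi = 1$ holds $\xi^\pi$-a.e. The paper's route, in turn, works on a single full-measure set and needs no final limiting step over $\delta$.

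Two small repairs are needed.

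\begin{enumerate}
\item When you sum over $F$, passing from $\eps\sum_{\nu\in F}\limsup_t w(\nu\mid h_{<t})$ to $\eps$ needs $\sum_{\nu\in F}\limsup_t w(\nu\mid h_{<t})\le 1$. This uses that each $w(\nu\mid h_{<t})$ actually converges a.s., which your martingale argument provides. Then the sum of the limsups equals $\lim_t\sum_{\nu\in F} w(\nu\mid h_{<t})\le 1$; say this explicitly.
\item With $\delta=1/k^2$ the exceptional probabilities are $1/k$, which are not summable, so Borel--Cantelli does not apply as stated. You can take $\delta=1/k^4$ instead. Alternatively, drop Borel--Cantelli: note that $\xi^\pi\bigl(\limsup_t (V^\ast_\xi - V^\pi_\xi) > \eps + 1/j\bigr)\le\sqrt\delta$ for every $\delta\le 1/j^2$, so this probability is $0$. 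Then take a countable union over $j$.
\end{enumerate}
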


As a consequence, AIQI is also asymptotically optimal in $\xi$.
In other words, AIQI approximates \emph{Bayes-optimality}, which is the defining feature of AIXI.
\begin{restatable}[AIQI is asymptotically $\eps$-Bayes-optimal]{theorem}{aiqiapproachesaixi}
\label{thm:aiqi-approaches-aixi}
For any mixture $\xi$ of environment class $\mathcal M$,
the AIQI policy $\hat \pi^{H,M,N,\tau}_\psi$ with the parameters in \cref{thm:aiqi-eps-optimal} is asymptotically $\eps$-optimal in $\xi$.
\end{restatable}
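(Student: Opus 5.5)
The theorem follows by combining \cref{thm:aiqi-eps-optimal} with \cref{thm:strong-implies-bayes}, so the plan has only two steps.

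First, I would apply \cref{thm:aiqi-eps-optimal} with the stated parameters $\tau,M,H,N$ and the unified predictor $\psi$, which by hypothesis has a grain of truth with respect to every $\nu\in\mathcal M$. This gives that $\pi:=\hat\pi^{H,M,N,\tau}_\psi$ is strong asymptotically $\eps$-optimal in $\mathcal M$. In other words, for each $\nu\in\mathcal M$ we have $\limsup_t V^\ast_\nu(h_{<t})-V^\pi_\nu(h_{<t})\le\eps$ $\nu^\pi$-a.s.

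Second, I would feed this into \cref{thm:strong-implies-bayes} with the mixture $\xi$ of $\mathcal M$ (\cref{def:mixture_environment}). That lemma concludes $\limsup_t V^\ast_\xi(h_{<t})-V^\pi_\xi(h_{<t})\le\eps$, both $\xi^\pi$-a.s. and $\nu^\pi$-a.s. for every $\nu\in\mathcal M$. This is exactly asymptotic $\eps$-optimality in $\xi$, i.e. asymptotic $\eps$-Bayes-optimality, since AIXI is $\pi^\ast_\xi$.

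The only point needing care is that the grain of truth in \cref{thm:aiqi-eps-optimal} is required for each $\nu\in\mathcal M$ individually, not for $\xi$. \cref{thm:strong-implies-bayes} asks only for strong optimality on $\mathcal M$, so the two hypotheses match, and no separate grain of truth with respect to $\xi$ is needed. With \cref{thm:strong-implies-bayes} taken as given, there is no genuine obstacle. The substantive work sits inside that lemma: one writes $\xi^\pi$-a.s. as the countable intersection of $\nu^\pi$-a.s. events through $\xi^\pi=\sum_\nu w(\nu)\nu^\pi$, and one shows that the posterior $w(\cdot\mid h_{<t})$ concentrates so that $V_\xi$ inherits the bounds from the $V_\nu$.
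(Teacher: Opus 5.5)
Your proposal is correct and is exactly the paper's proof, which just cites \cref{thm:aiqi-eps-optimal} and \cref{thm:strong-implies-bayes}; you are also right that grain of truth is needed only for each $\nu\in\mathcal M$, not for $\xi$. One small caveat on your aside about the lemma's internals: a set $S_\nu$ with $\nu^\pi(S_\nu)=1$ need not be $\mu^\pi$-full for $\mu\neq\nu$, so the paper does not intersect such sets; instead it uses that the posterior $w(\nu\mid h_{<t})$ converges $\xi^\pi$-a.s.\ to a limit $w_\infty(\nu)$ vanishing off $S_\nu$, followed by a finite-mass truncation argument.
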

\begin{proof}
\cref{thm:aiqi-eps-optimal} and \cref{thm:strong-implies-bayes}.
\end{proof}

The single-agent result extends directly to multi-agent interaction by viewing each agent $i$ as acting in the subjective environment $\sigma_i$ induced by the other agents' policies $\pi_{\neq i}$ \citep[Section 4.1]{leike2016formal}. Thus, if each AIQI agent $\pi_i=\hat\pi^{H_i,M_i,N_i,\tau_i}_{\psi_i}$ satisfies the assumptions of \cref{thm:aiqi-eps-optimal}, every agent is eventually an $\eps$-best response to the others. Hence the joint policy profile $\pi_{1:n}$ converges to an $\eps$-Nash equilibrium, $\sigma^{\pi_{1:n}}$-almost surely. We detail this extension in \cref{sec:multi-agent-environment}.

\subsection{Off-Policy Behavior}
\label{sec:off-policy}

Until now we have solely discussed the on-policy setting, where the policy that interacts with the environment is the same as the policy that we are optimizing.
In the \emph{off-policy setting}, we must infer an optimal policy from the interactions generated by some ``historic policy'' $\pi'$ that
we {do not} have control over.
We can formalize this notion of optimality with the \emph{self-optimizing} property.
\begin{definition}[Self-optimizing policy]
\label{def:self-optimizing}
A policy $\color{acadRed} \bar{\pi}$ is called self-optimizing for a class of environments $\mathcal{M}$ and a historic policy $\color{acadBlue} \pi'$, if for every $\nu \in \mathcal M$,
\[
\lim_{t\to\infty} V_\nu^\ast(h_{<t}) - V_\nu^{{\color{acadRed} \bar \pi}}(h_{<t}) = 0,
\quad
\nu^{\color{acadBlue} \pi'}\text{-a.s.}
\]
\end{definition}
\citet{hutter2002self} showed a remarkable property of AIXI, that if there \emph{exists} a self-optimizing policy for $\mathcal M$ and $\pi'$, then AIXI is one of those self-optimizing policies.
On the other hand, we can expect that AIQI does not satisfy this property, since it is an on-policy MC control algorithm.

In fact, if the predictor $\psi$ supports the return-predictor $\phi^\ast = \nu^{\pi'}$, AIQI will eventually choose actions with the highest value under the historic policy $\pi'$, and not its own value---the optimality of AIQI is thus no longer guaranteed. Even if we relax the self-optimizing property to the $\eps$-self-optimizing property by replacing $\lim =0$ with $\limsup \leq \eps$, the following can be shown.
\begin{restatable}[AIQI is not self-optimizing]{theorem}{aiqinotselfoptimizing}
\label{thm:aiqi-not-self-optimizing}
Suppose the unified predictor $\psi$ is built from classes $\mathcal P_n$ that contain all computable variable-order Markov models.
For any period $N$, exploration rate $\tau$, horizon $H\geq 2$, tolerance $\eps$ small enough (depending on $H$), and discretization level $M$ large enough
(depending on $H,\eps,\tau$), there exist
an environment class $\mathcal M$, a historic policy $\pi'$, and a discount factor
$\gamma\in(0,1)$, for which there exists a self-optimizing policy, but the AIQI policy $\hat\pi^{H,M,N,\tau}_\psi$ is not even $\eps$-self-optimizing.
\end{restatable}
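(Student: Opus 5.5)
The plan is to build an explicit counterexample in which the historic policy $\pi'$ is a fixed, suboptimal, computable policy. Under $\nu^{\pi'}$, the on-policy return distribution $\nu^{\pi'}(z_t\mid h_{<t}a_t)$ then ranks actions the wrong way. Since AIQI's estimate $\hat Q$ is computed from the unified predictor $\psi$, and $\psi$ is updated only on the histories actually generated, we want $\psi$ to converge to the return statistics of $\pi'$ rather than of $\hat\pi$. Then $\hat\pi$ will greedily pick the action that looks best under $\pi'$'s continuation, and that action will be genuinely suboptimal when $\hat\pi$ itself continues.

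For the construction, I would use a small environment class $\mathcal M=\{\nu\}$, essentially a single deterministic environment, so that a self-optimizing policy trivially exists: the optimal policy $\pi^\ast_\nu$ works, since $V^\ast_\nu-V^{\pi^\ast_\nu}_\nu=0$ along every history. Take two actions and a two-step gadget, repeated forever, which is why we need $H\geq 2$.
\begin{itemize}
\item In a ``decision'' state, action $a$ gives reward $0$ now and moves to a state where a follow-up action $c$ yields reward $1$, while the other follow-up yields $0$.
\item Action $b$ gives a safe immediate reward $\rho$ and leads back to the start.
\end{itemize}
Let the historic policy $\pi'$ always play the bad follow-up after $a$. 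Under $\pi'$, action $a$ then has return about $0$, below $b$'s return of about $\rho(1-\gamma)$. Under AIQI's own continuation, which would mostly exploit $c$ if it tried $a$, action $a$ is far better: roughly $(1-\gamma)\gamma$ versus $(1-\gamma)\rho$. Choose $\rho$ and $\gamma$ so that the gap exceeds $\eps$ after accounting for the discretization error $1/M$, the truncation to $H$ steps, and exploration $\tau$. This is why $\eps$ must be small relative to $H$ and $M$ must be large relative to $H,\eps,\tau$.

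Next I would argue convergence of $\psi$ off-policy. The histories are generated by $\nu^{\pi'}$, and the augmented returns inserted by $\aug_n$ are the true $\pi'$-returns. The return-predictor $\phi'(\tilde z_i\mid \tilde h_{<i}a_i)=\nu^{\pi'}(\tilde z_i\mid h_{<i}a_i)$ depends only on a bounded-length suffix, namely the current gadget state, so it is a computable variable-order Markov model and lies in $\mathcal P_n$. Repeating the dominance argument and the \citet{blackwell1962merging} argument behind \cref{thm:convergence_tv}, with $\tilde\nu^{\pi'}_n$ in place of $\tilde\nu^\pi_n$, gives $\psi(\cdot\mid h_{<t}a)\to\nu^{\pi'}(\cdot\mid h_{<t}a)$ $\nu^{\pi'}$-a.s. for both actions. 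We need both actions here because $\pi'$ is made to play each action with positive probability (for instance, uniformly at the decision state), so both action-conditionals are learned.

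The conclusion then follows. By the analogue of \cref{thm:error-q-estimation}, eventually $\hat Q(h_{<t},b)>\hat Q(h_{<t},a)$ at decision states, so $\hat\pi$ plays $b$ with probability $1-\tau/2$. The same reasoning also fixes $\hat\pi$'s behaviour at follow-up states. I would then compute $V^{\hat\pi}_\nu$ from these limiting choices and show it stays at least $\eps$ below $V^\ast_\nu$ infinitely often; in fact it does so eventually at every decision state, so $\limsup>\eps$ $\nu^{\pi'}$-a.s.

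I expect the main obstacle to be the subtlety that $\hat\pi$'s value at a history depends on $\hat\pi$'s own future decisions, which are driven by $\psi$ conditioned on \emph{hypothetical} continuations rather than on $\pi'$-histories. I plan to handle this by making the gadget's return under $\pi'$ depend only on the current state, so that on any continuation the learned predictor keeps preferring $b$. A fallback is to bound the value of $\hat\pi$ using only the first decision, whose effect already exceeds $\eps$.
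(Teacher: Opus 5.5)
Your skeleton is the paper's: a singleton class $\mathcal M=\{\nu\}$, so $\pi^\ast_\nu$ is trivially self-optimizing; a historic policy whose poor follow-through makes the truly better first action look worse under the $H$-step $\pi'$-return; Blackwell--Dubins merging at states where $\pi'$ randomizes; and a lower bound on the gap. One step of your main line fails. Your $\pi'$ puts probability $0$ on $c$ at the follow-up state, so merging gives no control over $\psi(\cdot\mid h_{<t}\,c)$ there, and ``the same reasoning also fixes $\hat\pi$'s behaviour at follow-up states'' is false. More generally, you cannot compute $V^{\hat\pi}_\nu$ from limiting choices along hypothetical $\hat\pi$-continuations. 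For the same reason, the comparison ``$(1-\gamma)\gamma$ versus $(1-\gamma)\rho$ under AIQI's own continuation'' must be dropped.

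Your fallback therefore has to be the actual argument, and it works. At every decision time $t\ge t_0$ on the $\nu^{\pi'}$-path you have $\hat\pi(a\mid h_{<t})\le\tau/2$, and $Q^{\hat\pi}_\nu\le Q^\ast_\nu$ together with $Q^\ast_\nu(h_{<t},a)\ge Q^\ast_\nu(h_{<t},b)$ gives
\[
V^\ast_\nu(h_{<t})-V^{\hat\pi}_\nu(h_{<t})\;\ge\;\bigl(1-\tfrac{\tau}{2}\bigr)\bigl(Q^\ast_\nu(h_{<t},a)-Q^\ast_\nu(h_{<t},b)\bigr)\;=\;\bigl(1-\tfrac{\tau}{2}\bigr)(1-\gamma)\Bigl(\frac{\gamma}{1+\gamma}-\rho\Bigr)
\]
whenever $\rho<\gamma/(1+\gamma)$. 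This is exactly what the paper does: it keeps only the current episode's terminal reward and discards all later terms using rewards in $[0,1]$.

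The real difference is the gadget. The paper uses $H$-step episodes with the decisive reward at the last step. It sets $\gamma=(H-1)/H$ to maximize that reward's weight $c_H$ in $R_{t,H}$, and uses $K$ actions with $\tau/K\le\delta/2$ to neutralize exploration. In your two-step gadget the safe reward is immediate, so the $\pi'$-ordering holds for every $H$:
\[
Q^{\pi'}_{\nu,H}(h_{<t},b)-Q^{\pi'}_{\nu,H}(h_{<t},a)\ge(1-\gamma)\rho .
\]
With two actions the exploration factor $1-\tau/2\ge 1/2$ is harmless. For example, $\gamma=1/2$ and $\rho=1/6$ work for any $\eps<1/24$ and $M>12$, independent of $H$ and $\tau$. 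That is a slightly simpler and stronger instance than the paper's, once the fallback replaces your main line.
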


This completes our investigation of the theoretical properties of AIQI.
We have assumed a geometric discount setting for simplicity,
but we can extend all our results to the general discount setting, where returns are defined as $R_t = \sum_{k=t}^\infty \gamma_k r_k / \sum_{k=t}^\infty \gamma_k$ for a general discount sequence $\gamma_k$.
Specifically, our results can be generalized to cases where the discount sequence decays faster than a geometric sequence.
We detail the generalization in \cref{sec:arbitrary_discount}.

\section{Application to Self-AIXI}
\label{sec:self-aixi}

Self-AIXI \citep{catt2023self} is similar to AIQI in that it makes predictions of its own action values, and chooses the action $a_t$ that maximizes the predicted value.
As noted by \citet{wyeth2025unbounded},
\citet{catt2023self} make unjustified assumptions in their attempt to prove the asymptotic optimality of Self-AIXI.
We suggest adding $\eps$-greedy exploration to Self-AIXI and proving its asymptotic $\eps$-optimality without any ad-hoc assumptions.
Fortunately, our proof techniques can be applied with minimal changes.
The key is in proving an analogue of \cref{thm:error-q-estimation}.
We present the key results below, and provide their proofs in \cref{sec:proofs-self-aixi}.
First, we define the $\eps$-greedy Self-AIXI.

\begin{definition}[$\eps$-greedy Self-AIXI]
\label{def:self-aixi}
Let $\xi$ be a mixture environment constructed from some environment class $\mathcal M$, and $\zeta$ be a mixture policy constructed from some policy class $\mathcal P$.
Then, $\eps$-greedy Self-AIXI, denoted $\pi_S$, is defined as
\[
\pi_S (a \mid h_{<t})
:= (1-\tau)\mathds 1 \left[a=a^\ast \right] + \tau/|\mathcal A|,
\]
where $a^\ast  = \argmax_{a_t} Q^\zeta_\xi (h_{<t}, a_t)$.
We allow for stochastic tie-breaking when multiple actions maximize $Q^\zeta_\xi$.
\end{definition}

The following lemmas serve as a bridge between Self-AIXI and our proof technique.

\begin{restatable}[Convergence of $\xi^\zeta$ to $\nu^\pi$]{lemma}{selfaixitvconvergence}
\label{thm:self-aixi-tv-convergence}
For any environment $\nu\in\mathcal M$ and policy $\pi \in \mathcal \mathcal P$,
\[
\lim_{t\to\infty} D\!\left(\xi^{\zeta}, \nu^{\pi} \mid h_{<t}\right) = 0,
\quad \nu^{\pi}\text{-a.s.}
\]
\end{restatable}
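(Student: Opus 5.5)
The plan is to reduce the statement to the Blackwell--Dubins merging theorem \citep{blackwell1962merging}, in the same way the convergence of $q_n$ to $\tilde\nu^\pi_n$ was obtained above. The key observation is that the mixture over environments and the mixture over policies combine into a single joint mixture over interaction measures. Here $\zeta$ is the mixture policy built from $\mathcal P$ with prior weights, say $\omega(\pi)>0$. Analogously to \cref{def:mixture_environment}, it satisfies
\[
\zeta(a_{1:t}\parallel e_{<t})=\textstyle\sum_{\pi'\in\mathcal P}\omega(\pi')\,\pi'(a_{1:t}\parallel e_{<t}).
\]
From the factorization of the joint distribution given in \cref{sec:grl}, every history $h_{1:t}$ then satisfies
\[
\xi^\zeta(h_{1:t})=\zeta(a_{1:t}\parallel e_{<t})\,\xi(e_{1:t}\parallel a_{1:t})
=\sum_{\pi'\in\mathcal P}\sum_{\nu'\in\mathcal M}\omega(\pi')\,w(\nu')\,\nu'^{\pi'}(h_{1:t}).
\]

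Dropping every term except $(\pi',\nu')=(\pi,\nu)$ gives the dominance inequality
\[
\xi^\zeta(h_{1:t})\ \ge\ \omega(\pi)\,w(\nu)\,\nu^\pi(h_{1:t})\quad\text{for all } t \text{ and } h_{1:t}.
\]
The constant $\omega(\pi)w(\nu)$ is strictly positive because both priors have full support. Both sides are consistent families of finite-dimensional distributions, so each extends to a probability measure on $\Omega$ with the cylinder $\sigma$-algebra. The inequality holds on cylinder sets, and it extends to the generated $\sigma$-algebra by a monotone-class (or $\pi$-$\lambda$) argument. This gives $\nu^\pi\ll\xi^\zeta$ as measures on $\Omega$.

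Blackwell--Dubins then yields $D(\xi^\zeta,\nu^\pi\mid h_{<t})\to0$ $\nu^\pi$-a.s. Here the conditional distributions are over the future $h_{t:\infty}$ given $h_{<t}$, which is exactly the quantity in the statement.

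The main point to handle carefully is that $\zeta$ is a mixture of \emph{policies}, so one must confirm it behaves like a Bayesian mixture even though actions are conditioned on past percepts. Concretely, I must check that the policy factors $\pi(a_i\mid h_{<i})$ combine multiplicatively under the posterior-predictive definition of $\zeta$, in the same way environment factors do for $\xi$. I would verify this by induction on $t$, using the posterior-weight update that defines $\zeta$ and exactly mirroring the identity in \cref{def:mixture-return-predictor}. The product of the two mixtures then expands into the double sum displayed above, since the action factors depend only on $\pi'$ and the percept factors only on $\nu'$.

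Everything else is the measure-theoretic routine already used for \cref{thm:convergence_tv}: conditioning is well defined $\nu^\pi$-a.s. because $\nu^\pi(h_{<t})>0$ along $\nu^\pi$-typical paths, and absolute continuity transfers this to $\xi^\zeta(h_{<t})>0$.
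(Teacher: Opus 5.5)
Your proof is correct, but it is organized differently from the paper's. The paper goes through the intermediate measure $\xi^\pi$ and applies Blackwell--Dubins twice. The first application uses $\nu^\pi\ll\xi^\pi$ and gives $D(\xi^\pi,\nu^\pi\mid h_{<t})\to0$ $\nu^\pi$-a.s. The second uses $\xi^\pi\ll\xi^\zeta$ and gives $D(\xi^\zeta,\xi^\pi\mid h_{<t})\to0$ $\xi^\pi$-a.s. The paper then transfers this second statement to $\nu^\pi$-a.s.\ via $\nu^\pi\ll\xi^\pi$ and concludes with the triangle inequality for TV distance.

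You instead expand $\xi^\zeta$ as the double mixture $\sum_{\pi',\nu'}\omega(\pi')\,w(\nu')\,(\nu')^{\pi'}$ and read off the single dominance $\xi^\zeta\ge\omega(\pi)\,w(\nu)\,\nu^\pi$. One merging step then suffices, and you need neither the triangle inequality nor the null-set transfer.

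The paper's decomposition is more modular. It separates learning the environment from learning the self-model, and the intermediate fact that $\xi^\pi$ merges with $\nu^\pi$ is informative in its own right. However, its ``Similarly'' step relies without comment on exactly what you make explicit: the policy mixture factorizes over action sequences, so $\xi^\zeta\ge\omega(\pi)\,\xi^\pi$.

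One small technical remark concerns extending the cylinder-wise inequality to the whole $\sigma$-algebra. Use the monotone class theorem on the algebra of finite disjoint unions of cylinders, or bound $\nu^\pi(A)$ via countable covers of $A$ by cylinders. A bare $\pi$-$\lambda$ argument does not work directly, because an inequality $\mu\ge c\,\nu$ between two measures is not preserved under proper differences.
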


\begin{lemma}[Bounds on Q-value difference]
\label{thm:self-aixi-q-value-difference-bound}
For any two policies $\pi_1, \pi_2$ and two environments $\nu_1, \nu_2$,
\[
\left| Q_{\nu_1}^{\pi_1}(h_{<t},a_t) - Q_{\nu_2}^{\pi_2}(h_{<t},a_t) \right|
\leq
D\!\left(\nu_1^{\pi_1}, \nu_2^{\pi_2} \mid h_{<t}\, a_t\right)
\]
\end{lemma}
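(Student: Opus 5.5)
The plan is to write both Q-values as expectations of the same bounded function under the two conditional measures, and then use the definition of TV distance. By definition,
\[
Q^{\pi_j}_{\nu_j}(h_{<t},a_t) = \mathbb E^{\pi_j}_{\nu_j}\!\left[\ret_t \mid h_{<t} a_t\right],\qquad j=1,2,
\]
where $\ret_t$ is a measurable function of the outcome $h\in\Omega$ taking values in $[0,1]$. Write $P := \nu_1^{\pi_1}(\cdot \mid h_{<t} a_t)$ and $Q := \nu_2^{\pi_2}(\cdot\mid h_{<t} a_t)$ for the conditional measures on $\Omega$ (or on the continuation $(\mathcal A\times\mathcal E)^\infty$). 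Then the left-hand side of the lemma is $|\mathbb E_P[\ret_t]-\mathbb E_Q[\ret_t]|$, and the right-hand side is $D(P,Q)$. So it suffices to prove the standard fact that $|\mathbb E_P f - \mathbb E_Q f|\le D(P,Q)$ for every measurable $f:\Omega\to[0,1]$.

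For that fact I would use the layer-cake representation
\[
\mathbb E_P f = \int_0^1 P(f>s)\,ds ,
\]
and likewise for $Q$. This gives
\[
|\mathbb E_P f-\mathbb E_Q f| \le \int_0^1 |P(f>s)-Q(f>s)|\,ds .
\]
Each event $\{f>s\}$ is measurable, so each integrand is at most $\sup_E|P(E)-Q(E)| = D(P,Q)$. Integrating over an interval of length $1$ then yields the bound.

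An alternative route avoids layer-cake. Let $A$ be the Hahn set where $dP/d(P+Q)\ge dQ/d(P+Q)$. Then $\mathbb E_P f-\mathbb E_Q f=\int f\,d(P-Q)\le (P-Q)(A)\le D(P,Q)$, using $0\le f\le1$. The reverse inequality follows by symmetry.

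The only subtle step is making sure the conditioning is well-defined. Conditioning on $h_{<t}a_t$ requires $\nu_j^{\pi_j}(h_{<t}a_t)>0$. When that fails, I would adopt the usual convention that the conditional Q-value is defined via the environment's conditional kernels $\nu_j(\cdot\mid\cdot)$ together with the policies' continuations; these determine a probability measure on continuations regardless of whether the prefix has positive probability. The identical argument then applies to these measures.

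A second point to check is that the return $\ret_t$ depends only on the continuation. It is a measurable limit of the finite-horizon returns $\ret_{t,H}$, so the layer-cake or Hahn argument is legitimate on the product $\sigma$-algebra. Neither point poses a real obstacle.
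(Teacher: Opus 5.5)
Your proof is correct. Both $Q$-values are expectations of the $[0,1]$-valued return $\ret_t$ under the two conditional measures given $h_{<t}a_t$, and $|\mathbb E_P f-\mathbb E_Q f|\le D(P,Q)$ for $0\le f\le 1$; since $D$ is defined as $\sup_E|P(E)-Q(E)|$, the constant is exactly $1$. The paper gives no argument of its own and simply cites Leike's thesis (Lemma 4.17), which rests on the same bounded-expectation-under-TV fact, so your layer-cake or Hahn-set step supplies the standard fact the citation invokes.
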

\begin{proof}
See \citet[Lemma 4.17]{leike2016nonparametric}.
\end{proof}

\begin{restatable}[Average conditional TV bound]{lemma}{avgcondtv}
\label{thm:avg_cond_tv}
Let \(P\) and \(Q\) be probability measures on \(\mathcal X \times \mathcal Y\), where \(\mathcal X\) is finite. Let \(P_X\) and \(Q_X\) denote the marginals on \(\mathcal X\). Then
\[
\sum_{x \in \mathcal X} P_X(x)\,
D \bigl(P, Q \mid X=x \bigr)
\;\le\;
2D(P,Q).
\]
\end{restatable}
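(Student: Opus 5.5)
The plan is to reduce the statement to the partition inequality \cref{eq:partition-tv} applied to events of the form $\{X=x\}\times E$. Here $D(P,Q\mid X=x)$ means the TV distance between the conditional measures $P(\cdot\mid X=x)$ and $Q(\cdot\mid X=x)$. Write $P_x := P(\cdot\mid X=x)$ and $Q_x := Q(\cdot\mid X=x)$. Conditioning on $X=x$ fixes the first coordinate, so each conditional is effectively a measure on $\mathcal Y$. Only $x$ with $P_X(x)>0$ contribute to the sum. If $Q_X(x)=0$ for such an $x$, I take the convention $D(P_x,Q_x)\le 1$; the argument below uses only $D\le 1$, so nothing beyond that convention is required.

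Fix $x$ with $P_X(x)>0$ and a measurable $E\subseteq\mathcal Y$. First I add and subtract $Q_x(E)$ inside the scaled difference:
\[
P_X(x)\bigl(P_x(E)-Q_x(E)\bigr) = \bigl(P(\{x\}\times E) - Q(\{x\}\times E)\bigr) + Q_x(E)\bigl(Q_X(x)-P_X(x)\bigr).
\]
Since $0\le Q_x(E)\le 1$, this gives
\[
P_X(x)\,\bigl|P_x(E)-Q_x(E)\bigr| \le \bigl|P(\{x\}\times E)-Q(\{x\}\times E)\bigr| + \bigl|P_X(x)-Q_X(x)\bigr|.
\]
If $Q_X(x)=0$, the left side is at most $P_X(x)=|P_X(x)-Q_X(x)|$, so the same bound holds. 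Next I take the supremum over $E$, choosing $E=E_x$ to nearly attain $D(P_x,Q_x)$ for each $x$ separately; this is possible because $\mathcal X$ is finite. Summing over $x$ then yields
\[
\sum_x P_X(x)\,D(P_x,Q_x) \le \sum_x \bigl|P(\{x\}\times E_x)-Q(\{x\}\times E_x)\bigr| + \sum_x \bigl|P_X(x)-Q_X(x)\bigr| + \delta
\]
for an arbitrary slack $\delta>0$.

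The step I expect to be the main obstacle is obtaining the constant $2$ rather than $4$. Applying \cref{eq:partition-tv} to each sum separately gives $2D+2D=4D$, which is too weak. To avoid this, I will not split the terms but compare both sums against $\sum_x |P(\{x\}\times E_x)-Q(\{x\}\times E_x)|$ handled in one stroke. The idea is to choose $E_x$ with the sign convention that makes $P_x(E_x)\ge Q_x(E_x)$. Then split $\mathcal X$ into $A=\{x: P_X(x)\ge Q_X(x)\}$ and its complement.

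For $x\in A$, use the alternative decomposition
\[
P_X(x)\bigl(P_x-Q_x\bigr)(E_x) = \bigl(P-Q\bigr)(\{x\}\times E_x) - Q_x(E_x)\bigl(P_X(x)-Q_X(x)\bigr),
\]
whose second term is nonpositive, so it is bounded by $(P-Q)(\{x\}\times E_x)$. For $x\notin A$, use the complement event: $P_x(E_x)-Q_x(E_x)=Q_x(E_x^c)-P_x(E_x^c)$, and the symmetric decomposition with the roles of $P$ and $Q$ swapped gives the bound $(Q-P)(\{x\}\times E_x^c)$. After this, the total is $(P-Q)(F)+(Q-P)(G)$ with $F=\bigcup_{x\in A}\{x\}\times E_x$ and $G=\bigcup_{x\notin A}\{x\}\times E_x^c$. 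Each term is at most $D(P,Q)$, so the sum is at most $2D(P,Q)$. Letting $\delta\to0$ completes the argument. If this sign bookkeeping fails, the fallback is to apply \cref{eq:partition-tv} to the partition $\{\{x\}\times E_x,\ \{x\}\times E_x^c\}_x$ directly, using $|P_X(x)-Q_X(x)|\le |(P-Q)(\{x\}\times E_x)|+|(P-Q)(\{x\}\times E_x^c)|$. The delicate point there is the double counting, and the refined split above is designed to avoid it.
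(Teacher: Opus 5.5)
Your argument is correct. It is a more elementary, unrolled version of the paper's proof rather than the same proof. The paper introduces the hybrid measure $R(x,dy)=P_X(x)\,Q(dy\mid x)$ and asserts two identities: $D(P,R)=\sum_x P_X(x)\,D(P,Q\mid X=x)$ and $D(R,Q)=D(P_X,Q_X)$. It then finishes with the triangle inequality and the contraction of TV under $(x,y)\mapsto x$. Your add-and-subtract identity is exactly $P-R=(P-Q)+(Q-R)$ evaluated on rectangles, since $R(\{x\}\times E)=P_X(x)Q_x(E)$.

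The difference lies in how you handle the correction term $Q_x(E)\bigl(Q_X(x)-P_X(x)\bigr)$. The paper bounds it by $D(P_X,Q_X)\le D(P,Q)$. You instead discard it where it is nonpositive and pass to $E_x^c$ where it is not, which produces two explicit events $F,G$. Since $(Q-P)(\{x\}\times E_x^c)=(Q_X-P_X)(x)+(P-Q)(\{x\}\times E_x)$, your bound $(P-Q)(F)+(Q-P)(G)$ is at most $(P-Q)(E)+D(P_X,Q_X)$, where $E=\bigcup_x\{x\}\times E_x$. So you in fact recover the paper's intermediate bound $D(P,Q)+D(P_X,Q_X)$ while avoiding the contraction lemma. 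The paper's route is shorter and more conceptual but leaves its two identities at ``we can show''; yours checks everything directly from the definition of $D$.

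Two small points. First, for $x$ with $Q_X(x)=0<P_X(x)$ the refined step still needs some event. Taking $E_x=\mathcal Y$ works, because $x\in A$ and $P_X(x)\,D(P_x,Q_x)\le P_X(x)=(P-Q)(\{x\}\times\mathcal Y)$. Second, you can delete the fallback. It is unnecessary, and as sketched it reintroduces the double counting: bounding $|P_X(x)-Q_X(x)|$ by $|(P-Q)(\{x\}\times E_x)|+|(P-Q)(\{x\}\times E_x^c)|$ counts the first term twice, so it does not give the constant $2$ in general.
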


Now we can prove an analogue of \cref{thm:error-q-estimation} for $\pi_S$. Let
\[
\delta_Q (h_{<t} a_t)
:=
|
Q^\zeta_\xi (h_{<t} a_{t})
-
Q_\nu^\pi (h_{<t} a_{t})
|,
\]
and define $\mathcal T$ as in \cref{eq:T}.

\begin{restatable}[Convergence of Q-value prediction for $\eps$-greedy Self-AIXI]{lemma}{exploringselfaixierrorqestimation}
\label{thm:exploring-self-aixi-error-q-estimation}
Let $\pi = \pi_S$.
For every $\beta>0$, there exists a $\nu^\pi$-probability-one set $S \subseteq \Omega$ where, for every outcome $h\in S$, there exists $t_0$ such that: for $t\geq t_0$ and $m \geq t$,
\begin{align*}
\sum_{\substack{h'_{<m}\in \mathcal{T}, a'_m\in\mathcal A}}
\nu (e'_{t:m-1}  \mid  h_{<t} \parallel a'_{t:m-1})
\cdot
\delta_Q (h'_{<m} a'_m)&
\\
<
2\beta \left( \tau/{|\mathcal A|} \right)^{-(m-t+1)}&
.
\end{align*}
\end{restatable}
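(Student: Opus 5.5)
The plan is to reduce the weighted sum over hypothetical trajectories to a single conditional total variation distance between $\xi^\zeta$ and $\nu^\pi$ at the current history $h_{<t}$. \cref{thm:self-aixi-tv-convergence} then drives that distance to zero. The factor $(\tau/|\mathcal A|)^{-(m-t+1)}$ pays for the change of measure from the deterministic policy $\pi'$ to the on-policy measure $\nu^\pi$.

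Fix $\beta>0$ and let $S$ be the $\nu^\pi$-probability-one set on which $D(\xi^\zeta,\nu^\pi\mid h_{<t})\to 0$. Such an $S$ exists by \cref{thm:self-aixi-tv-convergence}, since $\pi=\pi_S$ is assumed to lie in $\mathcal P$ (the grain of truth for Self-AIXI). For $h\in S$, choose $t_0$ so that $D(\xi^\zeta,\nu^\pi\mid h_{<t})<\beta$ for all $t\ge t_0$. Now fix $m\ge t$. By \cref{thm:self-aixi-q-value-difference-bound},
\[
\delta_Q(h'_{<m}a'_m)\le D(\xi^\zeta,\nu^\pi\mid h'_{<m}a'_m).
\]
So it suffices to bound
\[
\sum_{h'_{<m}\in\mathcal T,\,a'_m}\nu(e'_{t:m-1}\mid h_{<t}\parallel a'_{t:m-1})\,D(\xi^\zeta,\nu^\pi\mid h'_{<m}a'_m).
\]

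The change of measure uses the $\eps$-greedy structure. Every action has probability at least $\tau/|\mathcal A|$ under $\pi_S$. Hence for each trajectory in $\mathcal T$ extended by any $a'_m$,
\[
\nu^\pi(e'_{t:m-1},a'_{t:m}\mid h_{<t})\ \ge\ (\tau/|\mathcal A|)^{m-t+1}\,\nu(e'_{t:m-1}\mid h_{<t}\parallel a'_{t:m-1}).
\]
This inequality covers the $m-t$ actions chosen by $\pi'$ plus the final free action $a'_m$. Replacing the $\nu$-weights by $\nu^\pi$-weights therefore costs a factor $(\tau/|\mathcal A|)^{-(m-t+1)}$. Enlarging the sum to all extensions $x=h'_{t:m-1}a'_m$ only increases it, because the summands are nonnegative. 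The sum then becomes
\[
\sum_x \nu^\pi(x\mid h_{<t})\,D(\xi^\zeta,\nu^\pi\mid h_{<t}x).
\]

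Next apply \cref{thm:avg_cond_tv} with $P=\nu^\pi(\cdot\mid h_{<t})$, $Q=\xi^\zeta(\cdot\mid h_{<t})$ on the product space, where $\mathcal X$ is the finite set of length-$(m-t)$ extensions followed by an action. Its conditionals are exactly the conditional measures given $h_{<t}x$. This bounds the average by $2D(\xi^\zeta,\nu^\pi\mid h_{<t})<2\beta$. The result is the claimed bound $2\beta(\tau/|\mathcal A|)^{-(m-t+1)}$ for every $m\ge t$ simultaneously. Unlike \cref{thm:error-return-predictor}, no upper limit on $m$ is needed, because there is no augmentation buffer.

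I expect the main obstacle to be making the conditioning rigorous. The measures must be identified correctly: $D(\cdot\mid h'_{<m}a'_m)$ concerns futures, while \cref{thm:avg_cond_tv} treats the joint measure on $\mathcal X\times\mathcal Y$ whose conditional on $X=x$ matches it. The argument also needs $\nu^\pi(x\mid h_{<t})>0$ wherever the change of measure is used. Positivity of the action probabilities handles that, and zero-probability percept paths contribute nothing on either side.
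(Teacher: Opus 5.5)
Your proposal is correct and follows the paper's proof essentially step for step. You use the same ingredients: \cref{thm:self-aixi-tv-convergence} to pick $S$ and $t_0$, \cref{thm:self-aixi-q-value-difference-bound} to replace $\delta_Q$ by a conditional TV distance, \cref{thm:avg_cond_tv} to collapse the weighted average to $2D(\nu^\pi,\xi^\zeta\mid h_{<t})<2\beta$, and the $\tau/|\mathcal A|$ lower bound on action probabilities for the change of measure, only applied in a slightly different order. The two points the paper leaves tacit are also correct as you state them: that the $\mathcal T$-restricted sum is dominated by the sum over all extensions, and that \cref{thm:self-aixi-tv-convergence} requires $\pi_S\in\mathcal P$.
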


We can then show analogues of \cref{thm:error-one-step,thm:one-step-and-global,thm:aiqi-eps-optimal} for $\eps$-greedy Self-AIXI, with our proof for AIQI repeated almost verbatim.
Note that the error term $2\tau$ would be introduced by the analogue of \cref{thm:error-one-step}, and thus $\pi_S$ can only be asymptotically $\eps$-optimal, where $\eps$ depends on the exploration rate $\tau$.

\section{Related Work}
\label{sec:related}

Compress and Control \citep[CNC;][]{veness2015compress} is an algorithm that shares many similarities to AIQI such as return prediction, but has only been analyzed for fully observable MDPs.
Both CNC and AIQI are on-policy Monte Carlo control methods with a distributional flavour \citep{bellemare2017distributional}, since they predict the return distribution instead of the average return.
CNC uses a model-based approach for MC evaluation, unlike AIQI.

Feature reinforcement learning \citep{daswani2015generic} converts a general environment to a finite state MDP through a learned feature map, and then performs planning \citep{hutter2009feature} or model-free RL \citep{nguyen2011feature,daswani2013q} inside the MDP.
\citet{hutter2014extreme} shows that any general environment can be mapped to an MDP where the optimal policy is still $\eps$-optimal in the general environment.
However, these theoretical results do not tell us about the learning of such mappings, and results that are concerned with the learning \citep{sunehag2010consistency} do not concern themselves with control.


Self-AIXI \citep{catt2023self} is a variant of AIXI that learns both a model of the environment and a model of itself. At every step, it uses both models to estimate its own action-values, by simulating the model of itself interacting with the model of the environment.
It then chooses the action with the highest estimated value.
Although Self-AIXI gets rid of planning, it is clearly model-based in that it learns and uses an environment model.
Our work was inspired by Self-AIXI, and can be seen as a modification that further gets rid of environment models by directly predicting action-values.

Finally, Optimal Direct Policy Search \citep[ODPS;][]{glasmachers2011optimal} is a policy search method that enumerates computable policies (or more generally policies in some arbitrary class) and evaluates them by repeated interaction with the environment. Like AIQI, ODPS is model-free and avoids planning. However, ODPS assumes an episodic POMDP setting with bounded episode lengths, whereas our results apply to general environments.

\section{Discussion}
\label{sec:discussion}

\subsection{Proof Technique}

To prove that AIQI is strong asymptotically $\eps$-optimal, we first showed that the action-value estimates converge to the true $\nu^\pi$ action-values (\cref{thm:error-q-estimation}), and that choosing the action with the highest (estimated) $\nu^\pi$ value makes the policy converge to optimal actions (\cref{thm:aiqi-eps-optimal}).
There are two key novelties in our proofs compared to existing work in universal AI.
First, we had to deal with the problem of delayed feedback using periodic augmentations. This was not needed in previous works which model policies or environments, since actions and percepts are never delayed, unlike $H$-step returns.
Second, we introduced a chain of inequalities using \cref{thm:one-step-and-global}, to make the $\delta_\infty$ term in the upper bound vanish with $\gamma^L$.
This allowed us to bypass ad-hoc assumptions such as off-policy sensibility
\citep[][Theorem~16]{catt2023self}.
In fact, we show in \cref{sec:self-aixi} that our technique can be applied to Self-AIXI to show analogous results without off-policy sensibility.
Overall, our work provides a blueprint for the analysis of policy iteration algorithms in general environments.




\subsection{Continual Reinforcement Learning}

The class of general environments is equivalent to the class of infinite-state POMDPs, which might seem unnecessarily large.
However, there are nontrivial settings that cannot be captured by finite-state POMDPs, one such example being continual RL \citep{ring1994continual}.
Continual RL studies environments that never stop changing \citep{khetarpal2022towards}, and where agents must keep learning in order to act optimally \citep{abel2023definition}.
The theory of universal AI sets aside problems with continual \emph{learning}, e.g., catastrophic forgetting, by assuming a perfect Bayesian learner with a broad prior.
This lets it focus on a different question:
how to design principled \emph{objectives} for continual RL, assuming continual learning is approximately solved.
Our work shows that learning distributional action-values is one such objective.



\subsection{Agent Foundations}

The theory of universal AI is central to agent foundations research \citep{soares2017agent}, since it provides formal models of powerful goal-directed agents under minimal assumptions about the environment.
As such, it has been used to analyze important issues in AI safety in a theoretically grounded way
\citep{everitt2016self,orseau2016safely,majha2019categorizing,everitt2018alignment,cohen2021curiosity}.
Our results show that asymptotic optimality in general reinforcement learning does not require an explicit world model.
We therefore view AIQI as a new theoretical object that may help clarify which ingredients of general-agent theory matter for safety.

Although our paper set out to primarily show that AIQI and AIXI are similar, they might differ in other ways, e.g., the off-policy behavior as we discussed in \cref{sec:off-policy}.
Another interesting difference is that, unlike AIXI, AIQI needs to deal with self-reference (\cref{def:grain-of-truth}), which is in turn related to embedded agency \citep{demski2019embedded,orseau2012space}.

\section{Conclusion}
\label{sec:conclusion}

We introduced Universal AI with Q-Induction (AIQI), the first model-free universal agent. AIQI performs universal induction over distributional action-value functions---rather than over environments or policies as in prior work.
It is essentially a Monte Carlo control algorithm, the most primitive form of model-free RL.
We prove that AIQI is strong asymptotically $\eps$-optimal, asymptotically $\eps$-Bayes-optimal, and, as expected from an on-policy MC control algorithm, not self-optimizing.
Remarkably, the only assumption we make is grain of truth, which is standard in universal AI and has a known resolution via reflective oracles.

Our findings significantly broaden the known landscape of universal agents.
We hope this provides both a theoretical foundation for understanding MC control with sequence models,
and a blueprint for the analysis of other policy iteration algorithms in general environments.
Promising directions for future work include
analyzing the properties of AIQI with exploration strategies that are more sophisticated than $\eps$-greedy, such as knowledge-seeking agents \citep{orseau2014universal},
investigating similarities/differences between AIQI and AIXI,
and exploring other possible forms of model-free universal AI such as those that involve policy search.


\begin{acknowledgements}
This work was supported by Institute for Information \& communications Technology Planning \& Evaluation(IITP) grant funded by the Korea government(MSIT) (RS-2019-II190075, Artificial Intelligence Graduate School Program(KAIST); RS-2024-00509279, Fundamental Research in Artificial Intelligence).

We thank Cole Wyeth for carefully reading the paper and discussing ways to improve it. We also thank the Universal Algorithmic Intelligence community for inviting us for a talk and providing insightful questions/feedback.
\end{acknowledgements}

\bibliography{bibliography}

@book{hutter2005universal,
  title={Universal artificial intelligence: Sequential decisions based on algorithmic probability},
  author={Hutter, Marcus},
  volume={300},
  year={2005},
  publisher={Springer}
}

@book{ring1994continual,
  title={Continual learning in reinforcement environments},
  author={Ring, Mark Bishop},
  year={1994},
  publisher={The University of Texas at Austin}
}

@article{abel2023definition,
  title={A definition of continual reinforcement learning},
  author={Abel, David and Barreto, Andr{\'e} and Van Roy, Benjamin and Precup, Doina and van Hasselt, Hado P and Singh, Satinder},
  journal={Advances in Neural Information Processing Systems},
  volume={36},
  pages={50377--50407},
  year={2023}
}

@article{khetarpal2022towards,
  title={Towards continual reinforcement learning: A review and perspectives},
  author={Khetarpal, Khimya and Riemer, Matthew and Rish, Irina and Precup, Doina},
  journal={Journal of Artificial Intelligence Research},
  volume={75},
  pages={1401--1476},
  year={2022}
}

@inproceedings{orseau2012memory,
  title={Memory issues of intelligent agents},
  author={Orseau, Laurent and Ring, Mark},
  booktitle={International Conference on Artificial General Intelligence},
  pages={219--231},
  year={2012},
  organization={Springer}
}

@inproceedings{everitt2016self,
  title={Self-modification of policy and utility function in rational agents},
  author={Everitt, Tom and Filan, Daniel and Daswani, Mayank and Hutter, Marcus},
  booktitle={International conference on artificial general intelligence},
  pages={1--11},
  year={2016},
  organization={Springer}
}

@article{solomonoff1964formal,
  title={A formal theory of inductive inference. Part I},
  author={Solomonoff, Ray J},
  journal={Information and control},
  volume={7},
  number={1},
  pages={1--22},
  year={1964},
  publisher={Elsevier}
}

@article{willems1995context,
  title={The context-tree weighting method: basic properties},
  author={Willems, FMJ and Shtarkov, Yu M and Tjalkens, TJ},
  journal={IEEE Transactions on Information Theory},
  volume={41},
  number={3},
  pages={653--664},
  year={1995},
  publisher={Institute of Electrical and Electronics Engineers}
}

@article{solomonoff1964formal2,
  title={A formal theory of inductive inference. Part II},
  author={Solomonoff, Ray J},
  journal={Information and control},
  volume={7},
  number={2},
  pages={224--254},
  year={1964},
  publisher={Academic Press}
}

@article{legg2007universal,
  title={Universal intelligence: A definition of machine intelligence},
  author={Legg, Shane and Hutter, Marcus},
  journal={Minds and machines},
  volume={17},
  number={4},
  pages={391--444},
  year={2007},
  publisher={Springer}
}

@article{williams1992simple,
  title={Simple statistical gradient-following algorithms for connectionist reinforcement learning},
  author={Williams, Ronald J},
  journal={Machine learning},
  volume={8},
  number={3},
  pages={229--256},
  year={1992},
  publisher={Springer}
}

@article{catt2023self,
  title={Self-predictive universal AI},
  author={Catt, Elliot and Grau-Moya, Jordi and Hutter, Marcus and Aitchison, Matthew and Genewein, Tim and Deletang, Gregoire and Li, Kevin and Veness, Joel},
  journal={Advances in Neural Information Processing Systems},
  volume={36},
  pages={27181--27198},
  year={2023}
}

@article{cohen2019strongly,
  title={A strongly asymptotically optimal agent in general environments},
  author={Cohen, Michael K and Catt, Elliot and Hutter, Marcus},
  journal={arXiv preprint arXiv:1903.01021},
  year={2019}
}

@article{leike2016thompson,
  title={Thompson sampling is asymptotically optimal in general environments},
  author={Leike, Jan and Lattimore, Tor and Orseau, Laurent and Hutter, Marcus},
  journal={arXiv preprint arXiv:1602.07905},
  year={2016}
}

@inproceedings{lattimore2014bayesian,
  title={Bayesian reinforcement learning with exploration},
  author={Lattimore, Tor and Hutter, Marcus},
  booktitle={International conference on algorithmic learning theory},
  pages={170--184},
  year={2014},
  organization={Springer}
}

@article{orseau2014universal,
  title={Universal knowledge-seeking agents},
  author={Orseau, Laurent},
  journal={Theoretical Computer Science},
  volume={519},
  pages={127--139},
  year={2014},
  publisher={Elsevier}
}

@article{hutter2009feature,
  title={Feature Reinforcement Learning: Part I. Unstructured MDPs},
  author={Hutter, Marcus},
  journal={arXiv preprint arXiv:0906.1713},
  year={2009}
}

@article{konda1999actor,
  title={Actor-critic algorithms},
  author={Konda, Vijay and Tsitsiklis, John},
  journal={Advances in neural information processing systems},
  volume={12},
  year={1999}
}

@inproceedings{veness2015compress,
  title={Compress and control},
  author={Veness, Joel and Bellemare, Marc and Hutter, Marcus and Chua, Alvin and Desjardins, Guillaume},
  booktitle={Proceedings of the AAAI Conference on Artificial Intelligence},
  volume={29},
  year={2015}
}

@book{rummery1994line,
  title={On-line Q-learning using connectionist systems},
  author={Rummery, Gavin A and Niranjan, Mahesan},
  volume={37},
  year={1994},
  publisher={University of Cambridge, Department of Engineering Cambridge, UK}
}

@article{veness2011monte,
  title={A monte-carlo aixi approximation},
  author={Veness, Joel and Ng, Kee Siong and Hutter, Marcus and Uther, William and Silver, David},
  journal={Journal of Artificial Intelligence Research},
  volume={40},
  pages={95--142},
  year={2011}
}

@book{leike2016nonparametric,
  title={Nonparametric general reinforcement learning},
  author={Leike, Jan},
  year={2016},
  publisher={The Australian National University (Australia)}
}

@book{hutter2024introduction,
  title={An introduction to universal artificial intelligence},
  author={Hutter, Marcus and Quarel, David and Catt, Elliot},
  year={2024},
  publisher={Chapman and Hall/CRC}
}

@incollection{everitt2018universal,
  title={Universal artificial intelligence: Practical agents and fundamental challenges},
  author={Everitt, Tom and Hutter, Marcus},
  booktitle={Foundations of trusted autonomy},
  pages={15--46},
  year={2018},
  publisher={Springer}
}

@phdthesis{watkins1989learning,
  title        = {Learning from delayed rewards},
  author       = {Watkins, Christopher John Cornish Hellaby},
  year         = {1989},
  school       = {King's College, University of Cambridge},
  address      = {Cambridge, United Kingdom}
}

@phdthesis{catt2022foundations,
  title        = {On the Foundations of Universal Artificial Intelligence},
  author       = {Catt, Elliot},
  year         = {2022},
  school       = {The Australian National University},
  address      = {Canberra, Australia}
}

@article{leike2016formal,
  title={A formal solution to the grain of truth problem},
  author={Leike, Jan and Taylor, Jessica and Fallenstein, Benya},
  journal={arXiv preprint arXiv:1609.05058},
  year={2016}
}

@article{meulemans2025embedded,
  title={Embedded Universal Predictive Intelligence: a coherent framework for multi-agent learning},
  author={Meulemans, Alexander and Nasser, Rajai and Wo{\l}czyk, Maciej and Weis, Marissa A and Kobayashi, Seijin and Richards, Blake and Lajoie, Guillaume and Steger, Angelika and Hutter, Marcus and Manyika, James and others},
  journal={arXiv preprint arXiv:2511.22226},
  year={2025}
}

@book{von1944theory,
  title={Theory of Games and Economic Behavior},
  author={Von Neumann, John and Morgenstern, Oskar},
  year={1944},
  publisher={Princeton University Press},
  address={Princeton}
}

@book{bellman1957dynamic,
  title={Dynamic Programming},
  author={Bellman, Richard Ernest},
  year={1957},
  publisher={Courier Dover Publications},
  address={Mineola, NY}
}

@inproceedings{lattimore2011asymptotically,
  title={Asymptotically optimal agents},
  author={Lattimore, Tor and Hutter, Marcus},
  booktitle={International Conference on Algorithmic Learning Theory},
  pages={368--382},
  year={2011},
  organization={Springer}
}

@article{blackwell1962merging,
  title={Merging of opinions with increasing information},
  author={Blackwell, David and Dubins, Lester},
  journal={The Annals of Mathematical Statistics},
  volume={33},
  number={3},
  pages={882--886},
  year={1962},
  publisher={JSTOR}
}

@article{kalai1993rational,
  title={Rational learning leads to Nash equilibrium},
  author={Kalai, Ehud and Lehrer, Ehud},
  journal={Econometrica: Journal of the Econometric Society},
  pages={1019--1045},
  year={1993},
  publisher={JSTOR}
}

@inproceedings{hutter2002self,
  title={Self-optimizing and Pareto-optimal policies in general environments based on Bayes-mixtures},
  author={Hutter, Marcus},
  booktitle={International Conference on Computational Learning Theory},
  pages={364--379},
  year={2002},
  organization={Springer}
}

@inproceedings{nguyen2011feature,
  title={Feature reinforcement learning in practice},
  author={Nguyen, Phuong and Sunehag, Peter and Hutter, Marcus},
  booktitle={European Workshop on Reinforcement Learning},
  pages={66--77},
  year={2011},
  organization={Springer}
}

@inproceedings{hutter2014extreme,
  title={Extreme state aggregation beyond MDPs},
  author={Hutter, Marcus},
  booktitle={International Conference on Algorithmic Learning Theory},
  pages={185--199},
  year={2014},
  organization={Springer}
}

@inproceedings{daswani2013q,
  title={Q-learning for history-based reinforcement learning},
  author={Daswani, Mayank and Sunehag, Peter and Hutter, Marcus},
  booktitle={Asian Conference on Machine Learning},
  pages={213--228},
  year={2013},
  organization={PMLR}
}

@phdthesis{daswani2015generic,
  title={Generic Reinforcement Learning Beyond Small MDPs},
  author={Daswani, Mayank},
  year={2015},
  school={Australian National University}
}

@inproceedings{fallenstein2015reflective,
  title={Reflective oracles: A foundation for game theory in artificial intelligence},
  author={Fallenstein, Benja and Taylor, Jessica and Christiano, Paul F},
  booktitle={International Workshop on Logic, Rationality and Interaction},
  pages={411--415},
  year={2015},
  organization={Springer}
}

@inproceedings{bellemare2017distributional,
  title={A distributional perspective on reinforcement learning},
  author={Bellemare, Marc G and Dabney, Will and Munos, R{\'e}mi},
  booktitle={International conference on machine learning},
  pages={449--458},
  year={2017},
  organization={Pmlr}
}

@inproceedings{sunehag2010consistency,
  title={Consistency of feature Markov processes},
  author={Sunehag, Peter and Hutter, Marcus},
  booktitle={International Conference on Algorithmic Learning Theory},
  pages={360--374},
  year={2010},
  organization={Springer}
}

@phdthesis{lattimore2014theory,
  title={Theory of General Reinforcement Learning},
  author={Lattimore, Tor},
  year={2014},
  school={Australian National University}
}

@inproceedings{coulom2006efficient,
  title={Efficient selectivity and backup operators in Monte-Carlo tree search},
  author={Coulom, R{\'e}mi},
  booktitle={International conference on computers and games},
  pages={72--83},
  year={2006},
  organization={Springer}
}

@article{wyeth2025limit,
  title={Limit-Computable Grains of Truth for Arbitrary Computable Extensive-Form (Un) Known Games},
  author={Wyeth, Cole and Hutter, Marcus and Leike, Jan and Taylor, Jessica},
  journal={arXiv preprint arXiv:2508.16245},
  year={2025}
}

@article{cohen2021curiosity,
  title={Curiosity killed or incapacitated the cat and the asymptotically optimal agent},
  author={Cohen, Michael K and Catt, Elliot and Hutter, Marcus},
  journal={IEEE Journal on Selected Areas in Information Theory},
  volume={2},
  number={2},
  pages={665--677},
  year={2021},
  publisher={IEEE}
}

@inproceedings{orseau2016safely,
  title={Safely interruptible agents},
  author={Orseau, Laurent and Armstrong, Stuart},
  booktitle={Proceedings of the Thirty-Second Conference on Uncertainty in Artificial Intelligence},
  pages={557--566},
  year={2016}
}

@article{majha2019categorizing,
  title={Categorizing Wireheading in Partially Embedded Agents},
  author={Majha, Arushi and Sarkar, Sayan and Zagami, Davide},
  journal={arXiv preprint arXiv:1906.09136},
  year={2019}
}

@article{everitt2018alignment,
  title={The alignment problem for Bayesian history-based reinforcement learners},
  author={Everitt, Tom and Hutter, Marcus},
  journal={Under submission},
  year={2018}
}

@article{demski2019embedded,
  title={Embedded agency},
  author={Demski, Abram and Garrabrant, Scott},
  journal={arXiv preprint arXiv:1902.09469},
  year={2019}
}

@incollection{soares2017agent,
  title={Agent foundations for aligning machine intelligence with human interests: a technical research agenda},
  author={Soares, Nate and Fallenstein, Benya},
  booktitle={The technological singularity: Managing the journey},
  pages={103--125},
  year={2017},
  publisher={Springer}
}

@inproceedings{orseau2012space,
  title={Space-time embedded intelligence},
  author={Orseau, Laurent and Ring, Mark},
  booktitle={International Conference on Artificial General Intelligence},
  pages={209--218},
  year={2012},
  organization={Springer}
}

@inproceedings{glasmachers2011optimal,
  title={Optimal direct policy search},
  author={Glasmachers, Tobias and Schmidhuber, J{\"u}rgen},
  booktitle={International Conference on Artificial General Intelligence},
  pages={52--61},
  year={2011},
  organization={Springer}
}

@misc{wyeth2025unbounded,
  author       = {Wyeth, Cole},
  title        = {Unbounded Embedded Agency: {AEDT} w.r.t. {rOSI}},
  year         = {2025},
  month        = jul,
  day          = {20},
  howpublished = {\url{https://www.lesswrong.com/posts/B6gumHyuxzR5yn5tH/unbounded-embedded-agency-aedt-w-r-t-rosi}},
  note         = {LessWrong blog post. Accessed: 2026-06-02}
}

\newpage

\onecolumn

\title{\mytitle\\(Supplementary Material)}
\maketitle

\appendix

\section{Notation}
\label{sec:notation}

\begin{table}[ht]
    \centering
    \caption{Summary of Notations}
    \label{tab:notations}
    \begin{tabular}{@{}ll@{}}
        \toprule
        \textbf{Symbol} & \textbf{Description} \\
        \midrule
        $\epsilon$ & The empty string, e.g., initial history $h_{<1}$ \\
        $\mathbb{N}$ & Set of natural numbers $\{0, 1, 2, \dots\}$ \\
        $\mathbb{Z}_+$ & Set of positive integers $\{1, 2, 3, \dots\}$ \\
        $\mathcal{A}, \mathcal{O}, \mathcal{R}$ & Sets of actions, observations, and rewards \\
        $\mathcal{E}$ & Percept space, defined as $\mathcal{O} \times \mathcal{R}$ \\
        $\mathcal{H}$ & The set of all finite histories $(\mathcal{A} \times \mathcal{E})^\ast $ \\
        $h_{<t}$ & History sequence $a_1 e_1 \dots a_{t-1} e_{t-1}$ \\
        $\pi, \nu$ & Policy $\pi: \mathcal{H} \to \Delta\mathcal{A}$ and environment $\nu: \mathcal{H} \times \mathcal{A} \to \Delta\mathcal{E}$ \\
        $V^\pi_\nu, Q^\pi_\nu$ & Value function and Q-value function \\
        $H(\eta)$ & $\eta$-effective horizon \\
        $M$             & Discretization level for returns \\
        $\mathcal{Z}$   & Alphabet of discretized returns $\{0, \frac{1}{M}, \dots, \frac{M-1}{M}\}$ \\
        $z_t$           & $M$-discretized $H$-step return at time $t$ \\
        $\tilde z_t$           & Augmented return at time $t$ in an augmented outcome \\
        $N$             & Augmentation period ($N \geq H$) \\
        $n$             & Phase of periodic augmentation ($n \in \{0, \dots, N-1\}$) \\
        $\tilde{\Omega}^{(n)}, \tilde{\mathcal{H}}^{(n)}$ & Phase $n$ augmented sample space and histories \\
        $\text{aug}_n$  & Mapping from outcomes to phase $n$-augmented outcomes \\
        $\psi_n$        & Mixture return-predictor for phase $n$ \\
        $\psi$          & Unified predictor that selects $\psi_n$ where $n = t \bmod N$ \\
        $\hat{Q}(h_{<t}, a_t)$ & AIQI estimated Q-value (expected discretized return) \\
        $\tau$          & Exploration probability \\
        $D(P, Q)$       & Total Variation (TV) distance between measures $P$ and $Q$ \\
        $L$             & Effective lookahead length ($N-H+1$) \\
        \bottomrule
    \end{tabular}
\end{table}

\section{Proofs}
\label{sec:proofs}

\convergencetv*
\begin{proof}
Recall that
\[
\lim_{t\to\infty} D(q_n, \tilde \nu^\pi _n  \mid  \tilde h\nth_{<t}) = 0,
\quad
\tilde \nu^\pi_n \text{-a.s.}
\]
In other words, there exists an event $\tilde S_n \subseteq \tilde \Omega\nth $, with $\tilde \nu^\pi_n (\tilde S_n)=1$, where $D(q_n, \tilde \nu^\pi _n  \mid  \tilde h\nth_{<t})$ converges to $0$ pointwise.
Let $F$ be the set of ``faulty'' outcomes in $\tilde S_n$ whose augmented returns $\tilde z_{i\equiv n}$ do not agree with the true returns $z_{i\equiv n}$.
Recall the definition of  $\tilde \nu^\pi_n$ as the pushforward of $\nu^\pi$ by $\aug_n$.
Since $F$ is disjoint from the image of $\aug_n$, we get $\tilde \nu^\pi_n (F)=0$.

Let $S_n$ be the preimage of ${\tilde S_n} - F$ under $\aug_n$. Then, by the injectivity of $\aug_n$, we see that $\nu^\pi(S_n) = \tilde\nu^\pi_n(\tilde S_n - F) = \tilde \nu^\pi_n (\tilde S_n)=1$.
Since pointwise convergence on $\tilde S_n$ implies pointwise convergence on the subset $\tilde S_n - F$, we see that $D(q_n, \tilde \nu^\pi _n  \mid  {{\aug_n}(h)}_{<t})$ converges pointwise to $0$ on $S_n$.
Construct such $S_n$ for every $n$, and let $S = \bigcap_n S_n$. Then, $\nu^\pi (S) =1$, and $D(q_n, \tilde \nu^\pi _n  \mid  {{\aug_n}(h)}_{<t})$ converges pointwise to $0$ on $S$ for all $n$.
\end{proof}

\errorreturnpredictor*
\begin{proof}
Fix some $\beta > 0$. By \cref{thm:convergence_tv}, we can choose some $S \subseteq \Omega $ with $\nu^\pi (S) =1$ such that $\forall h\in S$, $\exists t_0, \forall t\geq t_0, \forall n\in[0,N-1]$,
\begin{equation}
\label{eq:error-return-predictor-1}
D(q_n, \tilde \nu^\pi _n  \mid  {{\aug_n}(h)}_{<t}) < \beta
.
\end{equation}
Consider some $h \in S$ and $t\geq t_0$. For all $m\in [t, t{+}N{-}H]$ and $n=m\mod N$, we see that \cref{eq:condition_aug} is satisfied for both $t$ and $m$, so we can apply $\aug_n$ on histories of lengths $t-1$ and $m-1$.
Then,
\begin{align}
& \hspace{11pt} \sum_{\substack{h'_{<m} \in \mathcal T,\, a'_m \in \mathcal A, \\ \tilde z_{m}\in\mathcal Z}}
\nu^\pi \left( h'_{t:m-1} a'_m  \mid  h_{<t} \right)
\cdot
\big|
\psi (\tilde z_m  \mid  h'_{<m} a'_m)
-
\nu^\pi (\tilde z_m  \mid  h'_{<m} a'_m)
\big|
\nonumber\\
& \labelrel{=}{rel:errret:unified}
\sum_{\substack{h'_{<m} \in \mathcal T,\, a'_m \in \mathcal A, \\ \tilde z_{m}\in\mathcal Z}}
\nu^\pi \left( h'_{t:m-1} a'_m  \mid  h_{<t} \right)
\cdot
\big|
\psi_n (\tilde z_m  \mid  {\aug_n}(h'_{<m})\, a'_m)
-
\tilde \nu^\pi_n (\tilde z_m  \mid  {\aug_n}(h'_{<m})\, a'_m)
\big|
\nonumber\\
& \labelrel{=}{rel:errret:pushforward}
\sum_{\substack{h'_{<m} \in \mathcal T,\, a'_m \in \mathcal A, \\ \tilde z_{m}\in\mathcal Z}}
\tilde\nu_n^\pi \left( h'_{t:m-1} a'_m  \mid  {\aug_n}(h_{<t}) \right)
\cdot
\big|
\psi_n (\tilde z_m  \mid  {\aug_n}(h_{<t})\, h'_{t:m-1} a'_m)
-
\tilde \nu^\pi_n (\tilde z_m  \mid  {\aug_n}(h_{<t})\, h'_{t:m-1} a'_m)
\big|
\nonumber\\
& \labelrel{=}{rel:errret:joint}
\sum_{\substack{h'_{<m} \in \mathcal T,\, a'_m \in \mathcal A, \\ \tilde z_{m}\in\mathcal Z}}
\big |
q_n \left( h'_{t:m-1} a'_m \tilde z_m  \mid  {\aug_n}(h_{<t}) \right)
-
\tilde \nu^\pi_n \left( h'_{t:m-1} a'_m \tilde z_m  \mid  {\aug_n}(h_{<t}) \right)
\big |
\nonumber\\
& \labelrel{\leq}{rel:errret:partitiontv}
\hspace{2pt} 2 \cdot D(q_n, \tilde \nu^\pi _n  \mid  {{\aug_n}(h)}_{<t}).
\label{eq:error-return-predictor-2}
\end{align}

Equality \eqref{rel:errret:unified} is just rewriting the unified predictor $\psi$ as the phase $n$ predictor $\psi_n$ (and likewise rewriting the true conditional using $\tilde\nu_n^\pi$) via the definitions of $\psi$ and $\tilde\nu_n^\pi$.
Equality \eqref{rel:errret:pushforward} uses that $\aug_n(h_{<t})$ is a deterministic function of $h_{<t}$, and thus conditioning under $h_{<t}$ is the same as conditioning under $\aug_n(h_{<t})$.
Equality \eqref{rel:errret:joint} uses the definitions of $q_n$ and $\tilde \nu^\pi_n$, noting that there are no augmented returns between indices $t$ and $m$ other than $\tilde z_m$.
Inequality \eqref{rel:errret:partitiontv} is \cref{eq:partition-tv} applied to the disjoint cylinder sets formed by $h'_{<m} a'_m \tilde z_m$.

By the definition of AIQI, each action in $a'_{t:m}$ receives a probability of at least $\tau/|\mathcal A|$ from the AIQI policy $\pi$. Thus,
\begin{align}
\nu^\pi \left( h'_{t:m-1} a'_m  \mid  h_{<t} \right)
&= \prod_{i=t}^m \pi (a'_i  \mid  h'_{<i}) \prod_{i=t}^{m-1} \nu( e'_i  \mid  h'_{<i} a'_i)
\nonumber \\
& \geq
\left( \tau/|\mathcal A| \right)^{m-t+1}
\nu (e'_{t:m-1}  \mid  h_{<t} \parallel a'_{t:m-1}).
\label{eq:error-return-predictor-3}
\end{align}
By combining \cref{eq:error-return-predictor-1,eq:error-return-predictor-2,eq:error-return-predictor-3}, we obtain the desired result.
\end{proof}

\errorqestimation*
\begin{proof}
Let us define the lower-approximate Q-value as
\[
\underline{Q}_\nu^\pi (h_{<t},a_t)
:=
\mathbb E_{\nu^\pi} [z_t  \mid  h_{<t}a_t]
=
\sum_{z_t \in \mathcal Z}
z_t \cdot \nu^\pi (z_t  \mid  h_{<t},a_t)
.
\]
Recall that the $H$-step Q-value is
$
Q_{\nu,H}^{\pi} (h_{<t},a_t) = \mathbb E_{\nu^\pi} [\ret_{t,H}  \mid  h_{<t}a_t].
$
Since $|z_t - \ret_{t,H}|<1/M$, we see that $| \underline{Q}_\nu^\pi(\cdot) - Q_{\nu,H}^{\pi}(\cdot)|<1/M$.
Since we set $H$ to be the $\eta$-effective horizon $H(\eta)$, $|\ret_{t,H} - \ret_t| \leq \eta$, so that $|Q_{\nu,H}^{\pi}(\cdot) - Q_\nu^\pi(\cdot)|\leq\eta$.
Thus we obtain
$
| \underline{Q}_\nu^\pi (\cdot) - Q_\nu^\pi(\cdot) | < M^{-1} + \eta
$, and
\begin{equation}
\sum_{h'_{<m}\in \mathcal{T}}
\nu (e'_{t:m-1}  \mid  h_{<t} \parallel a'_{t:m-1})
\cdot
|
\underline{Q}^\pi_\nu (h'_{<m} a'_{m})
-
Q_\nu^\pi (h'_{<m} a'_{m})
|
<
M^{-1} + \eta
,
\label{eq:error-q-estimation-1}
\end{equation}
where we used \cref{eq:nu-sum-one}.

Recall the definition of $\hat Q$ in \cref{def:aiqi}.
\begin{align}
& \hspace{1.15em} \sum_{\substack{h'_{<m}\in \mathcal{T}, \\ a'_m\in\mathcal A}}
\nu (e'_{t:m-1}  \mid  h_{<t} \parallel a'_{t:m-1})
\cdot
|
\hat Q (h'_{<m}\, a'_{m})
-
\underline Q^\pi_\nu (h'_{<m}\, a'_{m})
|
\nonumber\\
& \labelrel{=}{rel:errq:expand}
\sum_{\substack{h'_{<m}\in \mathcal{T}, \\ a'_m\in\mathcal A}}
\nu (e'_{t:m-1}  \mid  h_{<t} \parallel a'_{t:m-1})
\cdot
|
\sum_{\tilde z_{m}\in\mathcal Z}
\tilde z_{m}
\left[
\psi (\tilde z_{m}  \mid  h'_{<m}\, a'_{m})
-
\nu^\pi (\tilde z_{m}  \mid  h'_{<m}\, a'_{m})
\right]
|
\nonumber\\
& \labelrel{\leq}{rel:errq:triangle}
\sum_{\substack{h'_{<m}\in \mathcal{T},\\ a'_m\in\mathcal A,\, \tilde z_m \in \mathcal Z}}
\nu (e'_{t:m-1}  \mid  h_{<t} \parallel a'_{t:m-1})
\cdot
\tilde z_{m} \cdot
|
\psi (\tilde z_{m}  \mid  h'_{<m}\, a'_{m})
-
\nu^\pi (\tilde z_{m}  \mid  h'_{<m}\, a'_{m})
|
\nonumber\\
& \labelrel{\leq}{rel:errq:dropz}
\sum_{\substack{h'_{<m}\in \mathcal{T},\\ a'_m\in\mathcal A,\, \tilde z_m \in \mathcal Z}}
\nu (e'_{t:m-1}  \mid  h_{<t} \parallel a'_{t:m-1})
\cdot
|
\psi (\tilde z_{m}  \mid  h'_{<m}\, a'_{m})
-
\nu^\pi (\tilde z_{m}  \mid  h'_{<m}\, a'_{m})
|
\nonumber\\
& =
\sum_{{\substack{h'_{<m}\in \mathcal{T}, \\ a'_m\in\mathcal A}}}
\nu (e'_{t:m-1}  \mid  h_{<t} \parallel a'_{t:m-1})
\cdot
\delta_\psi(h'_{<m} a'_m)
\nonumber\\
& \labelrel{<}{rel:errq:apply-errret}
2\beta \left( \tau/|\mathcal A| \right)^{-(m-t+1)}.
\label{eq:error-q-estimation-2}
\end{align}
Equality \eqref{rel:errq:expand} expands $\hat Q$ and $\underline Q_\nu^\pi$ as expectations over $\tilde z_m$ (by definition).
Inequality \eqref{rel:errq:triangle} is the triangle inequality.
Inequality \eqref{rel:errq:dropz} uses $\tilde z_m\in[0,1]$.
Finally, \eqref{rel:errq:apply-errret} is an application of \cref{thm:error-return-predictor}.

Combining \cref{eq:error-q-estimation-1,eq:error-q-estimation-2} with triangle inequality, we obtain the desired result.
\end{proof}

\erroronestep*
\begin{proof}
\begin{align*}
\deltaone (h'_{<m})
& = |\max_a Q(h'_{<m},a) - V(h'_{<m})| \\
& \labelrel{=}{rel:err1:expandV}
\Bigl| \max_a Q(h'_{<m},a) - \bigl[ (1-\tau) Q(h'_{<m}, \argmax_a \hat Q) + (\tau/|\mathcal{A}|) \sum_{a} Q(h'_{<m},a) \bigr] \Bigr| \\
& \labelrel{\leq}{rel:err1:exploration-gap}
| \max_a Q(h'_{<m},a) - Q(h'_{<m}, \argmax_a \hat Q) | + 2\tau \\
& \leq | \max_a Q(h'_{<m},a) - \max_a \hat Q(h'_{<m},a) | + | \max_a \hat Q(h'_{<m},a) - Q(h'_{<m}, \argmax_a \hat Q) | + 2\tau \\
& \labelrel{\leq}{rel:err1:maxbound}
\max_a | Q(h'_{<m},a) - \hat Q(h'_{<m},a) | + | \hat Q(h'_{<m}, a^\ddagger ) - Q(h'_{<m}, a^\ddagger ) | + 2\tau \\
& =
| Q(h'_{<m},a^\dagger) - \hat Q(h'_{<m},a^\dagger) | + | \hat Q(h'_{<m}, a^\ddagger ) - Q(h'_{<m}, a^\ddagger ) | + 2\tau,
\end{align*}
where $a^\dagger := \argmax_a | Q(h'_{<m},a) - \hat Q(h'_{<m},a) |$ and $a^\ddagger  := \argmax_a \hat Q(h'_{<m},a)$ are functions of $h'_{<m}$.

Equality \eqref{rel:err1:expandV} uses the Bellman equation $V(\cdot)=\sum_a \pi(a\mid \cdot)\, Q(\cdot,a)$ together with the AIQI policy definition (greedy with probability\ $1-\tau$ and uniform exploration with probability $\tau$).
Inequality \eqref{rel:err1:exploration-gap} upper-bounds the effect of the exploration mixture; since $Q\in[0,1]$, the mixture term can change the value by at most $2\tau$.
Inequality \eqref{rel:err1:maxbound} uses the standard bound $|\max f-\max g|\le \max|f-g|$ and that $a^\ddagger=\argmax_a \hat Q(h'_{<m},a)$.

Multiplying both sides by $\nu (e'_{t:m-1}  \mid  h_{<t} \parallel a'_{t:m-1})$ and summing, we obtain
\begin{align*}
& 
\hspace{1.33em} \sum_{h'_{<m}\in\mathcal T} \nu(e'_{t:m-1}  \mid  h_{<t} \parallel a'_{t:m-1}) \cdot \deltaone(h'_{<m})
\\
& 
\leq
\sum_{h'_{<m}\in\mathcal T} \nu(e'_{t:m-1}  \mid  h_{<t} \parallel a'_{t:m-1}) 
\left(
| Q(h'_{<m},a^\dagger) - \hat Q(h'_{<m},a^\dagger) | + | \hat Q(h'_{<m}, a^\ddagger ) - Q(h'_{<m}, a^\ddagger ) |
\right) 
\\
&
\hspace{12pt}
+ \sum_{h'_{<m}\in\mathcal T} \nu(e'_{t:m-1}  \mid  h_{<t} \parallel a'_{t:m-1}) \cdot 2\tau 
\\
& \labelrel{\leq}{rel:err1:sum-over-actions}
2\cdot \sum_{\substack{h'_{<m}\in \mathcal{T}, \\ a'_m\in\mathcal A}}
\nu (e'_{t:m-1}  \mid  h_{<t} \parallel a'_{t:m-1})
\cdot
|\hat Q (h'_{<m} a'_{m}) - Q_\nu^\pi (h'_{<m} a'_{m})|
+ 2 \tau
\\
&
\labelrel{<}{rel:err1:apply-errq}
2\cdot (2\beta \left( \tau/|\mathcal A|\right)^{-(m-t+1)} + M^{-1} + \eta) + 2\tau
.
\end{align*}
Inequality \eqref{rel:err1:sum-over-actions} uses that $a^\dagger(h'_{<m})$ and $a^\ddagger(h'_{<m})$ select particular actions, so summing over all $a'_m\in\mathcal A$ upper-bounds each choice.
Finally, \eqref{rel:err1:apply-errq} applies \cref{thm:error-q-estimation}.
\end{proof}

\onestepandglobal*
\begin{proof}
We omit the subscript $\nu$ under $Q$ and $V$ for brevity.
\begin{align*}
\deltainfty(h_{<t}) & = |\max_a Q^\ast (h_{<t}, a) - V^\pi(h_{<t})|
\nonumber \\
&\labelrel{\leq}{rel:osg:tri}
|\max_a Q^\ast (h_{<t}, a) - \max_a Q^\pi(h_{<t}, a)| + |\max_a Q^\pi(h_{<t}, a) - V^\pi(h_{<t})|
\nonumber \\
&\labelrel{\leq}{rel:osg:maxdiff}
\max_a |Q^\ast (h_{<t}, a) - Q^\pi(h_{<t}, a)| + \deltaone(h_{<t})
\nonumber \\
&\labelrel{=}{rel:osg:choose-a}
\left|Q^\ast (h_{<t}, a_t') - Q^\pi(h_{<t}, a_t') \right| + \deltaone(h_{<t})\\
&\labelrel{=}{rel:osg:bellman}
\left| \sum_{e'_t} \nu(e_t | h_{<t} a_t') \left[ \left((1-\gamma) r_t + \gamma  V^\ast (h_{<t} a'_t e'_t)\right) - \left((1-\gamma) r_t + \gamma  V^\pi(h_{<t} a'_t e'_t)\right) \right] \right| + \deltaone(h_{<t})
\nonumber\\
&= \left| \gamma \sum_{e'_t} \nu(e_t | h_{<t} a_t') \left[ V^\ast (h_{<t} a'_t e'_t) -  V^\pi(h_{<t} a'_t e'_t) \right] \right| + \deltaone(h_{<t})
\nonumber\\
&\labelrel{\leq}{rel:osg:delta-infty}
\gamma \sum_{e'_t} \nu(e_t  \mid  h_{<t} a_t') \deltainfty(h_{<t} a'_t e'_t) + \deltaone(h_{<t})
\end{align*}
Inequality \eqref{rel:osg:tri} is the triangle inequality.
Inequality \eqref{rel:osg:maxdiff} uses $|\max f-\max g|\le \max |f-g|$ and the definition of $\deltaone$.
Equality \eqref{rel:osg:choose-a} is by the definition of $a'_t=\argmax_a |Q^\ast(h_{<t},a)-Q^\pi(h_{<t},a)|$ as stated in the lemma.
Equality \eqref{rel:osg:bellman} is the Bellman expansion of $Q^\ast$ and $Q^\pi$.
Finally, \eqref{rel:osg:delta-infty} uses $| \sum_x p_x u_x|\le \sum_x p_x |u_x|$ and $\deltainfty = V^\ast - V^\pi$.
\end{proof}

\strongimpliesbayes*
\begin{proof}
Proving that the inequality in our theorem holds $\xi^\pi$-almost surely implies that the inequality holds $\nu^\pi$-almost surely for any $\nu\in\mathcal M$. This is because, if $E$ is the set of all outcomes $h$ where the inequality \emph{doesn't} hold, $\nu^\pi(E) \leq \xi^\pi(E) / w(\nu) =0 $. Thus it suffices to prove that the inequality holds $\xi^\pi$-almost surely.

By the definition of the mixture $\xi$ and the value function $V$, we see that
\[
V_\xi^{\pi}(h_{<t})
=
\sum_{\nu\in\mathcal M} w(\nu \mid h_{<t})\, V_\nu^{\pi}(h_{<t})
.
\]
Let $\pi_\xi^\ast$ be an optimal policy in $\xi$. Then
\[
V_\xi^\ast(h_{<t})
= V_\xi^{\pi_\xi^\ast}(h_{<t})
= \sum_{\nu} w(\nu \mid h_{<t})\,V_\nu^{\pi_\xi^\ast}(h_{<t})
\leq \sum_{\nu} w(\nu \mid h_{<t})\,V_\nu^\ast(h_{<t}),
\]
so that
\begin{equation}
0 \leq V_\xi^\ast(h_{<t}) - V_\xi^\pi(h_{<t})
\leq
\sum_{\nu} w(\nu \mid h_{<t})\left(V_\nu^\ast(h_{<t}) - V_\nu^\pi(h_{<t})\right).
\label{eq:gap-upper}
\end{equation}
Thus it suffices to show that the right-hand side has $\limsup \leq \eps$, $\xi^\pi$-almost surely.

For each environment $\nu\in\mathcal M$, by \cref{thm:aiqi-eps-optimal} there exists an event $S_\nu\subseteq\Omega$
with $\nu^\pi(S_\nu)=1$ such that for all outcomes $h\in S_\nu$,
\begin{equation}
\textstyle
\limsup_{t\to\infty}\bigl(V_\nu^\ast(h_{<t})-V_\nu^\pi(h_{<t})\bigr)\leq\eps.
\label{eq:nu-gap}
\end{equation}

Define the posterior process $w_t(\nu):=w(\nu\mid h_{<t})$ and its limit
$w_\infty(\nu):=\lim_{t\to\infty} w_t(\nu)$, which exists $\xi^\pi$-a.s. since
$(w_t(\nu))_{t\geq 1}$ is a bounded martingale.
Let us consider the joint probability measure on $\mathcal M \times \Omega$, where we
first draw $\bm \nu\sim w$, then draw the outcome $h\sim \bm\nu^\pi$. Then $\xi^\pi$ is its marginal over $\Omega$.
A standard identity is the following:
\begin{equation}
\mathbb E [w_\infty (\nu) \mathds 1_E]
= \mathbb P(\bm{\nu}=\nu,h\in E)
= w(\nu) \nu^\pi(E).
\label{eq:joint-identity}
\end{equation}
Due to \cref{eq:joint-identity} and $\nu^\pi(S_\nu^\mathsf{c})=0$,
$\mathbb E_{\xi^\pi}[w_\infty(\nu)\mathds 1_{S_\nu^\mathsf{c}}]=0$. Since the integrand is nonnegative,
\begin{equation}
w_\infty(\nu) \mathds 1_{S_\nu^\mathsf{c}}=0,
\quad
\text{$\xi^\pi$-a.s.}
\label{eq:posterior-support}
\end{equation}
Hence, there exists a $\xi^\pi$-probability-one set, such that for any outcome $h$ in this set,
\cref{eq:nu-gap} holds for all $\nu$ where $w_\infty(\nu)>0$.

Now define the per-environment gap
\[
\delta_{t,\nu} := V_\nu^\ast(h_{<t}) - V_\nu^\pi(h_{<t}) \in [0,1],
\]
and the posterior-weighted gap
\[
G_t := \sum_{\nu} w_t(\nu) \delta_{t,\nu}.
\]
By \cref{eq:gap-upper}, it is enough to show $\limsup_{t\to\infty} G_t \leq \eps$, $\xi^\pi$-a.s.

Fix an outcome $h$ in a $\xi^\pi$-probability-one set on which $w_t(\nu)\to w_\infty(\nu)$ for all $\nu$
and \cref{eq:posterior-support} holds for all $\nu$.
Let $\alpha>0$. Since $w_\infty(\cdot)$ is a probability mass function on a countable set,
there exists a finite $F\subseteq\mathcal M$ such that $w_\infty(\nu)>0$ for all $\nu\in F$, and
\begin{equation}
\sum_{\nu\in F} w_\infty(\nu) \geq 1-\alpha.
\label{eq:finite-mass}
\end{equation}
Since $w_\infty(\nu)>0$, we have
$\limsup_{t\to\infty} \delta_{t,\nu} \leq \eps$ by \cref{eq:posterior-support}.
Therefore, for every such $\nu$ there exists $t_\nu$ such that for all $t\geq t_\nu$,
$
\delta_{t,\nu} \leq \eps+\alpha
$.
Let $t_0:=\max_{\nu\in F} t_\nu$. Then for all $t\geq t_0$,
\begin{equation}
\sum_{\nu\in F} w_t(\nu)\delta_{t,\nu}
\leq (\eps+\alpha)\sum_{\nu\in F} w_t(\nu)
\leq \eps+\alpha.
\label{eq:finite-part}
\end{equation}
Moreover, since $\delta_{t,\nu}\leq 1$,
$
\sum_{\nu\notin F} w_t(\nu)\delta_{t,\nu}
\leq \sum_{\nu\notin F} w_t(\nu)
$.
Because $F$ is finite and $w_t(\nu)\to w_\infty(\nu)$ pointwise, we have
$\sum_{\nu\in F} w_t(\nu)\to \sum_{\nu\in F} w_\infty(\nu)$, so by \cref{eq:finite-mass},
$
\sum_{\nu\notin F} w_t(\nu) = 1-\sum_{\nu\in F} w_t(\nu)
\to
1-\sum_{\nu\in F} w_\infty(\nu) \leq \alpha.
$
Hence there exists $t_1$ such that for all $t\geq t_1$,
\begin{equation}
\sum_{\nu\notin F} w_t(\nu) \leq 2\alpha.
\label{eq:tail-mass}
\end{equation}
Combining \cref{eq:finite-part,eq:tail-mass}, we obtain that for all $t\geq \max\{t_0, t_1\}$,
\[
G_t
= \sum_{\nu\in F} w_t(\nu)\delta_{t,\nu} + \sum_{\nu\notin F} w_t(\nu)\delta_{t,\nu}
\leq (\eps+\alpha) + 2\alpha.
\]
Since $\alpha>0$ is arbitrary, we conclude that
\[
\limsup_{t\to\infty} G_t \leq \eps,
\quad
\xi^\pi\text{-a.s.}
\qedhere
\]
\end{proof}

\aiqinotselfoptimizing*
\begin{proof}
Given $H\geq 2$, define
\[
c_H \;:=\; \max_{\gamma\in(0,1)} (1-\gamma)\gamma^{H-1}
\;=\; \frac{1}{H}\Big(\frac{H-1}{H}\Big)^{H-1}.
\]
Fix a tolerance $\eps\in(0,c_H)$.

Let
\[
\gamma \;:=\; \frac{H-1}{H}\in(0,1),
\qquad
c \;:=\; (1-\gamma)\gamma^{H-1} \;=\; c_H.
\]
Choose
\[
\delta \;:=\; \frac{1-\eps/c}{3}\in(0,1),
\qquad
K\in\mathbb N\ \text{s.t.}\ \frac{\tau}{K}\le \frac{\delta}{2}.
\]
Finally, assume $M$ is large enough so that discretization cannot collapse the strict value gap:
\begin{equation}\label{eq:M-choice-corrected}
\frac{1}{M} \;<\; c\frac{\delta}{2}
\qquad\text{equivalently}\qquad
M \;>\; \frac{2}{c\delta}.
\end{equation}

\paragraph{Environment.}
Let $\mathcal A=\{1,2,\dots,K\}$, $\mathcal O=\{\cdot,\star\}$, and $\mathcal R=\{0,\delta,1\}$.
The environment $\mu$ consists of episodes of length $H$.
For the first $H-1$ steps of each episode, output $(o,r)=(\cdot,0)$.
On the last step output $(o,r)=(\star,r)$ where $r$ depends on the $H$ actions taken in that episode:
\begin{itemize}
\item if all $H$ actions are $1$, then $r=1$;
\item if the first action is $1$ but not all $H$ actions are $1$, then $r=0$;
\item if the first action is not $1$, then $r=\delta$.
\end{itemize}
Then a new episode starts and the construction repeats forever.

\paragraph{Historic policy.}
Define $\pi'$ episode-wise as follows.
At the first step of an episode, sample $a$ uniformly from $\mathcal A$.
If this first action is not $1$, sample the remaining $H-1$ actions uniformly from $\mathcal A$.
If the first action is $1$, then for each of the remaining $H-1$ steps play $1$ with probability
\[
p \;:=\; \Big(\frac{\delta}{2}\Big)^{1/(H-1)},
\]
and otherwise sample uniformly from $\{2,\dots,K\}$.
Then conditioned on starting an episode with action $1$,
the probability of playing $1$ for all remaining $H-1$ steps is $p^{H-1}=\delta/2$.

\paragraph{Existence of a self-optimizing policy.}
Since $\mathcal M$ is a singleton, the $\mu$-optimal policy $\pi^*_\mu$ that always plays action $1$
is trivially self-optimizing for $(\mathcal M,\pi')$.

\paragraph{AIQI learns the $\pi'$-value.}
Under $\mu^{\pi'}$, the (periodically augmented) discretized return process is a computable
variable-order Markov model of order $O(H)$ (episode boundaries are marked by $\star$ and terminal rewards
depend only on the $H$ actions in the current episode).
Because $\psi$ is a Bayesian mixture over a class containing the true return-predictor for $\mu^{\pi'}$,
\citet{blackwell1962merging} implies that $\psi$'s posterior predictive distribution converges to the true
conditional return distribution under $\mu^{\pi'}$.
In particular, AIQI's value estimate
\[
\hat Q(h_{<t},a)=\sum_{\tilde z\in\mathcal Z}\tilde z\;\psi(\tilde z\mid h_{<t}a)
\]
converges to the true conditional expectation of the discretized target $z_t$ under $\mu^{\pi'}$:
\[
\underline{Q}^{\pi'}_{\mu}(h_{<t},a):=\mathbb E^{\pi'}_\mu[z_t\mid h_{<t},a_t=a].
\]

Now fix an episode-start time $t$ (immediately after observing $\star$).
In this environment, the only nonzero reward within the next $H$ steps is the terminal reward,
so the $H$-step discounted return from $t$ satisfies
\[
R_{t,H}=(1-\gamma)\sum_{k=0}^{H-1}\gamma^k r_{t+k}=(1-\gamma)\gamma^{H-1}r_{t+H-1}=c\,r_{t+H-1}.
\]
If we force $a_t\neq 1$ and then follow $\pi'$, the terminal reward is deterministically $\delta$, hence
\[
Q^{\pi'}_{\mu,H}(h_{<t},a\neq 1)=c\delta.
\]
If we force $a_t=1$ and then follow $\pi'$, the terminal reward is $1$ with probability $\delta/2$
and $0$ otherwise, hence
\[
Q^{\pi'}_{\mu,H}(h_{<t},a=1)=c\cdot\frac{\delta}{2}.
\]

Because AIQI optimizes the discretized target $z_t=\lfloor MR_{t,H}\rfloor/M$ and
$0\le R_{t,H}-z_t<1/M$ pointwise, we have for each action $a$:
\[
Q^{\pi'}_{\mu,H}(h_{<t},a)-\frac{1}{M}<\underline{Q}^{\pi'}_{\mu}(h_{<t},a)\le Q^{\pi'}_{\mu,H}(h_{<t},a).
\]
Therefore at episode starts,
\[
\underline{Q}^{\pi'}_{\mu}(h_{<t},a\neq 1)-\underline{Q}^{\pi'}_{\mu}(h_{<t},a=1)
\;\ge\; c\frac{\delta}{2}-\frac{1}{M}\;>\;0
\]
by~\eqref{eq:M-choice-corrected}.
So action $1$ is strictly suboptimal under the discretized $\pi'$-value at episode starts.

By $\mu^{\pi'}$-a.s.\ convergence of $\hat Q$ to $\underline{Q}^{\pi'}_\mu$, there exists a $t_0$ such that
for all episode-start times $t\ge t_0$, AIQI's exploitation choice $a^*(h_{<t})\neq 1$.

\paragraph{Lower-bound on the self-optimization gap.}
Fix any episode-start $t\ge t_0$.
Since $a^*(h_{<t})\neq 1$, AIQI chooses action $1$ at time $t$ with probability at most $\tau/K$
(it can only happen by uniform exploration).
Hence the expected terminal reward of the \emph{next} episode satisfies
\[
\mathbb E^{\hat\pi}_\mu[r_{t+H-1}\mid h_{<t}]
\;\le\; \delta\Big(1-\frac{\tau}{K}\Big)+1\cdot\frac{\tau}{K}
\;\le\; \delta+\frac{\tau}{K}
\;\le\; \frac{3\delta}{2}.
\]

Let $\pi^*$ denote the policy that always outputs action $1$.
Then $\pi^*$ achieves terminal reward $1$ at the end of \emph{every} episode, in particular
$\mathbb E^{\pi^*}_\mu[r_{t+H-1}\mid h_{<t}]=1$.
Because rewards are in $[0,1]$, the termwise differences at episode-terminal times are nonnegative, so
\begin{align*}
V^*_\mu(h_{<t})-V^{\hat\pi}_\mu(h_{<t})
&\ge V^{\pi^*}_\mu(h_{<t})-V^{\hat\pi}_\mu(h_{<t})\\
&=(1-\gamma)\sum_{j=0}^\infty \gamma^{jH+H-1}
\Big(1-\mathbb E^{\hat\pi}_\mu[r_{t+jH+H-1}\mid h_{<t}]\Big)\\
&\ge (1-\gamma)\gamma^{H-1}
\Big(1-\mathbb E^{\hat\pi}_\mu[r_{t+H-1}\mid h_{<t}]\Big)\\
&\ge c\Big(1-\frac{3\delta}{2}\Big).
\end{align*}
With our choice $\delta=(1-\eps/c)/3$, we have
\[
1-\frac{3\delta}{2}=1-\frac{1-\eps/c}{2}=\frac{1+\eps/c}{2},
\]
so
\[
V^*_\mu(h_{<t})-V^{\hat\pi}_\mu(h_{<t})
\;\ge\; c\cdot\frac{1+\eps/c}{2}
\;=\;\frac{c+\eps}{2}\;>\;\eps.
\]

Episode starts occur infinitely often, hence
\[
\limsup_{t\to\infty}\big(V^*_\mu(h_{<t})-V^{\hat\pi}_\mu(h_{<t})\big)>\eps
\qquad \mu^{\pi'}\text{-a.s.}
\]
Thus AIQI is not $\eps$-self-optimizing for $(\mathcal M,\pi')$.
\end{proof}

\section{Proofs for Self-Aixi}
\label{sec:proofs-self-aixi}

\selfaixitvconvergence*
\begin{proof}
Since $\nu^{\pi}$ is absolutely continuous w.r.t. $\xi^{\pi}$, we obtain by \citet{blackwell1962merging} that
\[
\lim_{t\to\infty} D\!\left(\xi^{\pi}, \nu^{\pi} \mid h_{<t}\right) = 0,
\quad \nu^{\pi}\text{-a.s.}
\]
Similarly, we obtain that
\[
\lim_{t\to\infty} D\!\left(\xi^{\zeta}, \xi^{\pi} \mid h_{<t}\right) = 0,
\quad \xi^{\pi}\text{-a.s.}
\]
$\xi^{\pi}$-almost sure convergence implies $\nu^{\pi}$-almost sure convergence, since $\nu^{\pi}$ is absolutely continuous w.r.t. $\xi^{\pi}$.
By triangle inequality for total variation distance, we obtain the desired result.
\end{proof}

\avgcondtv*
\begin{proof}
Define a probability measure \(R\) on \(\mathcal X \times \mathcal Y\) by
\[
R(x,dy) := P_X(x)\,Q(dy\mid x).
\]
Thus, \(R\) has the same marginal on $X$ as \(P\), but uses the conditional law of \(Y\mid X\) from \(Q\).
We can show that
\[
D(P,R)
=
\sum_{x\in\mathcal X}
P_X(x) \,D \bigl(P, Q \mid X=x \bigr),
\]
and that
\[
D(R,Q)=D(P_X,Q_X).
\]

Now apply the triangle inequality:
\[
D(P,R)\le D(P,Q)+D(R,Q).
\]
Substituting the identities above gives
\[
\sum_{x\in\mathcal X}
P_X(x) \,D \bigl(P, Q \mid X=x \bigr)
\;\le\;
D(P,Q)+D(P_X,Q_X).
\]

Finally, total variation distance is non-increasing under measurable maps. Applying this to the projection
$
(x,y)\mapsto x,
$
we get
$
D(P_X,Q_X)\le D(P,Q).
$
Therefore,
\[
D(P,Q)+D(P_X,Q_X)\le 2D(P,Q),
\]
which completes the proof.
\end{proof}

\exploringselfaixierrorqestimation*
\begin{proof}
Fix $\beta>0$.
By \cref{thm:self-aixi-tv-convergence}, there exists a $\nu^\pi$-probability-one set $S\subseteq\Omega$ such that for every $h\in S$ there is some $t_0$ with
\[
D(\nu^\pi , \xi^\zeta \mid h_{<t}) < \beta
\qquad\text{for all }t\ge t_0.
\]

Now fix $h\in S$, $t\ge t_0$, and $m\ge t$.
By \cref{thm:avg_cond_tv},
\[
\sum_{\substack{h'_{<m} \in \mathcal T,\\ a'_m \in \mathcal A}}
\nu^\pi \left( h'_{t:m-1} a'_m  \mid  h_{<t} \right)
\cdot
D(\nu^\pi, \xi^\zeta \mid h'_{<m} a'_m)
\leq
2 D(\nu^\pi, \xi^\zeta \mid h_{<t}),
\]
and by \cref{thm:self-aixi-q-value-difference-bound},
\[
\delta_Q(h'_{<m}a'_m)
=
\bigl|Q^\zeta_\xi(h'_{<m},a'_m)-Q^\pi_\nu(h'_{<m},a'_m)\bigr|
\le
D(\nu^\pi, \xi^\zeta \mid h'_{<m}a'_m).
\]
Hence
\[
\sum_{\substack{h'_{<m} \in \mathcal T,\\ a'_m \in \mathcal A}}
\nu^\pi \left( h'_{t:m-1} a'_m  \mid  h_{<t} \right)
\cdot
\delta_Q(h'_{<m}a'_m)
\leq
2D(\nu^\pi, \xi^\zeta \mid h_{<t}) < 2\beta.
\]

Since $\pi_S$ assigns probability of at least $\tau/|\mathcal A|$ to each action,
\begin{align*}
\nu^\pi \left( h'_{t:m-1} a'_m  \mid  h_{<t} \right)
&= \prod_{i=t}^m \pi (a'_i  \mid  h'_{<i}) \prod_{i=t}^{m-1} \nu( e'_i  \mid  h'_{<i} a'_i)
\\
& \geq
\left( \tau/|\mathcal A| \right)^{m-t+1}
\nu (e'_{t:m-1}  \mid  h_{<t} \parallel a'_{t:m-1}),
\end{align*}
and we obtain the desired result:
\[
\sum_{\substack{h'_{<m} \in \mathcal T,\\ a'_m \in \mathcal A}}
\nu (e'_{t:m-1}  \mid  h_{<t} \parallel a'_{t:m-1})
\cdot
\delta_Q (h'_{<m} a'_m)
<
2\beta \left( \tau/{|\mathcal A|} \right)^{-(m-t+1)}
.
\qedhere
\]
\end{proof}

\section{Multi-Agent Environment}
\label{sec:multi-agent-environment}

The single-agent results in \cref{sec:one-step-to-global} extend directly to multi-agent sequential interaction by viewing each agent as acting in a \emph{subjective environment} induced by the other agents \citep[Section 4.1]{leike2016formal}.
Consider $n$ agents interacting in a multi-agent environment
\[
\sigma : (\mathcal A^n \times \mathcal E^n)^* \times \mathcal A^n \to \Delta(\mathcal E^n),
\]
where at time $t$ the joint action and percept are
$\mathbf a_t := (a_t^1,\dots,a_t^n)$ and
$\mathbf e_t := (e_t^1,\dots,e_t^n)$.
Agent $i$ only observes its own action-percept history
\[
h^i_{<t} := a_1^i e_1^i \dots a_{t-1}^i e_{t-1}^i.
\]
Given $n$ policies $\pi_1,\dots,\pi_n$, let $\sigma_i$ denote the subjective environment of agent $i$, obtained by combining $\sigma$ with the policies of the other agents and marginalizing out the components not observed by agent $i$.
Together with $\pi_i$, this defines an ordinary environment-policy pair, so the formalism of the previous sections applies unchanged.

We say that $\pi_i$ is an $\eps$-best response at history $h^i_{<t}$ if
\[
V^\ast_{\sigma_i}(h^i_{<t}) - V^{\pi_i}_{\sigma_i}(h^i_{<t}) \le \eps.
\]
If this holds for every $i\in\{1,\dots,n\}$, then the policies $\pi_{1:n}$ is an $\eps$-Nash equilibrium at time $t$.

As in \cref{def:grain-of-truth}, the key self-referential requirement is that each agent's unified predictor $\psi_i$ has a grain of truth with respect to its own subjective environment $\sigma_i$.
A nontrivial instantiation can again be obtained by letting $\mathcal P^i_n$ be the class of all $O$-computable return-predictors for every $i$ and $n$, and assuming that the environment $\sigma$ is $O$-computable.

\begin{restatable}[AIQI converges to $\eps$-Nash equilibria]{theorem}{aiqimultiagent}
\label{thm:aiqi-multi-agent}
Fix a multi-agent environment $\sigma$ and a tolerance $\eps>0$.
For each agent $i\in\{1,\dots,n\}$, let
\[
\pi_i := \hat \pi^{H_i,M_i,N_i,\tau_i}_{\psi_i},
\]
where the parameters $H_i,M_i,N_i,\tau_i$ satisfy the conditions of \cref{thm:aiqi-eps-optimal} with tolerance $\eps/2$, and suppose that $\psi_i$ has a grain of truth with respect to the subjective environment $\sigma_i$ induced by $\sigma$ and the policies $\pi_{\neq i}$.
Then, the policies $\pi_{1:n}$ converge to an $\eps$-Nash equilibrium, $\sigma^{\pi_{1:n}}$-almost surely.
\end{restatable}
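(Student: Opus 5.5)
The plan is to reduce the statement to $n$ separate applications of \cref{thm:aiqi-eps-optimal}, one per agent, each in that agent's subjective environment, and then to transfer the resulting almost-sure statements to the joint measure $\sigma^{\pi_{1:n}}$.

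\emph{Step 1: per-agent optimality.} Fix an agent $i$. Take the singleton class $\mathcal M_i := \{\sigma_i\}$, where $\sigma_i$ is the subjective environment induced by $\sigma$ and $\pi_{\neq i}$. This is an ordinary environment $\mathcal H\times\mathcal A\to\Delta\mathcal E$ over agent $i$'s own action and percept alphabets. By hypothesis, the parameters $H_i,M_i,N_i,\tau_i$ satisfy the conditions of \cref{thm:aiqi-eps-optimal} with tolerance $\eps/2$, and $\psi_i$ has a grain of truth with respect to $\sigma_i$. So \cref{thm:aiqi-eps-optimal} yields a $\sigma_i^{\pi_i}$-probability-one set $S_i$ of agent-$i$ outcomes $h^i$ such that
\[
\limsup_{t\to\infty} V^\ast_{\sigma_i}(h^i_{<t}) - V^{\pi_i}_{\sigma_i}(h^i_{<t}) \le \eps/2 < \eps .
\]
Hence, for every $h^i\in S_i$, there is $t_i(h^i)$ beyond which $\pi_i$ is an $\eps$-best response at $h^i_{<t}$.

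\emph{Step 2: transfer to the joint measure.} This is the main point needing care. I need to check that the pushforward of $\sigma^{\pi_{1:n}}$ under the projection $\mathrm{proj}_i$, which sends a joint outcome to agent $i$'s action-percept sequence $a^i_1e^i_1a^i_2e^i_2\dots$, equals $\sigma_i^{\pi_i}$.

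For cylinders $h^i_{<t}$ I would verify this by induction on $t$. The conditional probability of $a^i_t$ given the joint past is $\pi_i(\cdot\mid h^i_{<t})$, which depends only on $h^i_{<t}$. The conditional probability of $e^i_t$ given $h^i_{<t}a^i_t$, after marginalizing the other agents' actions and percepts, is by definition $\sigma_i(e^i_t\mid h^i_{<t}a^i_t)$. This is exactly how the subjective environment is constructed in \citet[Section 4.1]{leike2016formal}, namely as the conditional of the joint measure. Agreement on all cylinders then gives equality of the measures by Carathéodory.

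Consequently $\sigma^{\pi_{1:n}}(\mathrm{proj}_i^{-1}(S_i)) = \sigma_i^{\pi_i}(S_i)=1$.

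\emph{Step 3: intersect.} Let $S := \bigcap_{i=1}^n \mathrm{proj}_i^{-1}(S_i)$. As a finite intersection of probability-one events, $\sigma^{\pi_{1:n}}(S)=1$. For any joint outcome in $S$, set $t^\ast := \max_i t_i$. Then for all $t\ge t^\ast$ every agent is an $\eps$-best response at its own history, which is precisely the definition of an $\eps$-Nash equilibrium at time $t$. So the profile converges to (and stays within) $\eps$-Nash equilibria almost surely.

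The only real obstacle is the measure identification in Step 2. Everything else is a direct invocation of \cref{thm:aiqi-eps-optimal}. The slack between $\eps/2$ and $\eps$ lets us avoid fussing over whether the $\limsup$ is attained.
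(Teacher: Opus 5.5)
Your proposal is correct and matches the paper's proof step for step. You apply \cref{thm:aiqi-eps-optimal} with tolerance $\eps/2$ to each subjective environment $\sigma_i$, transfer the almost-sure bound to $\sigma^{\pi_{1:n}}$ because $\sigma_i^{\pi_i}$ is the marginal of the joint measure on agent $i$'s coordinates, then intersect the finitely many probability-one events and take $t_0=\max_i t_i$; the only difference is that you sketch the cylinder-by-cylinder check of that marginal identity, which the paper simply asserts.
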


\begin{proof}
Fix an agent $i$.
Since $\sigma_i$ is an ordinary environment for agent $i$, \cref{thm:aiqi-eps-optimal} applied with tolerance $\eps/2$ gives
\[
\limsup_{t\to\infty}
\left(
V^\ast_{\sigma_i}(h^i_{<t}) - V^{\pi_i}_{\sigma_i}(h^i_{<t})
\right)
\le \eps/2,
\qquad
\sigma_i^{\pi_i}\text{-a.s.}
\]
Because $\sigma_i^{\pi_i}$ is exactly the marginal of the joint history distribution $\sigma^{\pi_{1:n}}$ on agent $i$'s coordinates, the same statement holds $\sigma^{\pi_{1:n}}$-almost surely.
Hence, for $\sigma^{\pi_{1:n}}$-almost every outcome $h$, there exists $t_i$ such that
\[
V^\ast_{\sigma_i}(h^i_{<t}) - V^{\pi_i}_{\sigma_i}(h^i_{<t}) \le \eps
\qquad
\text{for all } t\ge t_i.
\]
Since there are finitely many agents, intersecting these probability-one events over $i$ still yields a probability-one event.
Taking $t_0 := \max_i t_i$ proves that all agents are simultaneously $\eps$-best responses for all $t\ge t_0$, which is exactly the claim.
\end{proof}

\section{General Discount Sequences}
\label{sec:arbitrary_discount}

We can extend all our results to general time-consistent discount sequences \citep[p.~245]{hutter2024introduction} that decay faster than a geometric sequence.

First, we should introduce the relevant concepts. A time-consistent discount sequence is a sequence $\{ \gamma_k \}_{k=1}^\infty$ with $\gamma_k\geq0$ and $\sum_{k=1}^\infty \gamma_k < \infty$.
An example is the geometric discount sequence $\gamma_k = \gamma^k$.
The discount normalization factor is defined as $\Gamma_t = \sum_{k=t}^\infty \gamma_k$.
The full return at time $t$ is defined as
\[
R_t = \frac{1}{\Gamma_t} \sum_{k=t}^\infty \gamma_k r_k,
\]
and the value functions $V$ and $Q$ are defined with $R_t$ as in \cref{sec:grl}.
The value functions satisfy the general discount Bellman equations:
\[
V^\pi_\nu (h_{<t}) = \sum_{a'_t \in \mathcal{A}} \pi(a'_t \mid h_{<t})\, Q^\pi_\nu(h_{<t}, a'_t),
\]
\[
Q^\pi_\nu(h_{<t}, a_t) = \frac{1}{\Gamma_t} \sum_{e'_t \in \mathcal{E}} \nu(e'_t \mid h_{<t}a_t) \big[ \gamma_t {r_t}
+ \Gamma_{t+1} V^\pi_\nu(h_{<t} a_t e'_t) \big].
\]
$H$-step returns and values are defined as in \cref{sec:grl}.
We define the $\eta$-effective horizon at time $t$ as
\[
H_t(\eta) := \min \left\{ H \in \mathbb Z_+ \;\middle|\; \frac{\Gamma_{t+H}}{\Gamma_t} \leq \eta \right\}.
\]
One can easily show that the $H_t(\eta)$-step return at $t$ differs from the full return at $t$ by at most $\eta$.
If a discount sequence decays faster than a geometric sequence, i.e.,
\[
\limsup_{t\to\infty} \frac{\gamma_{t+1}}{\gamma_t} < \lambda
\]
for some $\lambda<1$, the effective horizon $H_t(\eta)$ for a fixed $\eta$ has an upper bound across all $t$.

We only need to modify our proofs and results in several places to accommodate the generalization.
\begin{itemize}
\item
The statement of \cref{thm:one-step-and-global} should change to
\[
\deltainfty(h_{<t}) \leq \frac{\Gamma_{t+1}}{\Gamma_t} \sum_{e'_t} \nu(e'_t  \mid  h_{<t} \, a'_t)\, \deltainfty(h_{<t} \,a'_t\, e'_t) + \deltaone(h_{<t}),
\]
which is just the original statement with $\gamma$ replaced by ${\Gamma_{t+1}}/{\Gamma_t}$.
The logic of its proof remains the same, except that we should use the general discount Bellman equation instead of the geometric discount Bellman equation, in one of its steps.
\item
The chain of inequalities resulting from \cref{thm:one-step-and-global} should be slightly modified: $\gamma^l$ turns into ${\Gamma_{t+l}}/{\Gamma_t}$.
\item
In the proof of \cref{thm:aiqi-eps-optimal}, the core inequality should be changed to
\[
\deltainfty (h_{<t})
<
\frac{\Gamma_{t+L}}{\Gamma_t}
+
\sum_{l=0}^{L-1} \frac{\Gamma_{t+l}}{\Gamma_t}
\big[
2 (2\beta \left( \frac{\tau}{|\mathcal A|} \right)^{-(l+1)} \hspace{-5pt}+ M^{-1} + \eta) + 2\tau
\big]
,
\]
so that for large enough $t$,
\[
\deltainfty (h_{<t})
<
\lambda^L
+
\sum_{l=0}^{L-1} \lambda^l
\big[
2 (2\beta \left( \frac{\tau}{|\mathcal A|} \right)^{-(l+1)} \hspace{-5pt}+ M^{-1} + \eta) + 2\tau
\big]
.
\]
\item
\cref{thm:aiqi-eps-optimal} thus generalizes to general discount sequences that decay faster than a geometric sequence. It suffices to change $\gamma$ with $\lambda$ in the criterion for parameters, and use $H=\limsup_t H_t(\eta) + 1$.
\item 
\cref{thm:aiqi-approaches-aixi} and \cref{thm:aiqi-multi-agent} generalize accordingly.
\end{itemize}

\section{AIQI-CTW}
\label{sec:experimental}

We introduce AIQI-CTW, a computable instantiation of AIQI with the context tree weighting \citep[CTW;][]{willems1995context} algorithm.
We also provide experiments that compare AIQI-CTW and MC-AIXI-CTW \citep{veness2011monte}.
The source code for experiments can be found at \url{https://github.com/yegonkim/aiqi}.

\paragraph{Context tree weighting.}
Context Tree Weighting (CTW) is a fast, principled method for doing Bayesian sequence prediction over binary strings. It builds a variable-order Markov model by maintaining a context tree up to some maximum depth. Rather than committing to a single context length, CTW mixes predictions from all context depths using an efficient recursive weighting scheme, which effectively performs Bayesian model averaging over a large family of context trees.
Remarkably, it makes online predictions by updating in time \emph{linear} in the context depth, assigns non-zero probability to all sequences, and tends to quickly exploit repeated structure.

\paragraph{Implementation.}
AIQI-CTW is an instantiation of AIQI that uses CTW as its return-predictor $\psi_n$.
MC-AIXI-CTW is similarly an approximation of AIXI that uses CTW as the environment model, and additionally a Monte Carlo Tree Search \citep[MCTS;][]{coulom2006efficient} algorithm that replaces exact planning.
AIQI-CTW uses $N$ CTW models, each corresponding to one of the $N$ return-predictors.
They receive the augmented sequences as described in \cref{sec:aiqi}, to predict the $M$-discretized $H$-step returns.
All rewards, observations, actions, and discretized returns are represented as blocks of bits.
Note that the weighted probabilities of CTW are updated only with the bits corresponding to returns.
This is in line with \cref{def:mixture-return-predictor}.
Expectation over returns is taken in an exact manner by computing all the probabilities.
The implementation of MC-AIXI-CTW was taken from the \texttt{pyaixi} repository\footnote{https://github.com/sgkasselau/pyaixi}, and AIQI-CTW was adapted from this implementation.

\paragraph{Experimental setup.}
We tested the algorithms on 3 environments: ``Biased Rock-Paper-Scissor'', ``Kuhn Poker'', and ``4$\times$4 Grid''. Biased Rock-Paper-Scissor is a repeated rock paper scissors game against an opponent with a simple exploitable bias: if it won the previous round by playing rock it plays {rock} again, otherwise it plays uniformly at random. Kuhn Poker is a simplified two-player, zero-sum poker game with a three-card deck (K, Q, J) in which players alternately choose between {pass} and {bet} under hidden information, capturing core phenomena such as bluffing and slow-playing. The 4$\times$4 Grid task is a small gridworld environment in which an agent moves on a 4-by-4 lattice using the primitive actions {up, down, left, right} (with boundary constraints) to reach a designated goal state and receive reward. More details on the environments and experimental setup can be found in \citet{veness2011monte}.

\paragraph{Hyperparameters.}
For MC-AIXI-CTW, we use the default hyperparameter used in \citet{veness2011monte}. One exception is the terminating age, which was reduced to keep the runtime manageable. The number of MCTS simulations in 4$\times$4 Grid was also reduced for the same reason.
For AIQI-CTW, we use a decaying $\eps$-greedy exploration and a minimum baseline exploration rate $\tau$. The CTW depth and horizon $H$ were chosen to equal those of MC-AIXI-CTW.
Note that we choose $N=H$, since the buffer period $N-H$ was introduced in \cref{sec:theoretical} solely as a theoretical device for proofs.
We report all the important hyperparameters in \cref{tab:hyperparams}.

\begin{table}[t]
\centering
\small
\begin{tabular}{lcccccc}
\toprule
\textbf{Parameter} &
\multicolumn{2}{c}{\textbf{Biased Rock-Paper-Scissors}} &
\multicolumn{2}{c}{\textbf{Kuhn Poker}} &
\multicolumn{2}{c}{\textbf{4$\times$4 Grid}} \\
\cmidrule(lr){2-3}\cmidrule(lr){4-5}\cmidrule(lr){6-7}
& \textbf{MC-AIXI-CTW} & \textbf{AIQI-CTW}
& \textbf{MC-AIXI-CTW} & \textbf{AIQI-CTW}
& \textbf{MC-AIXI-CTW} & \textbf{AIQI-CTW} \\
\midrule
Horizon $H$           & ---   & 4     & ---   & 2     & ---   & 12 \\
Period $N$      & ---   & 4     & ---   & 2     & ---   & 12 \\
Discretization $M$           & ---   & 9     & ---   & 9     & ---   & 13 \\
Baseline exploration $\tau$  & ---   & 0.01  & ---   & 0.01  & ---   & 0.01 \\

Exploration (initial)        & 0.999 & 0.999 & 0.99  & 0.999 & 0.999 & 0.999 \\
Explore decay rate           & 0.99999 & 0.9999 & 0.9999 & 0.9999 & 0.9999 & 0.9999 \\
CTW depth            & 32    & 32    & 42    & 42    & 96    & 96 \\
MCTS horizon                 & 4     & ---   & 2     & ---   & 12    & ---\\
MCTS simulations             & 200   & ---   & 200   & ---   & 40    & --- \\
Learning period              & 5000  & 100000& 5000  & 100000& 5000  & 100000 \\
Terminating age              & 10000 & 100000& 10000 & 100000& 10000 & 100000 \\
\bottomrule
\end{tabular}
\caption{Hyperparameters for MC-AIXI-CTW and AIQI-CTW across environments}
\label{tab:hyperparams}
\end{table}

\paragraph{Results.}
We performed each experiment on 8 seeds.
\cref{fig:experiment} is a plot of the exponential moving average (EMA) of reward (with $\alpha=10^{-3}$) against wall clock time. AIQI-CTW spends much less time in deciding an action than MC-AIXI-CTW, since it doesn't involve planning with MCTS.
In all experiments we find that AIQI-CTW has an advantage over MC-AIXI-CTW given the restricted computational budget.
Note that optimal performance can be obtained with MC-AIXI-CTW, albeit with orders of magnitude more compute, as described in \citet{veness2011monte}.

\begin{figure}[t]
\centering
\includegraphics[width=0.95\textwidth]{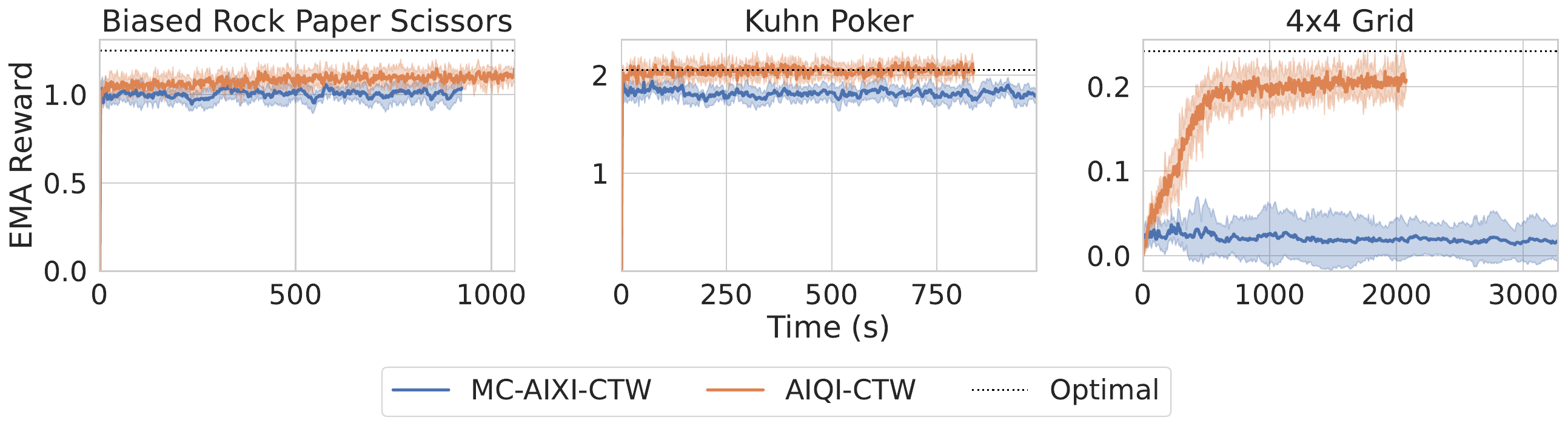}

\vspace{0.5em}

\includegraphics[width=0.95\textwidth]{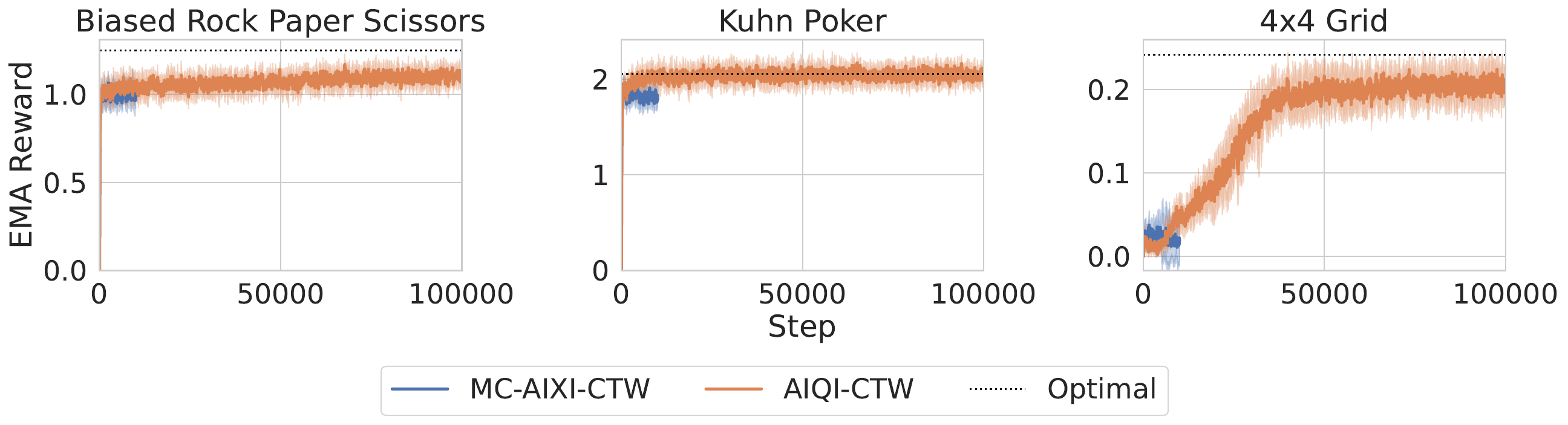}

\caption{
Plots of EMA reward vs wall clock time (in seconds) and environment steps on three environments.
}
\label{fig:experiment}
\end{figure}

\end{document}